\documentclass[11pt,letterpaper]{article}

\usepackage[margin=1in]{geometry}
\usepackage[T1]{fontenc}
\usepackage{lmodern}
\usepackage{microtype}

\usepackage[round,authoryear]{natbib}
\usepackage{amsmath,amssymb,amsthm}
\usepackage{mathtools}
\usepackage{bm}
\usepackage{bbm}
\usepackage{enumitem}

\usepackage{graphicx}
\usepackage{subcaption}
\usepackage{booktabs}
\usepackage{multirow}
\usepackage{tabularx}
\usepackage{longtable}

\usepackage{algorithm}
\usepackage{algpseudocode}
\usepackage{float}

\usepackage{authblk}

\usepackage{xspace}
\usepackage{xcolor}

\definecolor{commentgreen}{rgb}{0.0,0.45,0.0}
\definecolor{darkpurple}{RGB}{105,45,145}
\definecolor{darkorange}{RGB}{220,120,20}

\definecolor{mypurple}{HTML}{563276}
\definecolor{myyellow}{HTML}{DFA72A}
\definecolor{mygray}{HTML}{9B91A8}

\newcommand{\purpleblock}{%
    \textcolor{mypurple}{\rule{1.2ex}{1.2ex}}%
}
\newcommand{\lightpurpleblock}{%
    \textcolor{mygray}{\rule{1.2ex}{1.2ex}}%
}
\newcommand{\yellowblock}{%
    \textcolor{myyellow}{\rule{1.2ex}{1.2ex}}%
}

\definecolor{lucidL}{HTML}{4B2868}
\definecolor{lucidU}{HTML}{70485A}
\definecolor{lucidC}{HTML}{946640}
\definecolor{lucidI}{HTML}{B78428}
\definecolor{lucidD}{HTML}{D39A1F}
\definecolor{lucidEnd}{HTML}{EDC967}

\newcommand{\anchorletter}[2]{%
    \textcolor{#1}{%
        \scalebox{1.3}[1.3]{\textbf{#2}}%
    }%
}

\newcommand{\LUCID}{%
    \anchorletter{lucidL}{L}%
    \anchorletter{lucidU}{U}%
    \anchorletter{lucidC}{C}%
    \anchorletter{lucidI}{I}%
    \anchorletter{lucidD}{D}%
}

\usepackage{url}
\usepackage{hyperref}

\definecolor{mydarkblue}{rgb}{0,0.08,0.45}

\hypersetup{
    colorlinks=true,
    citecolor=mydarkblue,
    linkcolor=mydarkblue,
    urlcolor=mydarkblue
}

\usepackage[capitalize,noabbrev,sort]{cleveref}

\allowdisplaybreaks

\newtheorem{theorem}{Theorem}

\newtheorem{proposition}{Proposition}

\newtheorem{remark}{Remark}



\newcommand{\LCVI}{LUCID\xspace}

\title{
    \LUCID:
    \anchorletter{black}{L}ocal
\anchorletter{black}{U}nderstanding of
\anchorletter{black}{C}ovariate
\anchorletter{black}{I}mportance with Conditional
    \anchorletter{black}{D}ensity Estimation
}

\author[1]{Yixiao Wang}
\author[1]{Srikar Katta}
\author[1,2]{Jon Donnelly}
\author[3]{Emanuele Borgonovo}
\author[1]{Cynthia Rudin}

\affil[1]{Duke University, Durham, USA}
\affil[2]{Johns Hopkins University, Baltimore, USA}
\affil[3]{Bocconi University, Milan, Italy}

\affil[ ]{\texttt{
    yixiao.wang@duke.edu,
    srikar.katta@duke.edu,
    jdonnelly@jhu.edu,
    emanuele.borgonovo@unibocconi.it,
    cynthia@cs.duke.edu
}}

\date{}

\begin{document}

\twocolumn[
\begin{@twocolumnfalse}

\maketitle

\begin{abstract}
    Computing local-level variable importance (VI) is critical for understanding machine learning models. Local permutation-based VI metrics offer a simple yet powerful approach, measuring how much a model's output changes when perturbing its inputs. However, existing perturbation strategies often ignore feature dependence or account for it only through restricted approximations. Perturbing a feature without fully respecting this dependence can generate unrealistic counterfactuals, thereby evaluating the model in regions unsupported by the observed data distribution. We address this gap by introducing a new model-agnostic local VI metric and estimator, \textbf{Local Understanding of Covariate Importance with conditional Density estimation} (\LCVI). \LCVI perturbs features of interest using plausible values drawn from the full conditional distribution given the remaining observed features. We establish theoretical connections between \LCVI and existing VI measures, and provide an estimator with convergence guarantees and uncertainty quantification. Finally, we demonstrate \LCVI{}'s stable performance and practical utility across simulated and real-world examples.

\end{abstract}

\vspace{1em}
\end{@twocolumnfalse}
]

\section{Introduction}
\label{sec:introduction}

\begin{figure}[t]
    \centering
    \includegraphics[width=\columnwidth]{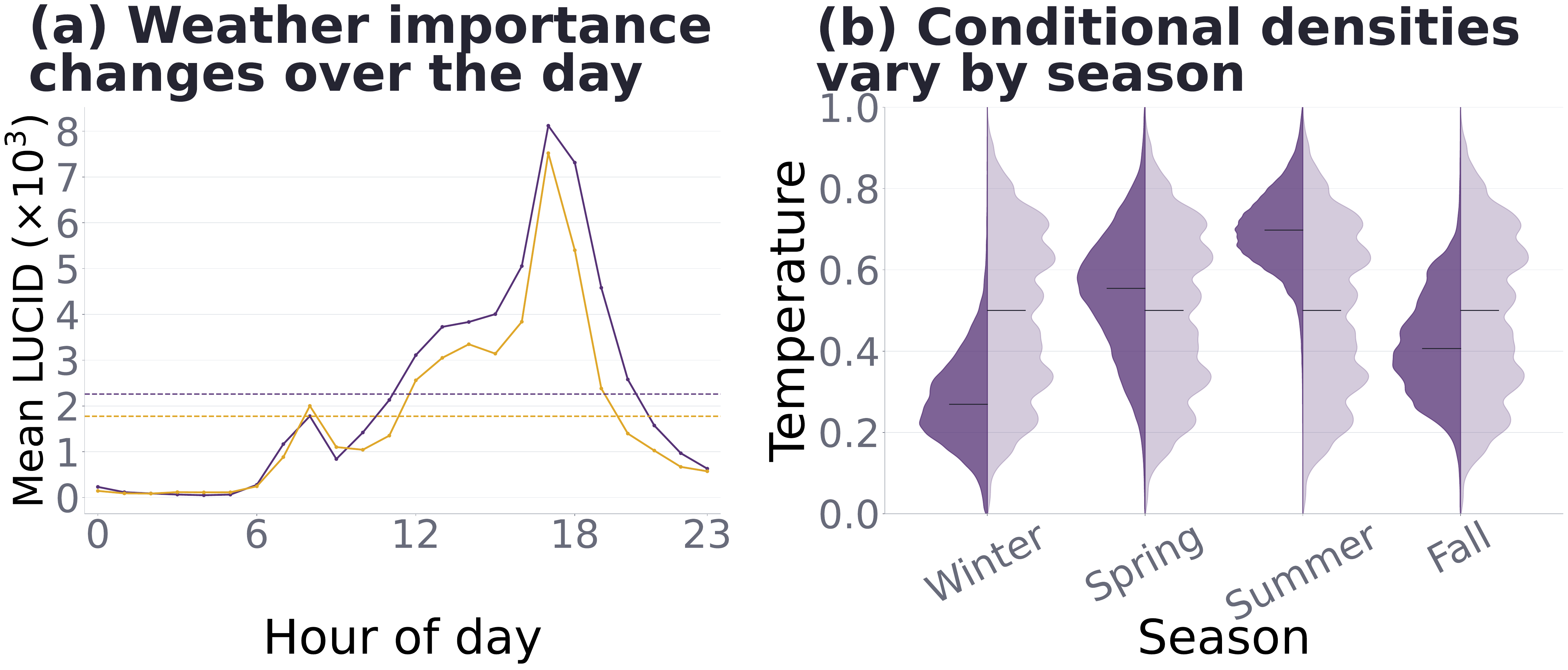}
    \caption{ 
    \textbf{Two motivations for \LCVI on the Bike Sharing dataset \citep{fanaee2014event}.}
   \textbf{(a)} \LCVI values for \purpleblock~\textcolor{mypurple}{temperature} and
\yellowblock~\textcolor{myyellow}{humidity} \LCVI (solid lines) vary substantially across hours, while their corresponding global VI values (dashed lines) obscure this heterogeneity. \textbf{(b)} For \textcolor{mypurple}{temperature}, each split violin compares the estimated \purpleblock~ conditional replacement distribution from DR-BART \citep[left;][]{orlandi2021density} with the \lightpurpleblock~marginal replacement distribution (right). Conditional densities adapt across seasons, whereas the marginal distribution does not.
}
    \label{fig:bike_teaser}
\end{figure}

Variable importance (VI) is a fundamental tool for interpreting machine learning models, enabling the characterization of features' usage and their effects on the model's predictions. VI is particularly useful for probing the otherwise opaque behavior of black-box machine learning systems \citep{guidotti2018survey,casalicchio2018visualizing,fisher2019all}. Most established VI methods summarize each feature with a single population-level score or ranking \citep{breiman2001random,gregorutti2015grouped}. However, a feature may strongly influence the model in one region of the covariate space while being irrelevant in another, so averaging importance across the population can obscure when, where, and for which observations a feature matters. For example, in the Bike Sharing dataset \citep{fanaee2014event}, the importance of temperature and humidity changes substantially over the course of a day, while their global VI scores, shown as dashed lines, collapse this variation into single values and obscure the underlying heterogeneity (Figure \ref{fig:bike_teaser}(a)). More fundamentally, distinct local importance patterns can share the same population average, meaning that global importance alone cannot identify how importance is distributed across observations. This motivates local VI and its accurate estimation, which makes observation-level feature usage accessible and characterizes heterogeneous feature usage across individual predictions.

A central challenge for local VI is determining how to perturb the feature. 
Existing local perturbation-based approaches typically sample counterfactuals from the marginal distribution of the feature of interest \citep{casalicchio2018visualizing,bladen2024conditional}, which can break feature dependence and generate implausible counterfactuals. For example, in the Bike Sharing data, plausible temperature values vary across seasons, whereas marginal permutation draws replacements from the same pool regardless of the remaining covariates (Fig.~\ref{fig:bike_teaser}(b)). Although global conditional importance methods provide dependence-aware alternatives, their perturbation schemes often rely on restricted approximations whose validity requires additional assumptions \citep{fisher2019all,chamma2023statistically,borgonovo2026trustworthy,strobl2008conditional,debeer2020conditional,molnar2024model,blesch2025conditional}. Such schemes can recover population-average importance while producing systematically biased local scores, because errors across observations may cancel after averaging (\cref{subsec:exp_synthetic_main} DGP 3). Accurate local estimation therefore requires capturing the observation-specific conditional structure. This motivates flexible estimation of the full conditional distribution, which existing perturbation schemes may struggle to capture, particularly under heteroskedasticity, multimodality, high dimensionality, or complex interactions.

In this work, we introduce \textbf{Local Understanding of Covariate Importance with conditional Density estimation} (\LCVI), a model-agnostic framework for measuring feature importance at the observation level. For a fixed predictive model, \LCVI replaces a feature of interest with plausible values drawn from its conditional density given the remaining observed features, and measures the resulting change in model behavior. Our main contributions are:
\begin{itemize}[labelindent=0pt, labelsep=0.5em, leftmargin=*, nosep]
    \item We establish an interpretable decomposition of \LCVI that captures local leave-one-covariate-out (LOCO) importance without refitting the predictive model and additionally quantifies conditional prediction variability omitted by local LOCO.
    
   \item We develop a flexible estimation framework for \LCVI that accommodates modern conditional density estimators (CDEs), establish convergence guarantees, and provide a Bayesian implementation with  posterior uncertainty quantification.

     \item We extend \LCVI to explain feature groups, enabling observation-level assessment of their joint importance conditional on the observed covariates outside the group.

     \item Through extensive synthetic and real-data experiments, we demonstrate \LCVI's ability to capture heterogeneous local importance under complex covariate dependence, together with the high accuracy, robustness, and computational scalability of our implementation.
\end{itemize}

\section{Related Work}
\label{sec:related_work}

Permutation-based variable importance measures feature relevance through changes in model behavior under feature perturbations, beginning with the Random Forest permutation importance of \citet{breiman2001random}. Subsequent work has developed a range of marginal and conditional variable importance methods at both global and local levels. We provide a detailed discussion of these methods in Section~\ref{app:related}. In contrast, \LCVI defines observation-level importance with respect to the full conditional feature distribution. Accurate estimation of this distribution yields replacements that preserve the dependence between the feature of interest and the remaining covariates

Other notions of feature importance and local explanation target different estimands. We discuss connections to LOCO \citep{lei2018distribution}, SHAP \citep{lundberg2017unified, aas2021explaining}, LIME \citep{ribeiro2016should}, grouped variable importance \citep{gregorutti2015grouped, au2022grouped,chamma2024variable, little2025iloco}, and other explanation paradigms in Section~\ref{app:related}.

\section{Our Method: \LCVI}
\label{sec:LCVI}
\subsection{Notation and Background}
\label{subsec:notation}

Let $Z$ denote the feature of interest and $\bm X=(X_1,\ldots,X_k)$ denote the remaining covariates. We write $\bm W=(Z, \bm X)$ for the complete feature vector and let $g(\bm W)=g(Z, \bm X)$ denote a given prediction model to be explained, with $\ell$ denoting a prediction loss. We assume the analyst observes a dataset of $n$ independent and identically distributed observations $\{\bm W_i \}_{i = 1}^n$ from some distribution $P$. For an observation $\bm w_i=(z_i,\bm x_i)$. We write $P_Z$ and $P(\cdot\mid\bm x)$ to denote the marginal and conditional (on all other features) distributions of feature $z$, respectively. For distributions with densities $p$ and $q$, we denote the total variation distance by $\operatorname{TV}(p,q):=\frac12\int |p(z)-q(z)|\,dz$, and, for a measurable function $h$, the $L^2$ norm by $\|h\|_{L^2(P)}:=\left(\mathbb E_P[h^2]\right)^{1/2}$.

\textbf{Permutation feature importance} (PFI) is the population-average discrepancy between the original prediction and the prediction obtained by replacing $Z$ with an independent draw from its marginal distribution:
\begin{equation}
\mathrm{PFI} := \mathbb E_{Z,\bm X}\left\{\mathbb E_{Z'\sim \color{red}{P_Z}}\!\left[\ell\!\left(g(Z',\bm X),g(Z,\bm X)\right)\right]\right\}.
\label{eq:pfi}
\end{equation}

\textbf{Conditional permutation feature importance} (CPFI) instead draws the replacement from the conditional distribution of $Z$ given the remaining features:
\begin{equation}
    \mathrm{CPFI} := \mathbb E_{Z,\bm X}\left\{\mathbb E_{Z'\sim \textcolor{red}{P(\cdot\mid\bm X)}}\!\left[\ell\!\left(g(Z',\bm X),g(Z,\bm X)\right)\right]\right\}.
     \label{eq:cpfi}
\end{equation}
\textbf{Leave-one-covariate-out importance} (LOCO) measures the expected prediction loss when the full model is replaced by a reduced model using only the remaining features \citep{lei2018distribution}: $\mathrm{LOCO} := \mathbb E_{Z,\bm X}\!\left[\ell\!\left(g_{-Z}(\bm X),g(Z,\bm X)\right)\right]$, where $g_{-Z}$ denotes a reduced model, typically obtained by refitting without feature $Z$.

\subsection{\LCVI Definition}
\label{subsec:lcvi_target}

We now introduce the instance-level counterpart of conditional permutation feature importance. For an observation $\bm w$, \textbf{Local Understanding of Covariate Importance with conditional Density estimation} (\LCVI) of feature $Z$ is defined as
\begin{equation}
\mathrm{\LCVI}(\bm w) := \mathbb E_{Z'\sim \textcolor{red}{P(\cdot\mid\bm x)}}\!\left[\ell\!\left(g(Z',\bm x),g(z,\bm x)\right)\right].
\label{eq:lcvi}
\end{equation}
\LCVI measures the expected prediction discrepancy when $z$ is replaced by a draw from $P(\cdot\mid \bm x)$ with the remaining covariates fixed at $\bm x$. By the law of total expectation, integrating \LCVI in \cref{eq:lcvi} over the population distribution of $\bm W$ recovers the CPFI in \cref{eq:cpfi}.

\subsection{Understanding \LCVI}
\label{subsec:lcvi_geometry}

Under squared loss, \LCVI admits a simple decomposition into two terms, each asking a separate question: (i) how varied are the predictions from $g$ across different counterfactual values of $Z$; (ii) how extreme was the prediction for the \textit{observed} value of $Z$ compared to the predictions for possible counterfactual values of $Z$? We first establish this decomposition for squared-error loss, illustrating the theoretical connection to local LOCO. We then extend the decomposition to Bregman divergences generated by differentiable, strictly convex functions.

Below, expectations and variances are over $Z'\sim P(\cdot\mid\bm x)$, and we assume the black box's predictions admit a finite variance over different perturbations.

\begin{theorem}[Decomposition of \LCVI]
\label{thm:local_lcvi_decomposition}

\begin{figure}[t]
    \centering
    \includegraphics[width=\columnwidth]{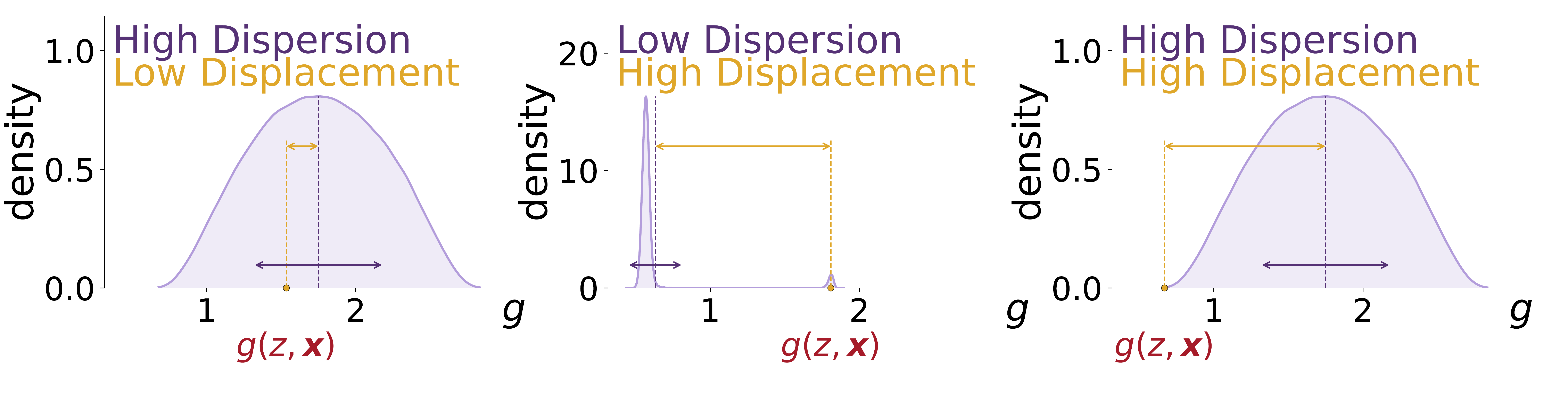}
    \caption{\textbf{Three examples of conditional distributions of perturbed outputs $g(Z',\bm x)$.} The \textcolor{mypurple}{purple dashed line} marks the mean \textcolor{myyellow}{$\mathbb E\!\left[g(Z',\bm x)\right]$}, the \textcolor{myyellow}{yellow arrow} shows the square root of the \textcolor{myyellow}{displacement},  while the \textcolor{mypurple}{purple arrow} illustrates the conditional standard deviation, the square root of the \textcolor{mypurple}{dispersion} term.}
    \label{fig:main_paper_decomposition_2d_3}
\end{figure}

Under the prediction reference and squared loss, $\mathrm{\LCVI}(\bm w)$ admits the decomposition
\begin{align}
\textcolor{mypurple}{\underbrace{\operatorname{Var}\!\left[g(Z',\bm x)\right]}_{\text{dispersion}}}
+
\textcolor{myyellow}{\underbrace{\left(g(z,\bm x)-\mathbb E\!\left[g(Z',\bm x)\right]\right)^2}_{\text{displacement}}},
\label{eq:squared_lcvi_decomposition}
\end{align}
\end{theorem}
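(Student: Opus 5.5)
The plan is to use the standard bias–variance identity, with the observed prediction $g(z,\bm x)$ playing the role of a fixed target. Fix $\bm w=(z,\bm x)$ and write $Y':=g(Z',\bm x)$ with $Z'\sim P(\cdot\mid\bm x)$. Let $\mu(\bm x):=\mathbb E[Y']$ and $c:=g(z,\bm x)$, which is a deterministic constant once $\bm w$ is fixed. Under the prediction reference and squared loss, \cref{eq:lcvi} reads $\mathrm{\LCVI}(\bm w)=\mathbb E[(Y'-c)^2]$. The finite-variance assumption stated just before \cref{thm:local_lcvi_decomposition} guarantees that $\mu(\bm x)$ is finite and that every term below is integrable.

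The key step is to add and subtract the conditional mean:
\begin{equation*}
(Y'-c)^2=(Y'-\mu)^2+2(Y'-\mu)(\mu-c)+(\mu-c)^2 .
\end{equation*}
Taking expectations over $Z'\sim P(\cdot\mid\bm x)$ gives three terms:
\begin{itemize}[nosep]
\item The first term yields $\operatorname{Var}[g(Z',\bm x)]$, which is the dispersion.
\item The cross term vanishes. Since $\mu-c$ is constant in $Z'$, it factors out, and $\mathbb E[Y'-\mu]=0$.
\item The last term is the constant $(c-\mu)^2=(g(z,\bm x)-\mathbb E[g(Z',\bm x)])^2$, which is the displacement.
\end{itemize}
Summing the three terms gives exactly \cref{eq:squared_lcvi_decomposition}.

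There is no real obstacle. The only point needing care is integrability. The cross-term expectation is finite because $\mathbb E|Y'-\mu|\le \operatorname{Var}[Y']^{1/2}$, which follows from Cauchy–Schwarz and is finite by assumption. This justifies splitting the expectation termwise by linearity.

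It is also worth checking explicitly that $c$ does not depend on $Z'$. This is why the observed $z$ is held fixed while only the replacement $Z'$ is random. It distinguishes the local statement from the CPFI setting in \cref{eq:cpfi}, where $Z$ is also averaged over. Finally, I would note the interpretation: averaging the displacement over $Z\sim P(\cdot\mid\bm x)$ yields the local LOCO-type term with reduced model $\mathbb E[g(Z',\bm x)]$. This interpretation is not needed for the proof itself.
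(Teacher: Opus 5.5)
Your proof is correct and is essentially the paper's argument: add and subtract $\mathbb E[g(Z',\bm x)]$, then note that the cross term has zero mean because $g(z,\bm x)$ is fixed once $\bm w$ is fixed. One aside in your closing remark is slightly off: the displacement itself, without averaging over $Z$, is already the local LOCO term of \cref{prop:lcvi_loco}, whereas averaging it over $Z\sim P(\cdot\mid\bm x)$ gives the conditional variance, i.e.\ the dispersion (\cref{app:cor:squared-averages}).
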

We refer to the two components in~\eqref{eq:squared_lcvi_decomposition} as the \textcolor{mypurple}{\textbf{dispersion}} and \textcolor{myyellow}{\textbf{displacement}} terms, respectively. The proof for Theorem \ref{thm:local_lcvi_decomposition} is in \cref{{app:squared-loco}}. Globally, dispersion and displacement contribute equally on average over the data distribution (Corollary \ref{app:cor:squared-averages}). At the local level, however, they capture distinct mechanisms through which a feature can be important for a particular observation. The \textcolor{mypurple}{dispersion} term measures how much the model output varies across conditionally plausible replacements of $Z$. 

In contrast, the \textcolor{myyellow}{displacement} term measures how far the original output lies from the average prediction across perturbations. So, if we observe a value of $Z$ that leads to an extreme prediction far from the central tendency, displacement will be high. Both quantities measure how important the feature is for making the prediction, but in different ways. Dispersion asks if predictions vary wildly across possible values of $Z \mid \bm{x}$ while displacement asks if the prediction for the \emph{observed} data point is itself an extreme/outlier compared to the average prediction. As illustrated in Figure~\ref{fig:main_paper_decomposition_2d_3}, \LCVI can therefore be large because the perturbed outputs vary substantially (high dispersion), because the observed output lies far from their mean (high displacement), or both. Conversely, since both terms are nonnegative, by Corollary \ref{app:cor:zero},
\begin{equation}
\mathrm{\LCVI}(z,\bm x)=0
\quad\Longleftrightarrow\quad
g(Z',\bm x)=g(z,\bm x)\ \text{a.s.}
\label{eq:squared_lcvi_zero_condition}
\end{equation}

This can occur (i) because the model does not use feature $Z$ in predicting $\bm x$'s outcome, (ii) because $Z$ is determined by the remaining features, or (iii) because its conditionally plausible values all produce the same model output.

\paragraph{Relationship with LOCO.} \label{subsec:lcvi_loco}

The decomposition also clarifies the relationship between \LCVI and LOCO, both of which assess a feature's contribution beyond the remaining features. Under squared loss with Bayes-optimal full and reduced predictors, \textbf{CPFI equals twice LOCO} (Corollary \ref{app:cor:squared-averages}) \citep{reyero2025conditional}. However, this proportionality does not generally hold at the local level.

For a comparison  \LCVI, if the reduced model is the optimal reduced predictor of $g(Z, \bm X)$ under squared loss, then $g_{-Z}(\bm x) = \mathbb E[g(Z,\bm x)]$. The corresponding local LOCO score is $\mathrm{LOCO}(\bm w) := \left(g(z, \bm x)-g_{-Z}(\bm x)\right)^2$.
\begin{proposition}[Relationship between \LCVI and local LOCO]
\label{prop:lcvi_loco}
Under squared loss and the prediction reference, 
\begin{equation}
\mathrm{\LCVI}(\bm w)
= \textcolor{mypurple}{\underbrace{\operatorname{Var}\!\left[g(Z',\bm x)\right]}_{\text{dispersion}}} +
\textcolor{myyellow}{\underbrace{\mathrm{LOCO}(\bm w)}_{\text{displacement}}}
.\nonumber
\label{eq:lcvi_loco}
\end{equation}
\end{proposition}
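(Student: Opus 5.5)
The proposition follows by combining Theorem~\ref{thm:local_lcvi_decomposition} with the definition of the optimal reduced predictor. I will not redo the variance–bias expansion from scratch. The plan is to start from the decomposition already established: under squared loss and the prediction reference,
\[
\mathrm{\LCVI}(\bm w)=\operatorname{Var}\!\left[g(Z',\bm x)\right]+\left(g(z,\bm x)-\mathbb E\!\left[g(Z',\bm x)\right]\right)^2 ,
\]
with expectations over $Z'\sim P(\cdot\mid\bm x)$. The dispersion term already matches the one in the proposition. All that remains is to identify the displacement term with $\mathrm{LOCO}(\bm w)$.

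For the displacement term, I will use the fact that the reduced model is the optimal reduced predictor of $g(Z,\bm X)$ under squared loss. Minimizing $\mathbb E[(g(Z,\bm X)-h(\bm X))^2]$ over measurable $h$ gives the conditional expectation. So $g_{-Z}(\bm x)=\mathbb E[g(Z,\bm X)\mid \bm X=\bm x]=\int g(z',\bm x)\,P(dz'\mid\bm x)$, which is exactly $\mathbb E[g(Z',\bm x)]$ with $Z'\sim P(\cdot\mid\bm x)$. This is the identification the text preceding the proposition records. Substituting it into the displacement term gives $(g(z,\bm x)-g_{-Z}(\bm x))^2=\mathrm{LOCO}(\bm w)$, which completes the argument.

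For completeness, I would also note where Theorem~\ref{thm:local_lcvi_decomposition} itself comes from, should one want a self-contained argument. Add and subtract $\mu(\bm x):=\mathbb E[g(Z',\bm x)]$ inside $(g(Z',\bm x)-g(z,\bm x))^2$ and expand. The cross term $2(g(z,\bm x)-\mu(\bm x))\,\mathbb E[g(Z',\bm x)-\mu(\bm x)]$ vanishes because $z$ is fixed. The finite-variance assumption ensures all terms are finite.

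The only delicate point is the identification $g_{-Z}(\bm x)=\mathbb E[g(Z',\bm x)]$. This needs two things. First, the optimal predictor is unique only $P_{\bm X}$-almost everywhere, so the equality holds pointwise for almost every $\bm x$, or for the chosen regular version of the conditional distribution. Second, the conditional distribution used to define $Z'$ in $\mathrm{\LCVI}$ must be the same regular conditional distribution $P(\cdot\mid\bm x)$ used in the conditional expectation. I would state the result with this version convention, after which the proof is a direct substitution.
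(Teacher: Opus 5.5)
Your proposal is correct and follows the paper's proof: it likewise takes the decomposition of Theorem~\ref{thm:local_lcvi_decomposition} and substitutes $g_{-Z}(\bm x)=\mathbb E[g(Z',\bm x)]$ into the displacement term to obtain $\mathrm{LOCO}(\bm w)$. The one small difference is that the appendix \emph{defines} $g_{-Z}$ directly as this conditional mean and justifies its optimality through the pointwise identity $\mathbb E[(g(Z',\bm x)-a)^2]=\operatorname{Var}(g(Z',\bm x))+(g_{-Z}(\bm x)-a)^2$, so your almost-everywhere version caveat is not needed there.
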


The result follows directly from \cref{thm:local_lcvi_decomposition} and the proof for Proposition \ref{prop:lcvi_loco} is in \cref{{app:squared-loco}}. Under this comparison, local LOCO measures the discrepancy between the original output and the optimal reduced prediction, while \LCVI additionally captures variation among conditionally plausible outputs. This additional dispersion term can change local feature rankings even when the corresponding global importance measures differ only by a factor of $2$ (see \cref{app:subsec:extrem_tail}).

\paragraph{Beyond Squared Loss.}
The \textcolor{mypurple}{dispersion}--\textcolor{myyellow}{displacement} interpretation of \LCVI extends to
Bregman divergences \citep{BREGMAN1967200,adlam2022understanding}. For a differentiable, strictly convex function
$\phi$, define
$D_\phi(a,b)=\phi(a)-\phi(b)-\langle\nabla\phi(b),a-b\rangle$. 
\begin{proposition}[Bregman Decomposition of \LCVI]
\label{prop:lcvi_bregman_decomposition}
Under the prediction reference and Bregman loss $\ell(a,b)=D_\phi(a,b)$, assuming the relevant expectations are finite, $\mathrm{\LCVI}(\bm w)$ admits the decomposition
    \begin{align}
    \mathrm{\LCVI}(\bm w) &= \textcolor{mypurple}{\underbrace{ \mathbb E\left[D_\phi\left(g(Z',\bm x),\mathbb E\left[g(Z',\bm x)\right]\right)\right]}_{\text{dispersion}}}\nonumber\\
    &+\textcolor{myyellow}{\underbrace{D_\phi\!\left(\mathbb E\left[g(Z',\bm x)\right],g(z,\bm x)\right)}_{\text{displacement}}}.
    \label{eq:lcvi_bregman_decomposition}
    \end{align}
\end{proposition}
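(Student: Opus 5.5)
The plan is to use the standard three-point identity for Bregman divergences and then average over $Z'\sim P(\cdot\mid\bm x)$. The cross term will vanish because it is linear in $g(Z',\bm x)-\mathbb E[g(Z',\bm x)]$, which has mean zero. This mirrors the proof of \cref{thm:local_lcvi_decomposition}: taking $\phi(a)=a^2$ gives $D_\phi(a,b)=(a-b)^2$, and the statement reduces exactly to \eqref{eq:squared_lcvi_decomposition}, which serves as a sanity check.

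Concretely, fix $\bm w=(z,\bm x)$ and introduce the shorthand $Y:=g(Z',\bm x)$, $c:=g(z,\bm x)$ and $m:=\mathbb E[Y]$. By definition \eqref{eq:lcvi} with $\ell=D_\phi$, we have $\mathrm{\LCVI}(\bm w)=\mathbb E[D_\phi(Y,c)]$.

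The first step is the pointwise algebraic identity
\begin{equation*}
D_\phi(Y,c)=D_\phi(Y,m)+D_\phi(m,c)+\langle\nabla\phi(m)-\nabla\phi(c),\,Y-m\rangle .
\end{equation*}
To verify it, I expand all three divergences from the definition $D_\phi(a,b)=\phi(a)-\phi(b)-\langle\nabla\phi(b),a-b\rangle$. In $D_\phi(Y,m)+D_\phi(m,c)$ the $\phi(m)$ terms cancel, leaving $\phi(Y)-\phi(c)-\langle\nabla\phi(m),Y-m\rangle-\langle\nabla\phi(c),m-c\rangle$. Subtracting this from $D_\phi(Y,c)=\phi(Y)-\phi(c)-\langle\nabla\phi(c),Y-c\rangle$ and writing $Y-c=(Y-m)+(m-c)$ leaves exactly the inner-product term above.

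The second step is to take expectations over $Z'$. The vector $\nabla\phi(m)-\nabla\phi(c)$ is deterministic given $\bm w$, so by linearity the cross term has expectation $\langle\nabla\phi(m)-\nabla\phi(c),\mathbb E[Y]-m\rangle=0$. The term $D_\phi(m,c)$ is also deterministic. What remains is $\mathbb E[D_\phi(Y,m)]+D_\phi(m,c)$, which is precisely the dispersion term plus the displacement term in \eqref{eq:lcvi_bregman_decomposition}.

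The only real obstacle is the technical justification of these manipulations, not the algebra.
\begin{itemize}
\item The mean $m$ must lie in the domain of $\phi$, where $\phi$ is differentiable, so that $\nabla\phi(m)$ and $D_\phi(\cdot,m)$ make sense. I would handle this by assuming, as is implicit in the statement, that $\phi$ is defined on a convex set containing the range of $g$. Then $m$ lies in the closed convex hull of that range, and I would take the domain to be open or otherwise assume $m$ is an admissible point.
\item Splitting the expectation of a sum into a sum of expectations requires each piece to be integrable. This is where the hypothesis ``assuming the relevant expectations are finite'' enters: $\mathbb E[D_\phi(Y,c)]$ and $\mathbb E[D_\phi(Y,m)]$ are finite by assumption. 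I would additionally note that finiteness of $\mathbb E\|Y\|$ is needed for $m$ to exist, and that it makes the cross term integrable, since it is a fixed vector paired with $Y-m$.
\end{itemize}
With these points recorded, the decomposition follows.
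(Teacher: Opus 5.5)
Your proof is correct and follows essentially the same route as the paper, which also expands $D_\phi(g(Z',\bm x),g(z,\bm x))$ via the three-point identity. That expansion gives the dispersion term, the displacement term, and the cross term $\langle\nabla\phi(\mathbb E[g(Z',\bm x)])-\nabla\phi(g(z,\bm x)),\,g(Z',\bm x)-\mathbb E[g(Z',\bm x)]\rangle$, which has mean zero. Your conditions on the domain (the mean lying where $\phi$ is differentiable) and on integrability match the standing assumptions the paper states in its appendix.
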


For asymmetric Bregman divergences, the divergence direction matters. When averaging, the squared-loss factor-of-two identity between global CPFI and LOCO need not hold. We provide detailed proofs for Proposition \ref{prop:lcvi_bregman_decomposition} and more illustrative examples in \cref{app:bregman}. 

\subsection{\LCVI for Feature Groups}
\label{subsec:group_lcvi}

Individual features' \LCVI may fail to capture joint effects when features carry overlapping information. 
For example, suppose a prediction model depends on both temperature and season, while these features are strongly predictable from one another. Conditioning on temperature can substantially restrict plausible replacements of season, and vice versa, yielding small individual \LCVI scores. Such scores reflect limited unique information, however, and do not imply that the two features are jointly unimportant. We therefore extend \LCVI to feature groups \citep{gregorutti2015grouped,au2022grouped,chamma2024variable}. For a group of features $\bm Z$, let $\bm X$ denote the remaining conditioning covariates. We define the group \LCVI as
\begin{equation}
\mathrm{\LCVI}(\bm z,\bm x)
:=
\mathbb E_{\bm Z'\sim P(\cdot\mid\bm x)}
\left[
\ell\!\left(g(\bm Z',\bm x),g(\bm z,\bm x)\right)
\right].
\label{eq:group_lcvi}
\end{equation}
The definition extends directly to a group of features by replacing the scalar $Z$ with a vector $\bm Z$. Let $G$ denote a feature group with $d=|G|$. In general, group \LCVI need not equal the sum of the corresponding individual \LCVI scores, because individual replacements condition on the other group members, whereas group replacement allows all members to vary jointly. Examining individual and group \LCVI together provides a fuller picture, as illustrated in Section \ref{subsec: real_dataset_main}.

\section{Estimating \LCVI}
\label{sec:lcvi_estimation}

Estimating \LCVI requires sampling from an unknown conditional feature distribution. We first introduce a plug-in Monte Carlo estimator applicable to any conditional density estimator (CDE), and then introduce an implementation with DR-BART.

\subsection{Conditional-Sampling Estimation}
\label{subsec:lcvi_algorithm}

Let $\mathcal D_n$ denote the sample used to estimate the conditional distribution of $Z$ given $\bm X$, and let $\bm w_i=(z_i,\bm x_i)$, $i=1,\ldots,N$, denote the observations to be explained. We continue to treat $g$ as fixed and use prediction-reference \LCVI.

Given an estimate $\widehat P_{Z\mid\bm X}(\cdot\mid\bm x)$, draw $Z_m'\overset{\mathrm{i.i.d.}}{\sim}\widehat P_{Z\mid\bm X}(\cdot\mid\bm x)$, $m=1,\ldots,M$, and define
\begin{equation}
\widehat{\mathrm{\LCVI}}(\bm w)
:=
\frac{1}{M}\sum_{m=1}^{M}
\ell\!\left(g(Z_m',\bm x),g(z,\bm x)\right).
\label{eq:lcvi_estimator}
\end{equation}
Estimation therefore requires only conditionally plausible draws and evaluations of the fixed model, making the procedure agnostic to the choice of CDE.

\subsection{Stability and Convergence}
\label{subsec:lcvi_convergence}

To study sensitivity to conditional-distribution estimation, write $\mathrm{\LCVI}[p](\bm w)$  for prediction-reference \LCVI induced by conditional density $p(\cdot\mid\bm x)$, and let $p^*$ denote the truth. Let $d$ be a distance between conditional distributions satisfying, for some
constant $A$,
\begin{equation}
\left\|\operatorname{TV}\!\left\{p(\cdot\mid \bm X),q(\cdot\mid \bm X)\right\}\right\|_{L^2(P_{\bm X})}
\leq A\,d(p,q).
\label{eq:conditional_distance_control}
\end{equation}

\begin{theorem}[Stability of \LCVI]
\label{thm:lcvi_contraction_transfer}
Suppose that the loss \(\ell(g(z',\bm x),g(z,\bm x))\) is uniformly bounded by some \(C_\ell<\infty\), for \(P_{Z,\bm X}\)-almost every \((z,\bm x)\) and all replacement values \(z'\) in the supports of the conditional distributions being compared. If the estimated conditional density satisfies \(d(\widehat p,p^*)=O_P(\epsilon_n)\), then the induced \LCVI function converges at the same rate in \(L^2(P_{Z,\bm X})\):
\[
\left\|\mathrm{\LCVI}[\widehat p]-\mathrm{\LCVI}[p^*]\right\|_{L^2(P_{Z,\bm X})}=O_P(\epsilon_n).
\]
\end{theorem}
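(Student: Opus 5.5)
The plan is a pointwise bound on the difference of the two \LCVI{} functions by the total variation distance between the conditionals, followed by taking $L^2$ norms and applying \eqref{eq:conditional_distance_control}.

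\emph{Pointwise bound.} Fix $(z,\bm x)$ in the full-measure set where the loss bound holds, and write $h_{z,\bm x}(z') := \ell\bigl(g(z',\bm x),g(z,\bm x)\bigr)$. Then
\[
\mathrm{\LCVI}[p](z,\bm x) = \int h_{z,\bm x}(z')\,p(z'\mid\bm x)\,dz',
\]
so
\[
\bigl|\mathrm{\LCVI}[\widehat p](z,\bm x)-\mathrm{\LCVI}[p^*](z,\bm x)\bigr|
= \Bigl|\int h_{z,\bm x}(z')\bigl\{\widehat p(z'\mid\bm x)-p^*(z'\mid\bm x)\bigr\}\,dz'\Bigr|.
\]
Since $0\le h_{z,\bm x}\le C_\ell$ on the union of the two supports, I would subtract the constant $C_\ell/2$, which integrates to zero against the difference of two probability densities. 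This gives $|h-C_\ell/2|\le C_\ell/2$, hence
\[
\bigl|\mathrm{\LCVI}[\widehat p]-\mathrm{\LCVI}[p^*]\bigr| \le \tfrac{C_\ell}{2}\int|\widehat p-p^*| = C_\ell\operatorname{TV}\{\widehat p(\cdot\mid\bm x),p^*(\cdot\mid\bm x)\}.
\]
Even without centering, the cruder bound $2C_\ell\operatorname{TV}$ suffices.

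\emph{Integrating.} The right-hand side depends only on $\bm x$. Squaring and integrating over $P_{Z,\bm X}$ therefore reduces to integrating over the marginal $P_{\bm X}$:
\[
\bigl\|\mathrm{\LCVI}[\widehat p]-\mathrm{\LCVI}[p^*]\bigr\|_{L^2(P_{Z,\bm X})}
\le C_\ell\,\bigl\|\operatorname{TV}\{\widehat p(\cdot\mid\bm X),p^*(\cdot\mid\bm X)\}\bigr\|_{L^2(P_{\bm X})}
\le C_\ell A\, d(\widehat p,p^*).
\]

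\emph{Rate.} This inequality is deterministic, holding for each realization of $\widehat p$ (with $\widehat p$ fixed by conditioning on $\mathcal D_n$, and the norm taken over an independent draw of $(Z,\bm X)$). Since $d(\widehat p,p^*)=O_P(\epsilon_n)$, multiplying by the constant $C_\ell A$ preserves the $O_P(\epsilon_n)$ rate.

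\emph{Main obstacle.} The only delicate point is measure-theoretic. The loss bound must hold for all $z'$ in both supports, and the integrand must be jointly measurable so that the pointwise bound can be integrated. Both are granted by the theorem's hypotheses, which is why it requires uniformity over the supports of both conditionals rather than only the true one.
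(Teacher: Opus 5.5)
Your proposal is correct and follows the paper's proof: bound the pointwise difference of the two LUCID functions by a multiple of $\operatorname{TV}\{\widehat p(\cdot\mid\bm x),p^*(\cdot\mid\bm x)\}$, note that this bound depends only on $\bm x$, take the $L^2(P_{Z,\bm X})$ norm, and apply the distance-control condition, so the deterministic inequality carries the $O_P(\epsilon_n)$ rate across. The paper uses the uncentered constant $2C_\ell A$; your sharper $C_\ell A$ from centering at $C_\ell/2$ quietly assumes $\ell\ge 0$. That holds for squared and Bregman losses, but the stated hypothesis is only $|\ell|\le C_\ell$, so in general you need the $2C_\ell$ fallback you already noted.
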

The theorem shows that conditional distribution estimation error passes to \LCVI without a loss in convergence rate under the stated bound. For squared loss, bounded model outputs are sufficient. Proof is given in \cref{app:stability}.

\subsection{Implementing \LCVI with DR-BART}
\label{subsec:lcvi_drbart}

Our main implementation estimates each conditional distribution $P(\cdot\mid\bm X)$ using Density Regression Bayesian Additive Regression Trees (DR-BART) \citep{orlandi2021density}. Posterior draws of this conditional distribution induce posterior draws of \LCVI, allowing point estimation and uncertainty quantification.

\paragraph{Posterior point estimates and credible intervals.}
For each retained posterior draw $b=1,\ldots,B$, generate $M$ replacements and compute
\begin{equation}
\widehat{\mathrm{\LCVI}}^{(b)}(\bm w)
:=
\frac{1}{M}\sum_{m=1}^{M}
\ell\!\left(g(Z_{m}^{\prime(b)},\bm x),g(z,\bm x)\right).
\label{eq:lcvi_drbart_posterior_draw}
\end{equation}
We use the average of $\{\widehat{\mathrm{\LCVI}}(\bm w)^{(b)}\}_{b=1}^{B}$ as the posterior point estimate and their empirical quantiles to construct posterior credible intervals.

\paragraph{Convergence under DR-BART.}
Under the bounded-loss assumption in \cref{thm:lcvi_contraction_transfer} and the regularity and prior conditions of \citet[Theorem~2]{orlandi2021density}, the DR-BART density posterior contracts in integrated Hellinger distance, which satisfies the required distance-control condition in \cref{thm:lcvi_contraction_transfer}. The induced \LCVI posterior therefore contracts at the same rate in $L^2(P_{Z,\bm X})$. Under these conditions, Sections~\ref{app:posterior-theory}--\ref{app:posterior-averaging} further establish $O_P(\epsilon_n^2)$ posterior second-moment control for \LCVI (Corollary~\ref{app:cor:drbart}) and an $O_P(\epsilon_n)$ convergence rate for the posterior-mean conditional density and its induced \LCVI function (Proposition~\ref{app:prop:posterior-averaging}).

\cref{app:implementation_details} also provides implementation details, including direct conditional sampling from the DR-BART latent-variable representation, binary and categorical features, and joint conditional sampling for feature groups.

\paragraph{Computational scalability with DR-BART.}
Although fitting DR-BART is computationally intensive, fitted conditional samplers can be reused across observations.  \Cref{sec:drbart_sampling_batching} provides theoretical time and space complexity analyses, while \cref{app:openxai_scaling} reports empirical runtime and memory scaling on
higher-dimensional OpenXAI \citep{NEURIPS2022_65398a0e} benchmark datasets, including datasets with more than $10^5$ observations and more than $50$ features.

\section{Experiments}
\label{sec:experiments}

We evaluate \LCVI in controlled synthetic settings with known models and in a real-data bike-demand case study with learned prediction models. For each dataset, we use five-fold cross-fitting: we train a conditional density estimator on four folds and estimate \LCVI on the held-out fold \citep{chernozhukov2018double}. For simplicity, throughout the experiments we use $I_j$ to denote the \LCVI of feature $j$. We list baseline methods in Table~\ref{tab:baselines} and provide complete experimental details and additional results in \cref{app:bike,app:dgp}.
\begin{table}[t]
\centering
\caption{Methods considered in our experiments.}
\label{tab:baselines}
\scriptsize
\setlength{\tabcolsep}{3pt}
\renewcommand{\arraystretch}{0.92}
\begin{tabular}{@{}ll@{}}
\toprule
\textbf{Category} & \textbf{Method} \\
\midrule
Attribution
& SHAP \citep{lundberg2017unified} \\
& LIME \citep{ribeiro2016should} \\
& Cond. SHAP \citep{aas2021explaining,jullum2025shapr} \\
\midrule
Removal
& local version of LOCO \citep{lei2018distribution} \\
\midrule
Permutation
& ICI \citep{casalicchio2018visualizing} \\
& CMR \citep{fisher2019all} \\
& GCMR \citep{borgonovo2026trustworthy} \\
& CLIQUE \citep{bladen2024conditional} \\
& local version of cs-PFI \citep{molnar2024model} \\
& local version of cARFi \citep{blesch2025conditional} \\
\bottomrule
\end{tabular}
\vspace{2pt}
\parbox{\linewidth}{\scriptsize\emph{Note:} We refer to ICI as ``Permutation'' in this paper. ARF denotes the conditional replacement sampler used for our local adaptation of cARFi.}
\end{table}

\subsection{Synthetic Experiments}
\label{subsec:exp_synthetic_main}
Because the synthetic experiments admit analytical \LCVI, we first compare estimated and analytical importance using mean squared error (MSE) for feature of interest $X_1$ in Table \ref{tab:dgp-x1-lcvi-mse}. Across these controlled settings, our DR-BART implementation is the most consistently accurate conditional estimator overall, attaining the best average MSE rank while remaining robust across discontinuity (DGP 1), correlation (DGP 2), heteroskedasticity (DGP 3), interaction structure (DGP 4), and extreme tail (DGP 5).  Competing methods can perform well when their structural assumptions align with the data-generating process, but their errors can increase sharply when those assumptions are violated. We further repeat the synthetic experiments over 50 random seeds in \cref{app:dgp_stress_test}, confirming that these qualitative conclusions are not driven by a particular realization.

\begin{table}[t]
\centering
\small
\setlength{\tabcolsep}{4pt}
\caption{MSE between estimated and analytical \LCVI, averaged over all evaluated instances of $X_1$. The lowest MSE (from unrounded MSE values) in each DGP is highlighted in \textcolor{darkpurple}{\underline{\textbf{purple bold}}}. Mean rank averages within-DGP rankings; lower is better.}
\label{tab:dgp-x1-lcvi-mse}
\resizebox{0.85\columnwidth}{!}{%
\begin{tabular}{lrrrrrrr}
\toprule
 & DR-BART & ARF & cs-PFI & CMR & GCMR & CLIQUE & Permutation \\
\midrule
DGP 1 & 0.022 & 0.144 & 0.008 & 0.590 & 0.543 & \textcolor{darkpurple}{\underline{\textbf{0.001}}} & 0.008 \\
DGP 2 & \textcolor{darkpurple}{\underline{\textbf{0.004}}} & 0.005 & 0.015 & 0.011 & 0.011 & 4.460 & 4.431 \\
DGP 3 & \textcolor{darkpurple}{\underline{\textbf{0.449}}} & 0.454 & 1.173 & 2.685 & 2.485 & 2.060 & 2.065 \\
DGP 4 & \textcolor{darkpurple}{\underline{\textbf{0.401}}} & 1.298 & 83.329 & 1.809 & 4.315 & 83.673 & 83.135 \\
DGP 5 & 0.066 & 0.185 & \textcolor{darkpurple}{\underline{\textbf{0.036}}} & 0.849 & 0.851 & 0.251 & 0.269 \\
\midrule
Mean rank & \textcolor{darkpurple}{\underline{\textbf{1.800}}} & 2.800 & 3.400 & 5.200 & 5.400 & 4.600 & 4.800 \\
\bottomrule
\end{tabular}
}
\end{table}

We organize subsequent contents around four questions motivating our framework: ``why local importance?'' (DGP 1); ``why conditional perturbations?'' (DGP 2); ``why density estimation?'' (DGP 3); and ``why estimate the conditional density directly?'' (irrelevant-feature robustness test). We also consider DGP 4 to show why subgroup-based methods can struggle to recover interaction structure, and DGP 5 to further explore the relationship between \LCVI and local LOCO; we defer these two experiments, along with analytical \LCVI calculations and additional experimental details and results, to \cref{app:dgp}.

\paragraph{DGP 1: Discontinuity.}
\textbf{\LCVI handles feature heterogeneity, unlike global VI measures.}
This case study uses three independent standard Gaussian features, $X_1,X_2,X_3\stackrel{\mathrm{iid}}{\sim}\mathcal N(0,1)$, and a simple tree predictor $g(\bm x)=x_1\mathbf 1_{x_3>0}+x_2\mathbf 1_{x_3\leq0}$, so that $X_1$ is active only when $X_3>0$, whereas $X_2$ is active only when $X_3\leq0$ \citep[a setup similar to the one considered in][]{bladen2024conditional}. As shown in the DR-BART panel in Figure \ref{fig:main_experiments_dgp1_v2}, because $X_1$ and $X_2$ play symmetric roles across the discontinuity, their global importance (dashed line) is identical, masking the fact that only one of them matters at any given observation. In contrast, \LCVI captures this heterogeneity exactly, with the \textcolor{myyellow}{yellow} and \textcolor{mypurple}{purple} points representing observations on the two sides of the discontinuity.

\textbf{\LCVI correctly identifies locally irrelevant features.} 
When a feature is locally inactive, perturbing it leaves the prediction unchanged, so \LCVI is zero by \cref{eq:squared_lcvi_zero_condition}. In contrast, LIME, SHAP, conditional SHAP, and local LOCO assign nonzero scores to \(X_1\) at every inactive instance (\(x_3 \leq 0\)); a representative point is highlighted by the red box in Figure \ref{fig:main_experiments_dgp1_v2}. This results in importance leakage for different reasons: LIME fits a locally linear model that does not recognize boundaries, while SHAP and conditional SHAP average over subsets that omit $X_3$, mixing contributions across the discontinuity. Moreover, finite-sample errors in model refitting can lead to local LOCO producing nonzero scores. \cref{app:subsec:discontinuity} contains detailed comparisons and experiment results.

\begin{figure}[t]
    \centering
    \includegraphics[width=\columnwidth]{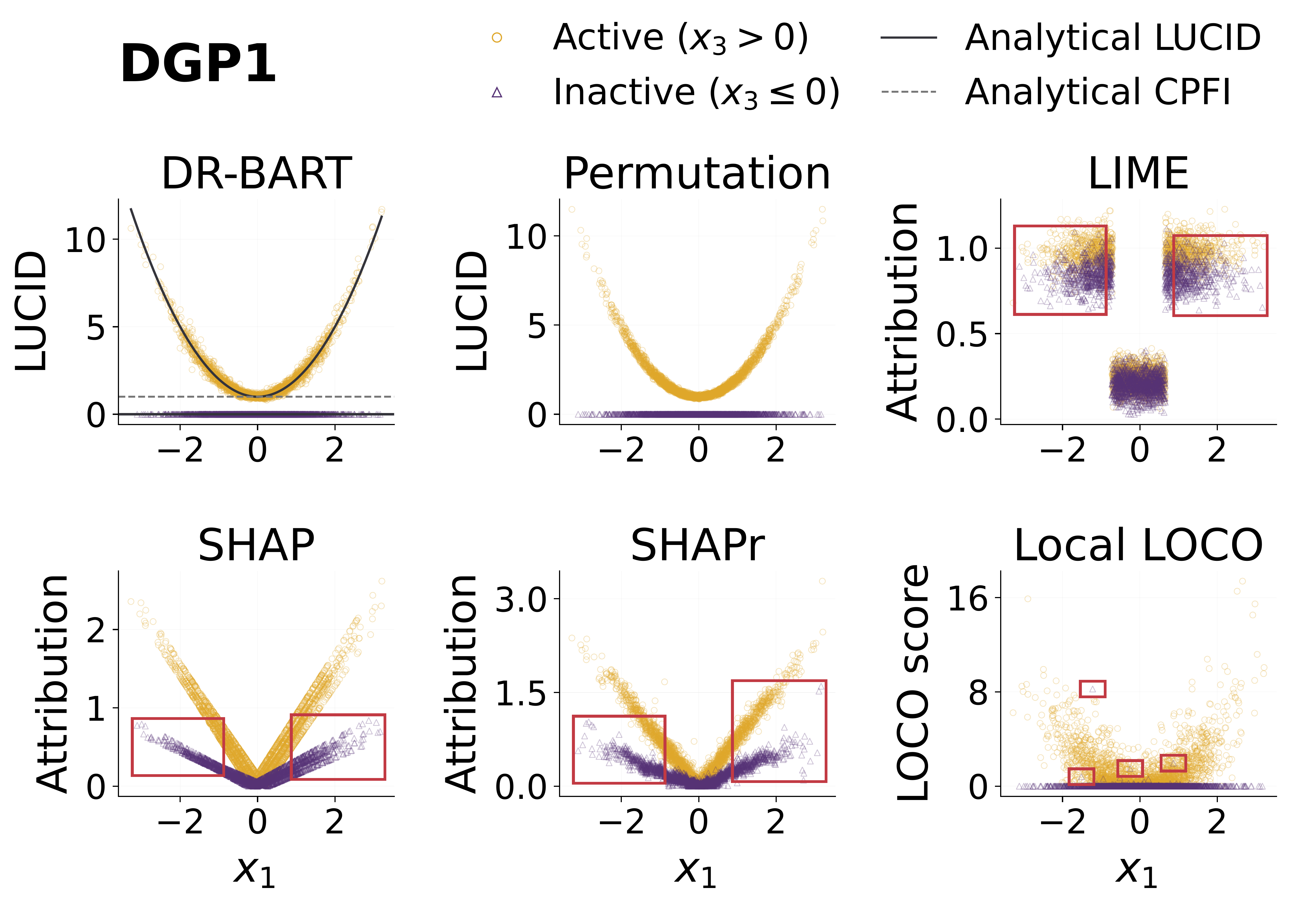}
    \caption{\textbf{Discontinuity.} Local importance for \(X_1\) across methods. Red boxes mark representative nonzero importance in the inactive region. Solid and dashed curves denote analytical \LCVI and CPFI, respectively.}
    \label{fig:main_experiments_dgp1_v2}
\end{figure}

\paragraph{DGP 2: Correlated.}
\textbf{Conditioning in \LCVI accounts for correlated features.} 
When features are dependent, marginal permutation breaks the observed dependence structure and may evaluate the fitted model on unrealistic feature combinations \citep{strobl2008conditional, hooker2021unrestricted}. We isolate this effect using two jointly Gaussian features, $(X_1,X_2)^\top\sim\mathcal N_2(0,\Sigma_\rho)$ with unit variances and correlation $\rho$, and the predictor $g(\bm x)=x_1$. In contrast to marginal permutation, conditional perturbations preserve features' dependency and therefore measure the additional information in $X_1$ beyond $X_2$. Accordingly, in this DGP, the analytical-average \LCVI equals $2(1-\rho^2)$ (where $\rho$ denotes the correlation between features), whereas global marginal permutation importance remains equal to $2$. Thus, as $\rho\to1$ (i.e., as features become increasingly correlated), conditional importance reflects the increasing redundancy, whereas permutation-based methods fail to respect the feature dependence structure, leading to poor performance in the MSE comparison in Table \ref{tab:dgp-x1-lcvi-mse};  see \cref{app:subsec:correlated} for experiment results.

\begin{figure}[t]
    \centering
    \includegraphics[width=\columnwidth]{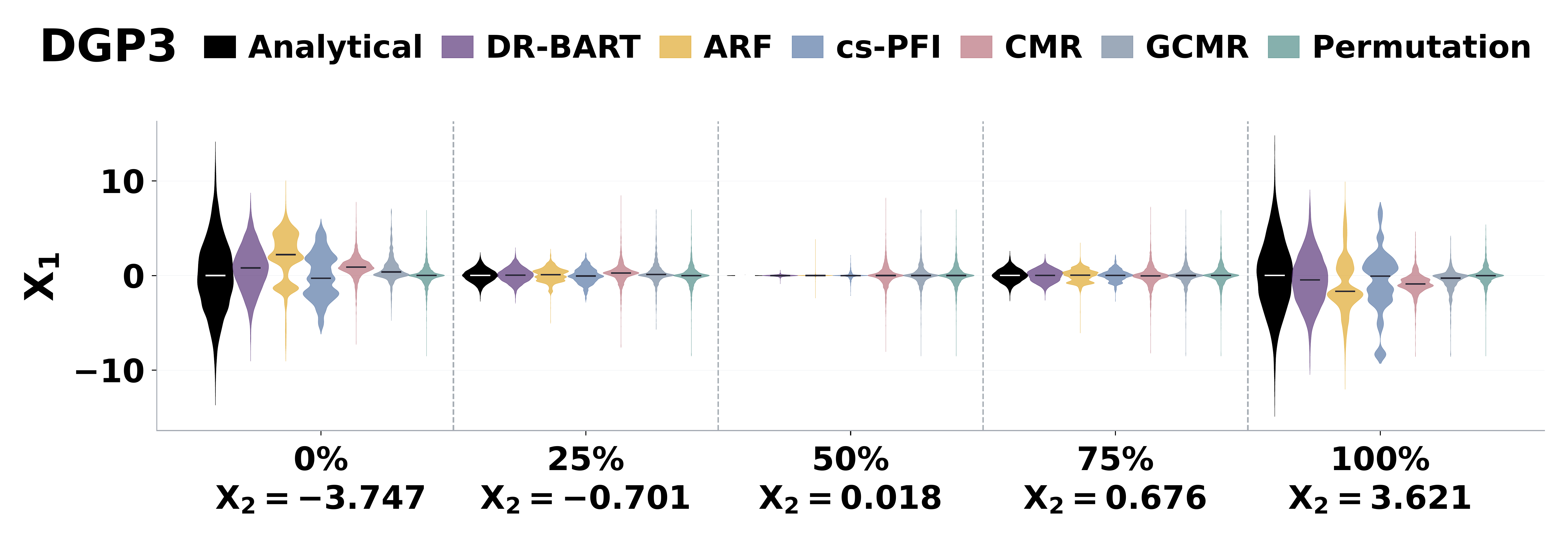}
    \caption{\textbf{Heteroskedastic.} Resampling distributions of $X_1$ at five $X_2$ quantile points across samplers.}\label{fig:app_experiments_dgp_cde_heteroskedastic_conditional_dependence_main}
\end{figure}
\paragraph{DGP 3: Heteroskedastic.}
\textbf{Residual-based conditional permutation importance fail to capture heteroskedasticity.}
We isolate heteroskedastic dependence by letting $X_2\sim\mathcal N(0,1)$ and $X_1\mid X_2=x_2\sim\mathcal N(0,x_2^2)$, with predictor $g(\bm x)=x_1$. Thus, the conditional mean $\mathbb E[X_1\mid X_2]=0$ is constant, while the conditional variance $\operatorname{Var}(X_1\mid X_2=x_2)=x_2^2$ varies with $x_2$. Even with the correct conditional mean, methods based on globally permuted residuals \citep[e.g.,][]{fisher2019all, borgonovo2026trustworthy} draw perturbations with the pooled residual variance $\mathbb{E}[X_2^2]=1$, failing to preserve this heteroskedasticity. 
As a result, while the analytical \LCVI is $x_1^2+x_2^2$, these methods asymptotically yield $x_1^2+1$. Importantly, this local misspecification is invisible after global averaging because $\mathbb E[X_2^2]=1$: the incorrect local scores recover the correct global average importance despite systematically misrepresenting where the feature matters. Moreover, \cref{app:subsec:Heteroskedastic} shows that any common replacement distribution incurs a population local MSE of at least $2$ in this DGP, further demonstrating why accurate conditional density estimation is essential for local variable importance. In Figure~\ref{fig:app_experiments_dgp_cde_heteroskedastic_conditional_dependence_main}, we show the resampling distributions induced by each method. Only CDE-based approaches capture the changing conditional spread, with DR-BART most closely matching the analytical CDE.

\begin{figure}[t]
    \centering
    \includegraphics[width=\columnwidth]{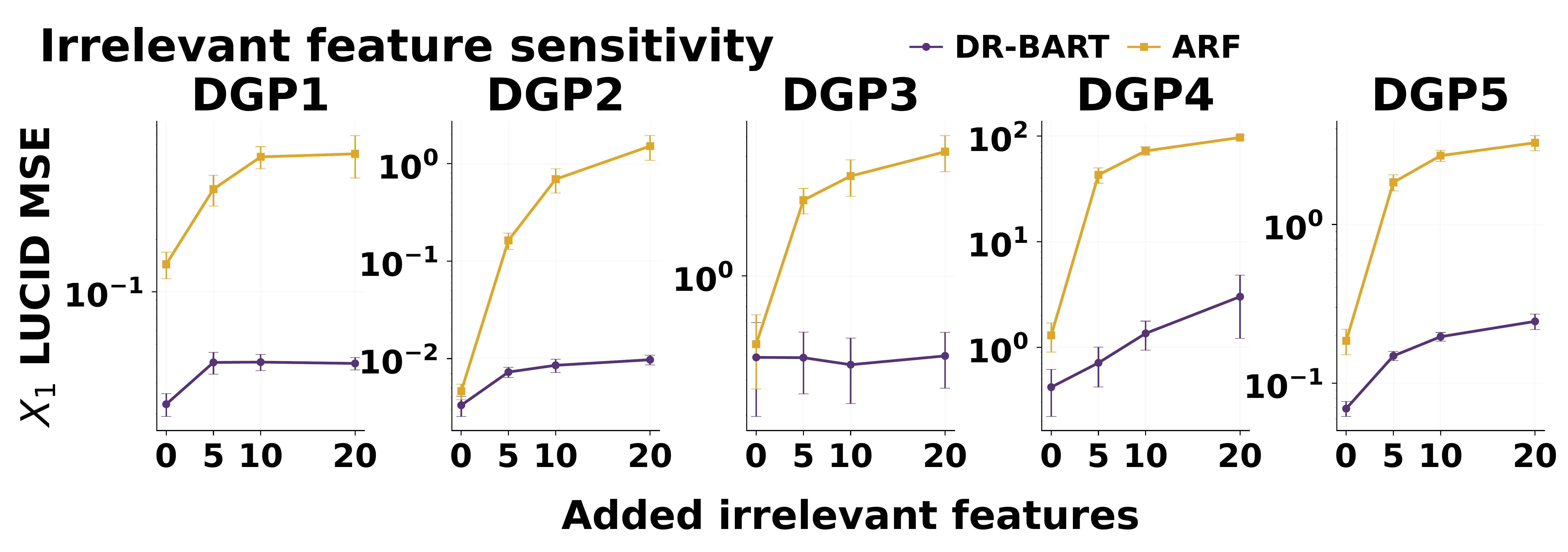}
    \caption{\LCVI estimation MSE as independent irrelevant features are added across five DGPs.}
    \label{fig:main_experiments_dgp_irrelevant}
\end{figure}

\textbf{Conditional density estimation is more robust to high-dimensional covariate spaces than joint density estimation.}
Similar to our DR-BART estimator of \LCVI, cARFi also generates perturbations by sampling from a learned density  \citep{blesch2025conditional}. cARFi obtains conditional replacements from the joint distribution using Adversarial Random Forest \citep[ARF;][]{watson2023adversarial}, whereas we instead fit the conditional density directly. Because \textit{joint} distribution estimation generally becomes more difficult as the ambient dimension grows \citep{izbicki2016nonparametric},
we consider a naive stress test: we augment each of the five DGPs with $0$, $5$, $10$, or $20$ independent standard Gaussian noise features and compare ARF with DR-BART.
These additions increase the ambient dimension without changing the true conditional distribution of $X_1$ or its analytical \LCVI. As shown in ~\cref{fig:main_experiments_dgp_irrelevant}, the \LCVI estimation MSE increases sharply for ARF as irrelevant features are added, whereas DR-BART's estimated conditional density is far more stable. In \cref{app:add_results_cspfi}, we further consider more challenging high-dimensional settings with additional relevant features, where the degradation of joint-density-based estimation is even more pronounced.

\paragraph{Posterior credible intervals.}
A major advantage of \LCVI is that its estimation admits posterior credible intervals. Table~\ref{tab:dgp-x1-active-coverage-width-levels-main} evaluates the uncertainty quantification provided by the DR-BART posterior for \LCVI over observations where $X_1$ is active in the model. Across the five DGPs, empirical coverage is generally close to the nominal $90\%$ and $95\%$ levels, with DGP~1 somewhat conservative, consistent with its particularly simple structure, and DGP~3 slightly below nominal at $95\%$, mainly because of the sparse, highly dispersed observations in the tails. As expected, interval widths increase with the credible level. We exclude inactive observations, whose \LCVI{}s are identically zero and recovered exactly.
\begin{table}[t]
\centering
\small
\renewcommand{\arraystretch}{0.92}
\setlength{\tabcolsep}{5pt}
\caption{Empirical coverage ($C$) and width ($W$) of \LCVI posterior credible intervals for active $X_1$ observations across five DGPs.}
\label{tab:dgp-x1-active-coverage-width-levels-main}
\begin{tabular}{@{}lccccc@{}}
\toprule
& DGP 1 & DGP 2 & DGP 3 & DGP 4 & DGP 5 \\
\midrule
90\% $C$ & 98.43\% & 91.70\% & 91.34\% & 93.20\% & 97.12\% \\
90\% $W$ & 0.766 & 0.155 & 0.944 & 0.969 & 0.956 \\
95\% $C$ & 99.37\% & 95.76\% & 92.94\% & 95.64\% & 98.52\% \\
95\% $W$ & 0.910 & 0.184 & 1.123 & 1.152 & 1.138 \\
\bottomrule
\end{tabular}
\end{table}

\begin{figure*}[!t]
    \centering
     \includegraphics[width=0.95\textwidth]{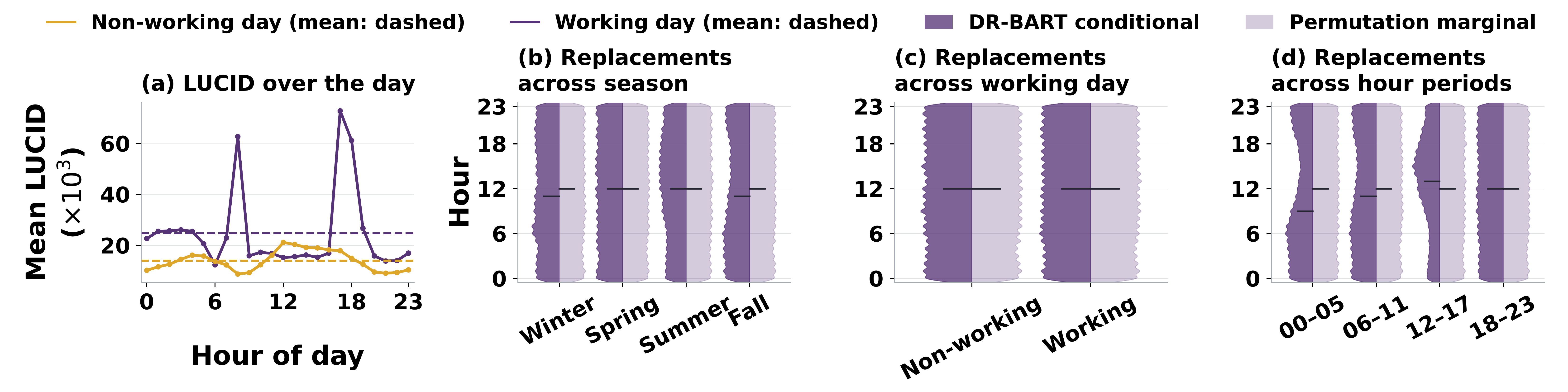}
    \caption{
\textbf{Local importance and conditional replacements for hour.}
(a) \LCVI for \texttt{hour} varies substantially over the day and between \purpleblock~\textcolor{mypurple}{working} and \yellowblock~\textcolor{myyellow}{non-working days} (solid lines). (b--d) Split violins compare the estimated \purpleblock~\textcolor{mypurple}{conditional} distribution of replacement \texttt{hour} values using DR-BART (left) with the
\lightpurpleblock~marginal permutation distribution (right).
}
    \label{fig:main_experiments_bike_sharing_hr}
\end{figure*}
\subsection{Real-Dataset Experiments}
\label{subsec: real_dataset_main}

We next examine individual and group \LCVI for a learned prediction model using the 2011 subset of the UCI Bike Sharing dataset (\(n=8{,}645\) hourly records), which combines Capital Bikeshare rental counts with calendar and weather measurements~\citep{fanaee2014event}. Following \citet{blesch2025conditional}, we retain hour, working-day status, normalized temperature, normalized humidity, and season. For additional experimental details and results, see \cref{app:bike}.

\begin{figure}[t]
    \centering
    \includegraphics[width=\columnwidth]{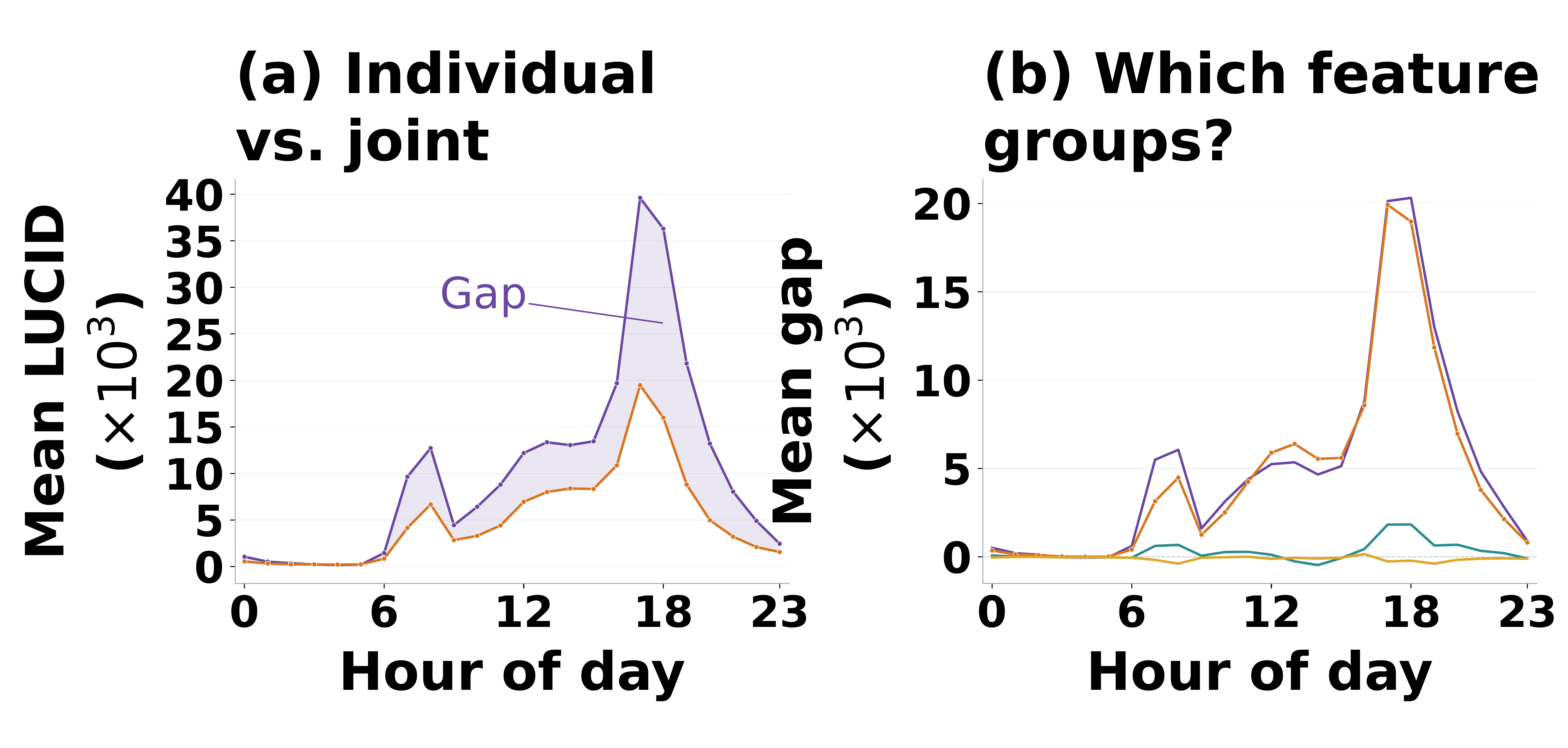}
    \caption{
\textbf{weather-group \LCVI}
\textbf{(a)} The \textcolor{darkpurple}{purple line} shows the joint \LCVI of temperature, humidity, and season, while the \textcolor{darkorange}{orange dashed line} shows the sum of their individual \LCVI{}s; the shaded region highlights their gap.
\textbf{(b)} Group gaps are shown for the \textcolor{darkpurple}{$(T,H,S)$}, \textcolor{darkorange}{$(T,S)$}, \textcolor{teal}{$(T,H)$}, and \textcolor{myyellow}{$(H,S)$} groups.
}
    \label{fig:bike_sharing_group_combination_main_2}
\end{figure}

\paragraph{\LCVI reveals local heterogeneity across real covariate contexts.}
Figure~\ref{fig:bike_teaser}(a) shows that temperature \LCVI varies substantially over the day, so its global importance masks when temperature matters most. As shown in Figure~\ref{fig:bike_teaser}(b), its conditional replacement distribution also shifts across seasons, whereas marginal permutation fails to capture this heterogeneity by drawing replacements from the same marginal distribution throughout. Figure~\ref{fig:main_experiments_bike_sharing_hr} shows a complementary pattern for hour: on working days, importance peaks sharply during the morning and evening commuting periods, whereas this structure is largely absent on non-working days. Unlike temperature, however, hour is only weakly predictable from the remaining features, so its local importance is less constrained by information already captured by those covariates. The observed heterogeneity therefore mainly reflects changes in the model's sensitivity to hour. Together, these results show that \LCVI identifies when a learned model relies on a feature and how its plausible perturbations vary given the remaining covariates. In the next paragraph, we show how group \LCVI complements this individual-level view by capturing shared information among related features, as motivated in \cref{subsec:group_lcvi}.

\paragraph{Group \LCVI complements individual local importance.}
Figure~\ref{fig:bike_sharing_group_combination_main_2}(a) compares the joint \LCVI of the weather group (temperature, humidity, and season) with the sum of their individual \LCVI{}s. Both are small overnight, whereas the group \LCVI substantially exceeds the individual sum during the day, particularly during evening rush hours.  Thus, assessing the weather features individually does not fully capture their joint local predictive importance in these periods. 

Figure~\ref{fig:bike_sharing_group_combination_main_2}(b) shows that the temperature–season pair exhibits a gap similar to that of the full weather group. Its pattern closely tracks that of the full weather group, while the temperature--humidity and humidity--season pairs contribute much smaller gaps. This might be because, for the other two pairs, season or temperature remains in the conditioning set, respectively, and can constrain the joint replacements through its dependence on the selected features. Thus, evaluating these features separately can understate their collective local importance when they share information.

\section{Conclusion}
\label{sec:conclusion}

We present \LCVI, a model-agnostic framework for local variable importance based on the full conditional feature distribution. \LCVI characterizes where and how strongly a fixed model responds to conditionally plausible feature changes, with an interpretable decomposition, convergence guarantees, and Bayesian uncertainty quantification. Its flexible estimation framework accommodates modern conditional density estimators and feature groups. Future work could extend \LCVI to outcome-reference importance and model class reliance \citep{fisher2019all}, moving from explaining a fixed model toward assessing importance for the prediction problem. Also, since global conditional permutation importance is the expectation of \LCVI, accurate local estimation also supports accurate global conditional importance.

\bibliography{references}

\clearpage
\onecolumn
\appendix

\paragraph{Road map.}
\Cref{app:related} summarizes the related work.
\Cref{app:notation} summarizes the notation used throughout the paper.
\Cref{app:theory} provides the complete theoretical results, proofs, and convergence guarantees.
\Cref{app:implementation} presents implementation details, computational complexity, experimental setup, software, and data information.
\Cref{app:dgp} provides complete synthetic-experiment definitions, calculations, and additional results.
\Cref{app:bike} provides additional Bike Sharing analyses and results.
\Cref{app:openxai_scaling} reports runtime and memory scalability results on OpenXAI benchmark datasets.

\providecommand{\LCVI}{LUCID}
\providecommand{\LUCIDapp}{\operatorname{LUCID}}
\providecommand{\LUCIDMainSquaredLabel}{thm:local_lcvi_decomposition}
\providecommand{\LUCIDMainLOCOLabel}{prop:lcvi_loco}
\providecommand{\LUCIDMainBregmanLabel}{prop:lcvi_bregman_decomposition}
\providecommand{\LUCIDMainStabilityLabel}{thm:lcvi_contraction_transfer}
\newtheorem{lucidappprop}{Proposition}[section]
\newtheorem{lucidappcor}[lucidappprop]{Corollary}
\newtheorem{lucidapplemma}[lucidappprop]{Lemma}
\numberwithin{equation}{section}

\section{Related Work}
\label{app:related}
\paragraph{Perturbation-based variable importance.}
\citet{breiman2001random} introduced permutation-based VI in his seminal work on Random Forests, recognizing that if predictions change when shuffling the inputs to the forest (i.e., resampling them from their respective marginal distributions), then the forest likely depends on the feature. This core principle has inspired subsequent approaches to global perturbation-based variable importance that go beyond sampling from marginal distributions, but the sampling distribution in these methods is often restricted and its validity requires additional assumptions. Residual-based approaches \citep{fisher2019all,chamma2023statistically,borgonovo2026trustworthy} estimate the conditional mean of the feature of interest and use the residual distribution to generate counterfactuals, which can yield biased importance estimates when the residual distribution varies with the conditioning covariates, such as under heteroskedasticity or other changes in the conditional residual distribution. Tree- or subgroup-based approaches \citep{strobl2008conditional,debeer2020conditional,molnar2024model,chamma2023statistically} permute or resample feature values within regions defined by tree-based partitions, which may overlook local variation in the conditional distribution, while overly fine partitions leave too few observations for resampling. Approaches based on joint-distribution estimation \citep{blesch2025conditional} instead require modeling the full joint distribution, which can become increasingly challenging as the ambient dimension grows \citep{izbicki2016nonparametric}. 
There are several recent methods that extend permutation-based importance to the local level, including Individual Conditional Importance \citep[ICI;][]{casalicchio2018visualizing} and Conditional Local Importance by Quantile Expectations \citep[CLIQUE;][]{bladen2024conditional}. However, these existing local perturbation-based methods ignore dependence between the perturbed feature and the remaining covariates.
In contrast, \LCVI defines observation-level importance with respect to the full conditional feature distribution. Accurate estimation of this distribution yields replacements that preserve dependence between the feature of interest and the remaining covariates.

\paragraph{Other explanation and importance paradigms.} 
Several related methods target different notions of feature importance. (i) Local explanation methods also explain individual predictions by assigning contributions to features. SHapley Additive exPlanations (SHAP)-based methods \citep{lundberg2017unified,aas2021explaining} evaluate changes in model outputs as features are added or marginalized over, with conditional SHAP using conditional feature densities to account for dependencies between features \citep{aas2021explaining,jullum2025shapr}. However, their estimand differs from \LCVI, as SHAP allocates the deviation of an individual prediction from a baseline across feature coalitions, whereas \LCVI measures the change in model behavior induced by conditionally plausible replacements of a specified feature while holding the remaining context fixed. LIME \citep{ribeiro2016should}, by contrast, constructs a locally linear surrogate approximation of the prediction function, using the local linear regression's parameters as explanations for the black box. Both of these methods can unfaithfully capture nonzero variable attribution to features that are inactive at the observation being explained, especially when the data exhibits discontinuities. As we show in \cref{fig:main_experiments_dgp1_v2}, LIME, SHAP, and SHAPr (a SHAP variant) all yield non-zero feature attribution even when the feature is completely unused by some specific regions of the known outcome generating function.
(ii) Feature-removal methods such as leave-one-covariate-out \citep[LOCO;][]{lei2018distribution} importance measure how much predictive information is lost by retraining the predictive model with all-but-one of the original features. In practice, this change can arise from both information loss due to feature removal and errors introduced by model refitting, making it difficult to disentangle these effects. Moreover, because LOCO compares two different models, it does not directly quantify how the original fitted model relies on the omitted feature. Instead, it describes how much the algorithm that discovered the model relies on the feature \citep{fisher2019all}. At the population level and under squared loss, however, LOCO with optimal reduced predictor is proportional to and closely related to conditional permutation feature importance~\citep{reyero2025conditional}. We further explore this relationship at the observation level in Section~\ref{subsec:lcvi_geometry}. (iii) Grouped VI methods measure the importance of sets of related features jointly to account for shared or redundant information~\citep{gregorutti2015grouped, au2022grouped,chamma2024variable, little2025iloco}. Our framework naturally extends this idea but to the local level, quantifying conditional importance for feature groups at the observation level. 

\section{Notation}
\label{app:notation}

Table~\ref{app:tab:notation} summarizes the notation used in the main theory and this appendix. Throughout, $g$ is a fixed prediction model. The scalar feature of interest is denoted by $Z$, and the vector $\bm{X}$ contains the remaining covariates. For group importance, the feature group and its replacement are written $\bm{Z}$ and $\bm{Z}'$. All population expectations refer to the true covariate distribution. 

Vector symbols use $\bm{X},\bm{x},\bm{W},\bm{w}$, whereas individual coordinates, such as $X_j,x_j$, and the scalar replacements $Z',z'$ remain unbold. The model $g$ is scalar-valued in the squared-loss subsection and may have vector-valued outputs in the Bregman subsection. We use \textcolor{mypurple}{dispersion} and \textcolor{myyellow}{displacement} with the same colors as the main paper.

\begingroup
\small
\renewcommand{\arraystretch}{1.13}
\setlength{\tabcolsep}{6pt}
\begin{longtable}{@{}>{\raggedright\arraybackslash}p{0.25\textwidth}>{\raggedright\arraybackslash}p{0.71\textwidth}@{}}
\caption{Notation for \LCVI\ and its theoretical guarantees.}\label{app:tab:notation}\\
\toprule
Notation & Definition \\
\midrule
\endfirsthead
\multicolumn{2}{l}{\tablename~\thetable\ (continued)}\\
\toprule
Notation & Definition \\
\midrule
\endhead
\midrule
\multicolumn{2}{r}{Continued on next page}\\
\endfoot
\bottomrule
\endlastfoot
$Z,\ z$ & feature of interest and its observed value.\\
$\bm{X}=(X_1,\ldots,X_k),\ \bm{x}$ & Remaining covariates and their observed values; $k$ is the number of conditioning covariates.\\
$\bm{W}=(Z,\bm{X}),\ \bm{w}=(z,\bm{x})$ & Complete feature vector and one observation.\\
$Z',\ z'$ & conditional replacement and its integration variable.\\
$\bm{Z},\ \bm{z},\ \bm{Z}',\ \bm{z}'$ & Feature group, its observed vector, its joint conditional replacement, and the vector integration variable.\\
$P_{Z,\bm{X}},\ P_{\bm{X}},\ P_Z$ & True joint covariate distribution and its marginals.\\
$P(\cdot\mid \bm{x})$ & True conditional distribution of $Z$ given $\bm{X}=\bm{x}$.\\
$p^*(z\mid \bm{x})$ & True conditional density, with respect to the stated dominating measure.\\
$p,\ q,\ \widehat p$ & Candidate conditional densities and a fitted conditional density.\\
$g(z,\bm{x})$ & Fixed prediction model being explained.\\
$\ell(a,b)$ & Scalar-valued loss comparing alternative and reference predictions; for vector arguments, write $\ell(\bm{a},\bm{b})$.\\
$\LUCIDapp(\bm{w})$ & Local importance under the true conditional distribution.\\
$\LUCIDapp[p](\bm{w})$ & Local importance induced by a candidate conditional density $p$.\\
$g_{-Z}(\bm{x})$ & Conditional mean of the fixed model output given $\bm{X}=\bm{x}$; the optimal squared-loss reduced predictor.\\
$\mathrm{LOCO}(\bm{w})$ & Squared discrepancy between $g(z,\bm{x})$ and $g_{-Z}(\bm{x})$.\\
$\mathrm{LOCO}_{\phi}(\bm{w})$ & $D_\phi(g_{-Z}(\bm{x}),g(z,\bm{x}))$, using the same conditional-mean reduced predictor.\\
$I_j(\bm{w})$ & Experimental shorthand for \LCVI\ when the $j$th original feature is selected as $Z$.\\
$\phi,\ D_\phi$ & Scalar-valued strictly convex differentiable generator and its Bregman divergence; vector arguments are written $\bm{a},\bm{b}$, with scalar coordinates $a_c,b_c$.\\
$\mathrm{TV}(p,q)$ & Total variation distance, with the convention $\frac12\int|p-q|$.\\
$d(p,q)$ & Conditional-distribution distance satisfying the main paper distance-control condition.\\
$h(p,q)$ & Integrated Hellinger distance defined in Equation~\eqref{app:eq:hellinger}; $h(p(\cdot\mid \bm{x}),q(\cdot\mid \bm{x}))$ denotes its fixed-context version.\\
$\|\cdot\|_2$ & $L^2(P_{Z,\bm{X}})$ norm for importance functions; $L^2(P_{\bm{X}})$ is written explicitly when used.\\
$C_\ell,\ A$ & Uniform loss bound and constant in the distance-control condition.\\
$\epsilon_n$ & Conditional-density contraction rate, transferred to \LCVI.\\
$\mathcal D_n,\ n$ & I.i.d. covariate sample used to fit the conditional density, and its size.\\
$\bm{w}_i,\ N$ & Observations to be explained, and their number.\\
$\Pi,\ \Pi(\cdot\mid\mathcal D_n)$ & Prior and posterior on conditional densities.\\
$p^{(b)},\ B$ & Retained posterior density draw and number of retained draws.\\
$\overline p$ & Complete posterior mean density.\\
$b_m,\ M$ & Random posterior index for replacement $m$, and number of replacements in an estimate.\\
$Z'_m,\ Z_m'^{(b)}$ & Replacement from a specified sampler, and replacement generated with posterior draw $b$ held fixed.\\
$\widehat{\LUCIDapp}_M[p]$ & Monte Carlo approximation of $\LUCIDapp[p]$ using $M$ replacements.\\
$\mathrm{KL}(p^*\|p),\ V(p^*,p)$ & Integrated log-density ratio and its second moment under the truth.\\
$\mathcal G_n,\ \mathcal N(\epsilon,\mathcal G_n,h)$ & Density sieve and its Hellinger covering number.\\
$L_n(p),\ \psi_n$ & Likelihood ratio and binary test used only in the posterior-tail proof.\\
$\alpha,\ \beta,\ d_0$ & DR-BART smoothness, $\beta=\min\{\alpha,1\}$, and effective dimension, including the response coordinate.\\
$\gamma,\ \widehat Q_{\bm{w}}$ & Tail probability for a $(1-\gamma)$ credible interval, and empirical posterior quantile function at $\bm{w}$.\\
\end{longtable}
\endgroup

\section{Theoretical Results and Proofs}
\label{app:theory}

We first prove the structural results for the true conditional distribution. We then establish stability under conditional-distribution error and distinguish posterior contraction from the moment control needed for posterior averaging.

\subsection{Setup and Global--Local Identity}
\label{app:setup}

Let $\bm{W}=(Z,\bm{X})\sim P_{Z,\bm{X}}$. A conditional replacement is constructed so that
\begin{equation}
Z'\mid \bm{X}\sim P(\cdot\mid \bm{X}),\qquad Z'\perp Z\mid \bm{X}.
\label{app:eq:coupling}
\end{equation}
For a candidate conditional density $p$, write
\begin{equation}
\LUCIDapp[p](\bm{w}):=\int \ell(g(z',\bm{x}),g(z,\bm{x}))\,p(z'\mid \bm{x})\,dz',
\qquad \LUCIDapp(\bm{w}):=\LUCIDapp[p^*](\bm{w}).
\label{app:eq:functional}
\end{equation}
At fixed $\bm{w}=(z,\bm{x})$, expectations over $Z'$ use $P(\cdot\mid \bm{x})$ unless another distribution is specified. All conditional densities are defined with respect to a common dominating measure \(\mu\); for discrete features, the corresponding integrals are sums.

\begin{lucidappprop}[Global--local identity]
\label{app:prop:global-local}
If $\mathbb E|\ell(g(Z',\bm{X}),g(Z,\bm{X}))|<\infty$, then
\begin{equation}
\mathrm{CPFI}=\mathbb E[\LUCIDapp(\bm{W})].
\label{app:eq:global-local}
\end{equation}
\end{lucidappprop}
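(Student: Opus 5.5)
The plan is to prove the identity by conditioning on $\bm{W}$ inside the CPFI expectation and applying the tower property. First, I will realize the nested expectation in \eqref{eq:cpfi} as a single expectation over the triple $(Z,\bm{X},Z')$ built through the coupling \eqref{app:eq:coupling}. Under this coupling the triple has joint law $P_{Z,\bm{X}}(dz,d\bm{x})\,P(dz'\mid\bm{x})$. Because $Z'\perp Z\mid\bm{X}$, the regular conditional distribution of $Z'$ given $\bm{W}=(Z,\bm{X})$ is $P(\cdot\mid\bm{X})$. This means the inner expectation in \eqref{eq:cpfi} is exactly
\[
\mathbb E\bigl[\ell(g(Z',\bm{X}),g(Z,\bm{X}))\mid \bm{W}\bigr].
\]
Consequently,
\[
\mathrm{CPFI}=\mathbb E\Bigl\{\mathbb E\bigl[\ell(g(Z',\bm{X}),g(Z,\bm{X}))\mid\bm{W}\bigr]\Bigr\}.
\]

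Second, I will identify this conditional expectation pointwise with \eqref{app:eq:functional}. For a fixed $\bm{w}=(z,\bm{x})$ it equals
\[
\int \ell(g(z',\bm{x}),g(z,\bm{x}))\,p^*(z'\mid\bm{x})\,\mu(dz')=\LUCIDapp(\bm{w}),
\]
which holds $P_{Z,\bm{X}}$-almost surely. The justification is the disintegration (Fubini--Tonelli) form of conditional expectation for a product-type kernel: the integrand $(z,\bm{x},z')\mapsto\ell(g(z',\bm{x}),g(z,\bm{x}))$ is jointly measurable, and $Z'$ is drawn from a kernel that depends only on $\bm{x}$. Substituting into the previous display gives $\mathrm{CPFI}=\mathbb E[\LUCIDapp(\bm{W})]$.

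The main technical point is the integrability condition, which makes Fubini applicable without a sign assumption on $\ell$. I will apply Tonelli to $|\ell|$ first. It gives
\[
\mathbb E\int|\ell|\,p^*(z'\mid\bm{X})\,d\mu=\mathbb E|\ell(g(Z',\bm{X}),g(Z,\bm{X}))|<\infty .
\]
Two consequences follow. First, $\LUCIDapp(\bm{w})$ is finite for almost every $\bm{w}$, so it is well defined outside a null set. Second, Fubini then applies to $\ell$ itself and gives equality of the iterated and joint integrals. Measurability of $\bm{w}\mapsto\LUCIDapp(\bm{w})$ follows from the same Fubini statement. For discrete $Z$, every integral against $\mu$ is replaced by a sum, and the argument is otherwise unchanged. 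No other step requires more than the tower property.
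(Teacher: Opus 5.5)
Your proposal is correct and follows essentially the same route as the paper. Both arguments use the coupling \eqref{app:eq:coupling} to identify the conditional law of $Z'$ given $(Z,\bm X)$ as $P(\cdot\mid\bm X)$, conclude $\mathbb E[\ell(g(Z',\bm X),g(Z,\bm X))\mid Z,\bm X]=\operatorname{LUCID}(Z,\bm X)$, and then take expectations. Your Tonelli/Fubini treatment of the integrability hypothesis, including almost-everywhere finiteness and measurability of $\operatorname{LUCID}$, is extra rigor that the paper leaves implicit rather than a different argument.
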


\begin{proof}
By Equation~\eqref{app:eq:coupling}, the conditional distribution of $Z'$ given $(Z,\bm{X})$ is $P(\cdot\mid \bm{X})$. Thus
\[
\mathbb E[\ell(g(Z',\bm{X}),g(Z,\bm{X}))\mid Z,\bm{X}]=\LUCIDapp(Z,\bm{X}).
\]
Taking expectations yields the stated identity and the prediction-based definition of global CPFI in the main paper, Equation \ref{eq:cpfi}.
\end{proof}

\subsection{Squared-Loss Decomposition and the Relationship with LOCO}
\label{app:squared-loco}

Assume first that $g$ is real-valued and the relevant second moments are finite. Define the conditional-mean reduced predictor by
\begin{equation}
g_{-Z}(\bm{x}):=\mathbb E[g(Z',\bm{x})]=\mathbb E[g(Z,\bm{X})\mid \bm{X}=\bm{x}].
\label{app:eq:reduced-predictor}
\end{equation}
This definition refers to the given model $g$. 

\begin{proof}[Proof of Theorem~\ref{\LUCIDMainSquaredLabel}]
Under squared loss, adding and subtracting the conditional mean gives
\begin{align}
\LUCIDapp(\bm{w})
&=\mathbb E[(g(Z',\bm{x})-g(z,\bm{x}))^2]\notag\\
&=\mathbb E[(g(Z',\bm{x})-\mathbb E[g(Z',\bm{x})])^2]+(\mathbb E[g(Z',\bm{x})]-g(z,\bm{x}))^2\notag\\
&\quad+2(\mathbb E[g(Z',\bm{x})]-g(z,\bm{x}))\mathbb E[g(Z',\bm{x})-\mathbb E[g(Z',\bm{x})]]\notag\\
&=\textcolor{mypurple}{\underbrace{\operatorname{Var}(g(Z',\bm{x}))}_{\text{dispersion}}}
+\textcolor{myyellow}{\underbrace{(g(z,\bm{x})-\mathbb E[g(Z',\bm{x})])^2}_{\text{displacement}}}.
\label{app:eq:squared-decomposition}
\end{align}
The cross term vanishes by Equation~\eqref{app:eq:reduced-predictor}.
\end{proof}

The conditional mean is the optimal squared-loss reduced predictor: for every real number $a$,
\begin{equation}
\mathbb E[(g(Z',\bm{x})-a)^2]=\operatorname{Var}(g(Z',\bm{x}))+(g_{-Z}(\bm{x})-a)^2.
\label{app:eq:reduced-optimality}
\end{equation}
Consequently, $g_{-Z}(\bm{x})$ uniquely minimizes this conditional risk. Recall the local LOCO comparison in the main paper,
\begin{equation}
\mathrm{LOCO}(\bm{w}):=\textcolor{myyellow}{(g(z,\bm{x})-g_{-Z}(\bm{x}))^2}.
\label{app:eq:local-loco}
\end{equation}
A reduced model obtained by finite-sample refitting need not equal this projection, so the following identity does not assert equality for a refitted model.

\begin{proof}[Proof of Proposition~\ref{\LUCIDMainLOCOLabel}]
Substituting Equation~\eqref{app:eq:local-loco} into Equation~\eqref{app:eq:squared-decomposition} gives
\begin{equation}
\LUCIDapp(\bm{w})=\textcolor{mypurple}{\operatorname{Var}(g(Z',\bm{x}))}+\textcolor{myyellow}{\mathrm{LOCO}(\bm{w})}.
\label{app:eq:local-loco-identity}
\end{equation}
\end{proof}

\begin{lucidappcor}[Conditional and global averages]
\label{app:cor:squared-averages}
Following \citet{reyero2025conditional}, if $\mathbb E[g(Z,\bm{X})^2]<\infty$, then
\begin{align}
\mathbb E[\mathrm{LOCO}(\bm{W})\mid \bm{X}]&=\operatorname{Var}(g(Z,\bm{X})\mid \bm{X}),\label{app:eq:conditional-loco}\\
\mathbb E[\LUCIDapp(\bm{W})\mid \bm{X}]&=2\operatorname{Var}(g(Z,\bm{X})\mid \bm{X}).\label{app:eq:conditional-lucid}
\end{align}
In particular, \textcolor{mypurple}{dispersion} and \textcolor{myyellow}{displacement} have the same population mean, and
\begin{equation}
\mathrm{CPFI}=2\mathbb E[\mathrm{LOCO}(\bm{W})]
=2\mathbb E[\operatorname{Var}(g(Z,\bm{X})\mid \bm{X})].
\label{app:eq:global-factor-two}
\end{equation}
\end{lucidappcor}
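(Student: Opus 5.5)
The corollary follows by integrating the pointwise identities of Theorem~\ref{\LUCIDMainSquaredLabel} and Proposition~\ref{\LUCIDMainLOCOLabel} against the conditional law of $Z$ given $\bm X$, and then applying the global--local identity of Proposition~\ref{app:prop:global-local}. The key observation is that in the coupling \eqref{app:eq:coupling}, the observed $Z$ and the replacement $Z'$ are conditionally i.i.d.\ given $\bm X$. Hence any functional of $g(Z,\bm X)$ conditional on $\bm X$ matches the corresponding functional of $g(Z',\bm x)$. The moment assumption $\mathbb E[g(Z,\bm X)^2]<\infty$ makes every conditional expectation and variance below finite for $P_{\bm X}$-almost every $\bm x$, by the tower property.

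\emph{Step 1 (conditional LOCO).} By definition \eqref{app:eq:reduced-predictor}, $g_{-Z}(\bm x)=\mathbb E[g(Z,\bm X)\mid \bm X=\bm x]$. Then
\[
\mathbb E[\mathrm{LOCO}(\bm W)\mid \bm X]=\mathbb E\bigl[(g(Z,\bm X)-\mathbb E[g(Z,\bm X)\mid\bm X])^2\mid \bm X\bigr],
\]
which is exactly $\operatorname{Var}(g(Z,\bm X)\mid\bm X)$. This gives \eqref{app:eq:conditional-loco}.

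\emph{Step 2 (conditional \LCVI).} Take conditional expectations given $\bm X$ in \eqref{app:eq:local-loco-identity}. The dispersion term $\operatorname{Var}(g(Z',\bm x))$ depends only on $\bm x$, so it passes through the conditional expectation. Because $Z'\mid\bm x$ and $Z\mid\bm x$ share the law $P(\cdot\mid\bm x)$, this term equals $\operatorname{Var}(g(Z,\bm X)\mid\bm X=\bm x)$. Adding the result of Step 1 yields \eqref{app:eq:conditional-lucid}.

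\emph{Step 3 (global statements).} Take outer expectations over $P_{\bm X}$. Steps 1 and 2 show that dispersion and displacement each have conditional mean $\operatorname{Var}(g\mid\bm X)$, so their population means coincide. Proposition~\ref{app:prop:global-local} gives $\mathrm{CPFI}=\mathbb E[\LUCIDapp(\bm W)]$; its integrability hypothesis holds because $(a-b)^2\le 2a^2+2b^2$ and $\mathbb E[g(Z',\bm X)^2]=\mathbb E[g(Z,\bm X)^2]<\infty$. Combining, $\mathrm{CPFI}=2\mathbb E[\operatorname{Var}(g\mid\bm X)]=2\mathbb E[\mathrm{LOCO}(\bm W)]$, which is \eqref{app:eq:global-factor-two}.

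The main obstacle is bookkeeping rather than anything substantive. One must distinguish the fixed-$\bm x$ expectation over $Z'$, used in the local definitions, from the conditional expectation over $Z$ given $\bm X$, used in the corollary. One must also check that the equality in law of $Z$ and $Z'$ given $\bm X$, together with measurability of $\bm x\mapsto \operatorname{Var}(g(Z',\bm x))$, justifies the interchange; this holds with the regular conditional distributions $P(\cdot\mid\bm x)$ that the paper assumes.
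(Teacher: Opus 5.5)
Your proposal is correct and follows the paper's proof essentially step for step. The conditional LOCO identity is the definition of conditional variance, and the dispersion term equals $\operatorname{Var}(g(Z,\bm X)\mid\bm X)$ because $Z'$ and $Z$ share the conditional law $P(\cdot\mid\bm X)$; averaging and applying Proposition~\ref{app:prop:global-local} then gives the factor of two, and your explicit check of that proposition's integrability hypothesis via $(a-b)^2\le 2a^2+2b^2$ is a small addition the paper leaves implicit.
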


\begin{proof}
Equation~\eqref{app:eq:conditional-loco} is the definition of conditional variance. At $\bm{W}=(Z,\bm{X})$, the \textcolor{mypurple}{dispersion} in Equation~\eqref{app:eq:local-loco-identity} equals $\operatorname{Var}(g(Z,\bm{X})\mid \bm{X})$ because $Z'$ and $Z$ have the same conditional distribution. Taking conditional expectations proves Equation~\eqref{app:eq:conditional-lucid}. Averaging again and applying Proposition~\ref{app:prop:global-local} proves Equation~\eqref{app:eq:global-factor-two}.
\end{proof}

\subsection{Bregman Decomposition and Global Identities}
\label{app:bregman}

Let the model outputs lie in a convex domain in a finite-dimensional Euclidean space, and let $\phi$ be strictly convex and continuously differentiable on its relative interior. Its Bregman divergence \citep{BREGMAN1967200} is
\begin{equation}
D_\phi(\bm{a},\bm{b}):=\phi(\bm{a})-\phi(\bm{b})-\langle\nabla\phi(\bm{b}),\bm{a}-\bm{b}\rangle.
\label{app:eq:bregman-definition}
\end{equation}
For each fixed $\bm{x}$, assume that $g(Z',\bm{x})$ has a finite mean and that both $g(z,\bm{x})$ and this mean, $g_{-Z}(\bm{x})$, lie in the relative interior of the domain of $\phi$. Assume also that the expected Bregman divergences appearing below are finite. For vector-valued outputs, $g_{-Z}(\bm{x})$ is obtained by taking the conditional expectation of each output component in Equation~\eqref{app:eq:reduced-predictor}. If the output domain lies in a lower-dimensional affine space, gradients are taken within that space; for example, for probability vectors, derivatives are taken along directions that preserve the constraint that the components sum to one.

\begin{proof}[Proof of Proposition~\ref{\LUCIDMainBregmanLabel}]
Expanding Equation~\eqref{app:eq:bregman-definition} gives the three-point identity
\begin{align*}
D_\phi(g(Z',\bm{x}),g(z,\bm{x}))
&=\phi(g(Z',\bm{x}))-\phi(g(z,\bm{x}))
-\left\langle\nabla\phi(g(z,\bm{x})),g(Z',\bm{x})-g(z,\bm{x})\right\rangle\\
&=\phi(g(Z',\bm{x}))-\phi(\mathbb E[g(Z',\bm{x})])
+\phi(\mathbb E[g(Z',\bm{x})])-\phi(g(z,\bm{x}))\\
&\quad-\left\langle\nabla\phi(g(z,\bm{x})),g(Z',\bm{x})-\mathbb E[g(Z',\bm{x})]\right\rangle\\
&\quad-\left\langle\nabla\phi(g(z,\bm{x})),\mathbb E[g(Z',\bm{x})]-g(z,\bm{x})\right\rangle\\
&=\Bigl[\phi(g(Z',\bm{x}))-\phi(\mathbb E[g(Z',\bm{x})])\\
&\qquad-\left\langle\nabla\phi(\mathbb E[g(Z',\bm{x})]),g(Z',\bm{x})-\mathbb E[g(Z',\bm{x})]\right\rangle\Bigr]\\
&\quad+\Bigl[\phi(\mathbb E[g(Z',\bm{x})])-\phi(g(z,\bm{x}))\\
&\qquad-\left\langle\nabla\phi(g(z,\bm{x})),\mathbb E[g(Z',\bm{x})]-g(z,\bm{x})\right\rangle\Bigr]\\
&\quad+\left\langle\nabla\phi(\mathbb E[g(Z',\bm{x})])-\nabla\phi(g(z,\bm{x})),
g(Z',\bm{x})-\mathbb E[g(Z',\bm{x})]\right\rangle\\
&=D_\phi(g(Z',\bm{x}),\mathbb E[g(Z',\bm{x})])
+D_\phi(\mathbb E[g(Z',\bm{x})],g(z,\bm{x}))\\
&\quad+\left\langle\nabla\phi(\mathbb E[g(Z',\bm{x})])-\nabla\phi(g(z,\bm{x})),
g(Z',\bm{x})-\mathbb E[g(Z',\bm{x})]\right\rangle.
\end{align*}
The last term has mean zero. Hence, for $\ell=D_\phi$,
\begin{equation}
\LUCIDapp(\bm{w})=\textcolor{mypurple}{\underbrace{\mathbb E[D_\phi(g(Z',\bm{x}),\mathbb E[g(Z',\bm{x})])]}_{\text{dispersion}}}
+\textcolor{myyellow}{\underbrace{D_\phi(\mathbb E[g(Z',\bm{x})],g(z,\bm{x}))}_{\text{displacement}}}.
\label{app:eq:bregman-decomposition}
\end{equation}
\end{proof}
For $\phi(\bm{a})=\|\bm{a}\|^2$, the Bregman divergence reduces to squared Euclidean distance, so Equation~\eqref{app:eq:bregman-decomposition} becomes
\begin{equation}
\LUCIDapp(\bm{w})
=\textcolor{mypurple}{\underbrace{\mathbb E[\|g(Z',\bm{x})-g_{-Z}(\bm{x})\|^2]}_{\text{dispersion}}}
+\textcolor{myyellow}{\underbrace{\|g_{-Z}(\bm{x})-g(z,\bm{x})\|^2}_{\text{displacement}}}.
\label{app:eq:local-bregman-squared}
\end{equation}
For scalar outputs, the dispersion term is $\operatorname{Var}(g(Z',\bm{x}))$, recovering the squared-loss decomposition.

For strictly positive probability-vector outputs and negative entropy $\phi(\bm{a})=\sum_c a_c\log a_c$, the Bregman divergence is $\mathrm{KL}(\bm{a}\|\bm{b})$. Thus, whenever the expectations are finite,
\begin{equation}
\LUCIDapp(\bm{w})
=\textcolor{mypurple}{\underbrace{\mathbb E[\mathrm{KL}(g(Z',\bm{x})\|g_{-Z}(\bm{x}))]}_{\text{dispersion}}}
+\textcolor{myyellow}{\underbrace{\mathrm{KL}(g_{-Z}(\bm{x})\|g(z,\bm{x}))}_{\text{displacement}}}.
\label{app:eq:local-kl}
\end{equation}

\paragraph{The direction of the LOCO comparison.}
The main paper places the alternative prediction first and the reference prediction second in $\ell$ to show that we measure the alternative prediction using the original model output as the reference. Keeping this order, the Bregman comparison with the conditional-mean reduced predictor is
\begin{equation}
\mathrm{LOCO}_\phi(\bm{w}):=\textcolor{myyellow}{D_\phi(g_{-Z}(\bm{x}),g(z,\bm{x}))}.
\label{app:eq:bregman-loco}
\end{equation}
It is exactly the \textcolor{myyellow}{displacement} term in Equation~\eqref{app:eq:bregman-decomposition}. Here, we retain the conditional-mean reduced predictor, which need not minimize the Bregman loss when placed in its first argument. Reversing the comparison leads to an analogous formulation in dual coordinates, with a corresponding dual-mean predictor. As this extension is beyond the main scope of our paper, we omit its details and refer readers to \citet{adlam2022understanding} for further discussion about Bregman divergence.

\begin{lucidappcor}[Zero importance]
\label{app:cor:zero}
For squared loss or a well-defined Bregman divergence with a strictly convex generator,
\begin{equation}
\LUCIDapp(\bm{w})=0\quad\Longleftrightarrow\quad g(Z',\bm{x})=g(z,\bm{x})\quad P(\cdot\mid \bm{x})\text{-almost surely}.
\label{app:eq:zero}
\end{equation}
\end{lucidappcor}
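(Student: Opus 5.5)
We prove Corollary~\ref{app:cor:zero} from the two decompositions already established: Theorem~\ref{\LUCIDMainSquaredLabel} (equivalently Equation~\eqref{app:eq:squared-decomposition}) for squared loss, and Proposition~\ref{\LUCIDMainBregmanLabel} (Equation~\eqref{app:eq:bregman-decomposition}) for Bregman losses. Squared loss is the special case $\phi(\bm a)=\|\bm a\|^2$, but we treat it directly because it is elementary.

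The direction ``$\Leftarrow$'' is immediate. If $g(Z',\bm x)=g(z,\bm x)$ holds $P(\cdot\mid\bm x)$-almost surely, then the integrand $\ell(g(Z',\bm x),g(z,\bm x))$ equals $\ell(a,a)$ almost surely. This is $0$ for squared loss, and $D_\phi(a,a)=0$ for any Bregman divergence. Hence $\LUCIDapp(\bm w)=0$.

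For ``$\Rightarrow$'', first take squared loss. Suppose $\LUCIDapp(\bm w)=0$. Both terms in the decomposition are nonnegative, so each must vanish.
\begin{itemize}[nosep]
\item $\operatorname{Var}(g(Z',\bm x))=0$ gives $g(Z',\bm x)=\mathbb E[g(Z',\bm x)]$ almost surely.
\item $(g(z,\bm x)-\mathbb E[g(Z',\bm x)])^2=0$ gives $\mathbb E[g(Z',\bm x)]=g(z,\bm x)$.
\end{itemize}
Chaining the two equalities gives the claim.

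The Bregman case follows the same pattern. Nonnegativity of $D_\phi$ comes from convexity of $\phi$: the divergence is the gap between $\phi$ and its tangent plane. So both the dispersion term $\mathbb E[D_\phi(g(Z',\bm x),g_{-Z}(\bm x))]$ and the displacement term $D_\phi(g_{-Z}(\bm x),g(z,\bm x))$ are zero.
\begin{itemize}[nosep]
\item A nonnegative random variable with zero mean is zero almost surely, so $D_\phi(g(Z',\bm x),g_{-Z}(\bm x))=0$ almost surely.
\item Strict convexity gives $D_\phi(\bm a,\bm b)=0\iff \bm a=\bm b$. Applied to the first term, this yields $g(Z',\bm x)=g_{-Z}(\bm x)$ almost surely. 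Applied to the displacement term, it yields $g_{-Z}(\bm x)=g(z,\bm x)$.
\end{itemize}

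The main point needing care is the fact $D_\phi(\bm a,\bm b)=0\Rightarrow\bm a=\bm b$. To show it, suppose $\bm a\neq\bm b$ and consider the midpoint $\bm m=(\bm a+\bm b)/2$. Strict convexity gives $\phi(\bm m)<\tfrac12(\phi(\bm a)+\phi(\bm b))$. The first-order convexity inequality at $\bm b$ gives $\phi(\bm m)\ge\phi(\bm b)+\tfrac12\langle\nabla\phi(\bm b),\bm a-\bm b\rangle$. Combining the two yields $D_\phi(\bm a,\bm b)>0$.

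This argument requires $\bm b$ to lie in the relative interior, where $\phi$ is differentiable. That holds here: the standing assumptions place $g(z,\bm x)$ and $g_{-Z}(\bm x)$ in the relative interior of the domain, and these are exactly the second arguments of $D_\phi$ in the two terms. For outputs confined to an affine subspace, such as probability vectors, the same argument runs within that subspace, with gradients taken along it.

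Finally, finiteness of the relevant expectations, assumed in the setup, makes the decomposition valid term by term, so no $\infty-\infty$ issue arises.
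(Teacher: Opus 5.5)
Your proof is correct, but it takes a different route from the paper's. You go through the dispersion--displacement decomposition. The paper's proof never uses it: it works directly from the definition $\mathrm{LUCID}(\bm w)=\mathbb E[\ell(g(Z',\bm x),g(z,\bm x))]$. It notes that the integrand is nonnegative and vanishes exactly when its two arguments coincide; for Bregman losses this follows from the strict tangent-plane inequality at the reference point $g(z,\bm x)$. A zero mean therefore forces the integrand, and hence $g(Z',\bm x)-g(z,\bm x)$, to vanish almost surely.

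The direct argument is shorter and needs fewer hypotheses. It only requires $\phi$ to be strictly convex and differentiable at the reference point $g(z,\bm x)$. It does not need $g_{-Z}(\bm x)$ to exist or lie in the relative interior, nor the expected divergences in the decomposition to be finite. You correctly import these conditions from the standing assumptions, but this statement does not need them. For squared loss, $\mathrm{LUCID}(\bm w)=0$ even implies the finite second moment by itself.

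Your route buys two things. It gives the slightly finer conclusion that dispersion and displacement each vanish separately, matching the main text's phrasing ``since both terms are nonnegative.'' And your midpoint argument proves the strict inequality $D_\phi(\bm a,\bm b)>0$ for $\bm a\neq\bm b$, which the paper only asserts.
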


\begin{proof}
By the definition of \LCVI,
\[
\LUCIDapp(\bm{w})
=\mathbb E\!\left[\ell(g(Z',\bm{x}),g(z,\bm{x}))\right],
\qquad Z'\sim P(\cdot\mid\bm{x}).
\]
For squared loss, $\ell(\bm{a},\bm{b})=\|\bm{a}-\bm{b}\|^2$ is nonnegative and equals zero if and only if $\bm{a}=\bm{b}$. For Bregman divergence, strict convexity of $\phi$ implies
\[
\phi(\bm{a})>\phi(\bm{b})+\langle\nabla\phi(\bm{b}),\bm{a}-\bm{b}\rangle
\qquad\text{whenever }\bm{a}\neq\bm{b}.
\]
Together with Equation~\eqref{app:eq:bregman-definition}, this gives $D_\phi(\bm{a},\bm{b})\geq0$, with equality if and only if $\bm{a}=\bm{b}$. Thus, if $\LUCIDapp(\bm{w})=0$, the nonnegative loss inside the expectation must vanish $P(\cdot\mid\bm{x})$-almost surely, implying $g(Z',\bm{x})=g(z,\bm{x})$ almost surely. Conversely, this almost-sure equality makes the loss zero almost surely and hence gives $\LUCIDapp(\bm{w})=0$.
\end{proof}

\begin{lucidappprop}[Global Bregman identity]
\label{app:prop:global-bregman}
Under the preceding conditions and population integrability of the displayed terms,
\begin{equation}
\mathrm{CPFI}=\mathbb E[D_\phi(g(Z,\bm{X}),g_{-Z}(\bm{X}))]
+\mathbb E[D_\phi(g_{-Z}(\bm{X}),g(Z,\bm{X}))].
\label{app:eq:global-bregman}
\end{equation}
\end{lucidappprop}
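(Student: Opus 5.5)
The plan is to average the local Bregman decomposition (Proposition~\ref{\LUCIDMainBregmanLabel}, Equation~\eqref{app:eq:bregman-decomposition}) over the population. Proposition~\ref{app:prop:global-local} then turns the left-hand side into CPFI. The displacement term averages directly to the second expectation in Equation~\eqref{app:eq:global-bregman}. The only real work is to show that the averaged dispersion term equals the first expectation.

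\textbf{Step 1.} Evaluate Equation~\eqref{app:eq:bregman-decomposition} at $\bm{w}=\bm{W}=(Z,\bm{X})$, which gives
\[
\LUCIDapp(\bm{W})=\mathbb E_{Z'}[D_\phi(g(Z',\bm{X}),g_{-Z}(\bm{X}))]+D_\phi(g_{-Z}(\bm{X}),g(Z,\bm{X})).
\]
The first term depends on $\bm{W}$ only through $\bm{X}$. By the coupling in Equation~\eqref{app:eq:coupling}, $Z'\mid\bm{X}$ has the same law $P(\cdot\mid\bm{X})$ as $Z\mid\bm{X}$. Hence this term equals $\mathbb E[D_\phi(g(Z,\bm{X}),g_{-Z}(\bm{X}))\mid\bm{X}]$; this is the same substitution used in the proof of Corollary~\ref{app:cor:squared-averages}.

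\textbf{Step 2.} Take the outer expectation over $\bm{W}$. The assumed population integrability of both displayed terms allows linearity of expectation to split the sum. The tower property then converts the first term into $\mathbb E[D_\phi(g(Z,\bm{X}),g_{-Z}(\bm{X}))]$. Finally, Proposition~\ref{app:prop:global-local}, applicable because integrability of the sum gives $\mathbb E|\ell|<\infty$ with $\ell=D_\phi\ge 0$, identifies $\mathbb E[\LUCIDapp(\bm{W})]$ with CPFI.

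\textbf{Main obstacle.} The only delicate point is measure-theoretic. The local decomposition must hold for $P_{\bm{X}}$-almost every $\bm{x}$, which requires $g_{-Z}(\bm{x})$ and $g(z,\bm{x})$ to lie in the relative interior of the domain of $\phi$ and the conditional expectations to be finite. These are the standing assumptions of this subsection, so they hold almost surely.

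The mean-zero cross term in the proof of Proposition~\ref{\LUCIDMainBregmanLabel} needs the inner product to be integrable conditionally on $\bm{x}$. Since the decomposition writes a finite quantity as a sum of two nonnegative finite terms, this integrability follows. For that reason I would verify it pointwise before averaging, rather than re-deriving it.

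Nonnegativity of $D_\phi$ (Corollary~\ref{app:cor:zero}) justifies applying Tonelli's theorem when exchanging the conditional and outer expectations.
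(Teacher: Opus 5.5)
Your proposal is correct and follows the same route as the paper: you average the local Bregman decomposition, use the fact that $Z'$ and $Z$ share the conditional law $P(\cdot\mid\bm{X})$ to rewrite the averaged dispersion as $\mathbb E[D_\phi(g(Z,\bm{X}),g_{-Z}(\bm{X}))]$, and read off the averaged displacement directly. Your extra measure-theoretic bookkeeping (Tonelli via $D_\phi\ge 0$, and the applicability of Proposition~\ref{app:prop:global-local}) is sound, though conditional integrability of the cross term follows more simply from its linearity in $g(Z',\bm{x})$, which has a finite mean by assumption.
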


\begin{proof}
Average Equation~\eqref{app:eq:bregman-decomposition} over $(Z,\bm{X})$. Conditional on $\bm{X}$, the outputs $g(Z',\bm{X})$ and $g(Z,\bm{X})$ have the same distribution and conditional mean $g_{-Z}(\bm{X})$. Therefore, the average \textcolor{mypurple}{dispersion} equals $\mathbb E[D_\phi(g(Z,\bm{X}),g_{-Z}(\bm{X}))]$. The average \textcolor{myyellow}{displacement} is the second term in Equation~\eqref{app:eq:global-bregman}. 
\end{proof}

When the relevant generator expectations are finite, the first term also satisfies
\begin{equation}
\mathbb E[D_\phi(g(Z,\bm{X}),g_{-Z}(\bm{X}))]
=\mathbb E[\phi(g(Z,\bm{X}))]-\mathbb E[\phi(g_{-Z}(\bm{X}))],
\label{app:eq:jensen-gap}
\end{equation}
since the conditional mean of $g(Z,\bm{X})-g_{-Z}(\bm{X})$ given $\bm{X}$ is zero. For $\phi(\bm{a})=\|\bm{a}\|^2$, the two divergences in Equation~\eqref{app:eq:global-bregman} coincide, recovering the factor-of-two identity. For a strictly positive probability-vector output and negative entropy $\phi(\bm{a})=\sum_c a_c\log a_c$, direct substitution gives
\begin{equation}
\mathrm{CPFI}=\mathbb E[\mathrm{KL}(g(Z,\bm{X})\|g_{-Z}(\bm{X}))]
+\mathbb E[\mathrm{KL}(g_{-Z}(\bm{X})\|g(Z,\bm{X}))].
\label{app:eq:global-kl}
\end{equation}
These two expectations need not agree. Consequently, even though local \textcolor{myyellow}{displacement} is the comparison in Equation~\eqref{app:eq:bregman-loco}, global CPFI need not be twice its population mean.

\paragraph{Feature groups.}
All results above also hold for a feature group $\bm{Z}$, with $\bm{Z}'$ drawn jointly conditional on the remaining covariates $\bm{X}$. In the corresponding formulas, replace $Z,z,Z',z'$ by $\bm{Z},\bm{z},\bm{Z}',\bm{z}'$, respectively. The proofs use the conditional distribution of the model output and do not require scalar feature replacements. 

\subsection{Stability Under Conditional-Distribution Error}
\label{app:stability}

Throughout this subsection, suppose
\begin{equation}
|\ell(g(z',\bm{x}),g(z,\bm{x}))|\le C_\ell<\infty
\label{app:eq:bounded-loss}
\end{equation}
for $P_{Z,\bm{X}}$-almost every $(z,\bm{x})$ and replacement values in the supports of the candidate distributions. Bounded outputs of a fixed model suffice for squared loss. Write $\|\cdot\|_2=\|\cdot\|_{L^2(P_{Z,\bm{X}})}$.

\begin{proof}[Proof of Theorem~\ref{\LUCIDMainStabilityLabel}]
For every pair $p,q$ satisfying Equation~\eqref{app:eq:bounded-loss},
\begin{align}
|\LUCIDapp[p](\bm{w})-\LUCIDapp[q](\bm{w})|
&\le C_\ell\int|p(z'\mid \bm{x})-q(z'\mid \bm{x})|\,dz'\notag\\
&=2C_\ell\,\mathrm{TV}\{p(\cdot\mid \bm{x}),q(\cdot\mid \bm{x})\}.
\label{app:eq:pointwise-stability}
\end{align}
The right-hand side depends only on $\bm{x}$. Squaring and integrating with respect to $P_{Z,\bm{X}}$, and applying the main paper distance-control assumption, yields
\begin{align}
\|\LUCIDapp[p]-\LUCIDapp[q]\|_2
&\le2C_\ell\|\mathrm{TV}\{p(\cdot\mid \bm{X}),q(\cdot\mid \bm{X})\}\|_{L^2(P_{\bm{X}})}\notag\\
&\le2C_\ell A\,d(p,q).
\label{app:eq:stability}
\end{align}
Taking $p=\widehat p$ and $q=p^*$ proves the stated $O_P(\epsilon_n)$ rate whenever $d(\widehat p,p^*)=O_P(\epsilon_n)$.
\end{proof}

Local estimation accuracy also controls the error in population-average importance. Since $\mathrm{CPFI}=\mathbb E[\LUCIDapp[p^*](\bm{W})]$, Cauchy--Schwarz gives
\begin{equation}
\begin{aligned}
\left|\mathbb E[\LUCIDapp[p](\bm{W})]-\mathrm{CPFI}\right|
&=\left|\mathbb E\!\left[\LUCIDapp[p](\bm{W})-\LUCIDapp[p^*](\bm{W})\right]\right|\\
&\leq\left(\mathbb E\!\left[\left|\LUCIDapp[p](\bm{W})-\LUCIDapp[p^*](\bm{W})\right|^2\right]\right)^{1/2}\\
&=\|\LUCIDapp[p]-\LUCIDapp[p^*]\|_2.
\end{aligned}
\label{app:eq:global-error-transfer}
\end{equation}
Here, expectations are taken over the true covariate distribution of $\bm{W}$. Thus, the root mean squared error of the local importance function bounds the absolute error of its population average.

When the population average is approximated using $N$ evaluation observations, the total error decomposes as
\begin{equation}
\begin{aligned}
\frac1N\sum_{i=1}^N\LUCIDapp[p](\bm{w}_i)-\mathrm{CPFI}
&=\underbrace{\frac1N\sum_{i=1}^N\LUCIDapp[p](\bm{w}_i)
-\mathbb E[\LUCIDapp[p](\bm{W})]}_{\text{evaluation-sample error}}\\
&\quad+\underbrace{\mathbb E[\LUCIDapp[p](\bm{W})]-\mathrm{CPFI}}_{\text{conditional-density estimation error}}.
\end{aligned}
\label{app:eq:global-error-decomposition}
\end{equation}
Equation~\eqref{app:eq:global-error-transfer} controls the second term. The first is a separate sampling error arising from the finite evaluation sample.

\paragraph{Hellinger distance.}
Define
\begin{equation}
h^2(p,q):=\int\!\int\{\sqrt{p(z'\mid \bm{x})}-\sqrt{q(z'\mid \bm{x})}\}^2\,dz'\,P_{\bm{X}}(d\bm{x}).
\label{app:eq:hellinger}
\end{equation}
The corresponding fixed-context distance omits the integral over $P_{\bm{X}}$. Factoring $p-q=(\sqrt p-\sqrt q)(\sqrt p+\sqrt q)$ and applying Cauchy--Schwarz gives
\[
\int|p(z'\mid \bm{x})-q(z'\mid \bm{x})|\,dz'
\le 2h(p(\cdot\mid \bm{x}),q(\cdot\mid \bm{x})),
\]
because $\int(\sqrt p+\sqrt q)^2\le4$. Thus the distance-control condition holds with $d=h$ and $A=1$, and
\begin{equation}
\|\LUCIDapp[p]-\LUCIDapp[q]\|_2\le2C_\ell h(p,q).
\label{app:eq:hellinger-stability}
\end{equation}
The normalization in Equation~\eqref{app:eq:hellinger} also gives $h^2(p,q)\le2$.

\subsection{Posterior Contraction and Posterior Moment Control}
\label{app:posterior-theory}
We now establish convergence guarantees for DR-BART-based \citep{orlandi2021density} \LCVI under the regularity, prior, and bounded-loss conditions stated below. \textbf{Readers interested only in posterior contraction for \LCVI may proceed directly to Corollary \ref{app:cor:posterior-transfer} and then Section~\ref{app:drbart}, where we verify the conditions needed to apply Equation~\eqref{app:eq:posterior-transfer}. }The remainder of this subsection addresses a further question relevant to our implementation: does the posterior mean conditional density
\[
\bar p(z\mid\bm x)
:=\mathbb E_{\Pi}[p(z\mid\bm x)\mid\mathcal D_n]
\]
inherit the same convergence rate? Posterior contraction alone does not automatically guarantee this rate, because the posterior mean also includes contributions from the posterior tails. We therefore establish quantitative tail bounds and posterior second-moment control, which allow us to retain the rate when passing from posterior draws to the posterior mean density and its induced \LCVI.

Let $\mathcal D_n=\{(Z_i,\bm{X}_i)\}_{i=1}^n$ be i.i.d. from $P_{Z,\bm{X}}$, and let $\Pi(\cdot\mid\mathcal D_n)$ be the posterior on conditional densities. All $O_P$ statements are with respect to the randomness of the data under the true sampling distribution, together with any posterior or Monte Carlo sampling used to compute the estimator.

\begin{lucidappcor}[Posterior contraction transfer]
\label{app:cor:posterior-transfer}
Suppose Equation~\eqref{app:eq:bounded-loss} holds over the posterior support. If, for some $K>0$,
\[
\Pi\{d(p,p^*)>K\epsilon_n\mid\mathcal D_n\}\longrightarrow0
\quad\text{in probability},
\]
and $d$ satisfies the distance-control condition with constant $A$, then
\begin{equation}
\Pi\{\|\LUCIDapp[p]-\LUCIDapp[p^*]\|_2>2C_\ell A K\epsilon_n\mid\mathcal D_n\}
\longrightarrow0
\label{app:eq:posterior-transfer}
\end{equation}
in probability.
\end{lucidappcor}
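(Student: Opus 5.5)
The argument is a deterministic event inclusion combined with monotonicity of the posterior measure. The key input is the Lipschitz-type stability bound already established in the proof of Theorem~\ref{\LUCIDMainStabilityLabel}, namely Equation~\eqref{app:eq:stability}:
\[
\|\LUCIDapp[p]-\LUCIDapp[p^*]\|_2\le 2C_\ell A\,d(p,p^*).
\]
This holds for every candidate density $p$ for which the bounded-loss condition~\eqref{app:eq:bounded-loss} holds. By assumption, that condition holds over the posterior support, so the inequality applies to every $p$ in a set of full posterior probability.

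First, I will show that for any such $p$,
\[
\|\LUCIDapp[p]-\LUCIDapp[p^*]\|_2>2C_\ell AK\epsilon_n
\]
forces $2C_\ell A\,d(p,p^*)>2C_\ell AK\epsilon_n$. If $C_\ell A>0$, dividing gives $d(p,p^*)>K\epsilon_n$. If $C_\ell A=0$, the left-hand event is empty, so the claim is trivial.

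Second, I will intersect with the full-measure posterior support. This gives the event inclusion
\[
\{p:\|\LUCIDapp[p]-\LUCIDapp[p^*]\|_2>2C_\ell AK\epsilon_n\}\subseteq\{p:d(p,p^*)>K\epsilon_n\}\cup N,
\]
where $N$ is a posterior-null set. By monotonicity of $\Pi(\cdot\mid\mathcal D_n)$, for every realization of the data,
\[
\Pi\{\|\LUCIDapp[p]-\LUCIDapp[p^*]\|_2>2C_\ell AK\epsilon_n\mid\mathcal D_n\}\le\Pi\{d(p,p^*)>K\epsilon_n\mid\mathcal D_n\}.
\]

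Third, the right-hand side tends to zero in probability by hypothesis. The left-hand side is a nonnegative random variable dominated pointwise by it, so the left-hand side also tends to zero in probability, which is Equation~\eqref{app:eq:posterior-transfer}.

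The main point needing care is measurability. The map $p\mapsto\|\LUCIDapp[p]-\LUCIDapp[p^*]\|_2$ must be measurable on the space of conditional densities for the posterior probability to be defined. I would handle this by noting that $\LUCIDapp[p](\bm w)$ is an integral of a bounded measurable function against $p$, hence jointly measurable. Alternatively, one can work with outer probability, for which monotonicity still holds and the same conclusion follows.
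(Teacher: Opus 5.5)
Your proposal is correct and follows the paper's proof. Both arguments use the stability bound \eqref{app:eq:stability} on the posterior support to obtain the inclusion $\{\|\mathrm{LUCID}[p]-\mathrm{LUCID}[p^*]\|_2>2C_\ell AK\epsilon_n\}\subseteq\{d(p,p^*)>K\epsilon_n\}$, then bound the posterior probability by the one assumed to vanish. Your remarks on the degenerate case $C_\ell A=0$ and on measurability are harmless refinements that the paper leaves implicit.
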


\begin{proof}
By Equation~\eqref{app:eq:stability} and the distance-control condition,
\[
\|\LUCIDapp[p]-\LUCIDapp[p^*]\|_2
\le 2C_\ell A\,d(p,p^*)
\]
over the posterior support. Hence, an importance error exceeding $2C_\ell A K\epsilon_n$ requires $d(p,p^*)>K\epsilon_n$, so
\[
\Pi\{\|\LUCIDapp[p]-\LUCIDapp[p^*]\|_2>2C_\ell A K\epsilon_n\mid\mathcal D_n\}
\le
\Pi\{d(p,p^*)>K\epsilon_n\mid\mathcal D_n\}.
\]
The right-hand side converges to zero in probability by assumption, proving the claim.
\end{proof}

This conclusion describes concentration of the posterior distribution. To establish the same rate for posterior averages, we next control a posterior second moment. The following lemma makes explicit the additional consequence of the standard sieve-and-testing conditions used in density contraction proofs. Its argument follows the testing framework of \citet{ghosal2000convergence,orlandi2021density}
.

Define the integrated log-likelihood moments by
\begin{equation}
\mathrm{KL}(p^*\|p):=\mathbb E_{P_{Z,\bm X}}\!\left[\log\frac{p^*(Z\mid \bm{X})}{p(Z\mid \bm{X})}\right],
\qquad
V(p^*,p):=\mathbb E_{P_{Z,\bm X}}\!\left[\left\{\log\frac{p^*(Z\mid \bm{X})}{p(Z\mid \bm{X})}\right\}^2\right].
\label{app:eq:kl-moments}
\end{equation}

\begin{lucidapplemma}[Posterior tail and second-moment control]
\label{app:lem:posterior-moments}
Let $\epsilon_n\to0$ and $n\epsilon_n^2\to\infty$. Suppose there are constants $C_1,\ldots,C_5>0$ and deterministic density sieves $\mathcal G_n$ such that, for all sufficiently large $n$,
\begin{align}
&G1: \Pi\{p:\mathrm{KL}(p^*\|p)\le C_1^2\epsilon_n^2,\ V(p^*,p)\le C_1^2\epsilon_n^2\}
\ge e^{-C_2n\epsilon_n^2},\label{app:eq:prior-thickness}\\
&G2: \Pi(\mathcal G_n^c)
\le C_3e^{-(C_2+C_1^2+4)n\epsilon_n^2},\label{app:eq:sieve-tail}\\
&G3:\log\mathcal N(C_5\epsilon_n,\mathcal G_n,h)
\le C_4n\epsilon_n^2.\label{app:eq:sieve-entropy}
\end{align}
Then, for a sufficiently large fixed $K$,
\begin{equation}
P_{Z,\bm{X}}^{\otimes n}\!\left[
\Pi\{h(p,p^*)>K\epsilon_n\mid\mathcal D_n\}>2e^{-n\epsilon_n^2}
\right]
\le\frac{C_1^2}{n\epsilon_n^2}+3e^{-n\epsilon_n^2}
\label{app:eq:posterior-tail}
\end{equation}
for all sufficiently large $n$. If also $\log(1/\epsilon_n)=o(n\epsilon_n^2)$, then
\begin{equation}
\mathbb E_\Pi[h^2(p,p^*)\mid\mathcal D_n]=O_P(\epsilon_n^2).
\label{app:eq:density-second-moment}
\end{equation}
Under Equation~\eqref{app:eq:bounded-loss}, it follows that
\begin{equation}
\mathbb E_\Pi[\|\LUCIDapp[p]-\LUCIDapp[p^*]\|_2^2\mid\mathcal D_n]
=O_P(\epsilon_n^2).
\label{app:eq:importance-second-moment}
\end{equation}
\end{lucidapplemma}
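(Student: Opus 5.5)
The proof follows the Ghosal--Ghosh--van der Vaart testing argument, made quantitative so that the exceptional events have explicit probabilities. I will split the posterior mass of $U_n:=\{h(p,p^*)>K\epsilon_n\}$ as
\[
\Pi(U_n\mid\mathcal D_n)\le \psi_n+\frac{\int_{U_n\cap\mathcal G_n}L_n(p)(1-\psi_n)\,d\Pi(p)+\int_{\mathcal G_n^c}L_n(p)\,d\Pi(p)}{\int L_n(p)\,d\Pi(p)},
\]
where $L_n(p)=\prod_i p(Z_i\mid\bm X_i)/p^*(Z_i\mid\bm X_i)$ and $\psi_n$ is a test to be constructed.

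\textbf{Denominator.} By the standard evidence lower bound (Lemma 8.1 of Ghosal, Ghosh and van der Vaart, via Jensen and Chebyshev applied to the KL neighbourhood in G1),
\[
\int L_n\,d\Pi\ge e^{-(C_2+2C_1^2)n\epsilon_n^2}
\]
outside an event of probability at most $C_1^2/(n\epsilon_n^2)$. This event produces the first term in \eqref{app:eq:posterior-tail}. If needed, I will rescale constants so that the exponent matches the $C_2+C_1^2+4$ appearing in G2.

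\textbf{Tests.} Using the Hellinger entropy bound G3 together with the Le Cam--Birgé construction for i.i.d. conditional densities, I will build $\psi_n$ with $E_{p^*}\psi_n\le e^{-cK^2n\epsilon_n^2}$ and $\sup_{p\in\mathcal G_n\cap U_n}E_p(1-\psi_n)\le e^{-cK^2n\epsilon_n^2}$. Here $E_p$ is the joint law with $\bm X\sim P_{\bm X}$ and $Z\mid\bm X\sim p$, and the Hellinger metric $h$ integrates over $P_{\bm X}$, so this is the i.i.d. joint-density setting. The covering at radius $C_5\epsilon_n$ requires $K\gtrsim C_5$, and the $e^{C_4n\epsilon_n^2}$ union cost is absorbed once $K$ is large.

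\textbf{Numerator.} By Fubini, $E_{p^*}\int_{\mathcal G_n^c}L_n\,d\Pi=\Pi(\mathcal G_n^c)$, and similarly $E_{p^*}\int_{U_n\cap\mathcal G_n}L_n(1-\psi_n)\,d\Pi\le e^{-cK^2n\epsilon_n^2}$. Markov's inequality at threshold $e^{-(C_2+2C_1^2+1)n\epsilon_n^2}$, combined with G2 and large $K$, makes each numerator term exceed its threshold with probability at most $e^{-n\epsilon_n^2}$. Markov applied to $\psi_n\ge e^{-n\epsilon_n^2}$ contributes another $e^{-n\epsilon_n^2}$. On the complement of these events, the posterior mass of $U_n$ is at most $2e^{-n\epsilon_n^2}$, which gives \eqref{app:eq:posterior-tail}.

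\textbf{Main obstacle.} I expect the main difficulty to be the constant bookkeeping in the last step: the margin exactly $+4$ in G2 leaves little slack, so the thresholds and the choice of $K$ must be chosen carefully. If necessary I will adjust the Markov threshold, for example to $e^{-(C_2+C_1^2+2)n\epsilon_n^2}$, by using the sharper form of the evidence bound.

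\textbf{Second moment.} Since $h^2\le 2$,
\[
\mathbb E_\Pi[h^2\mid\mathcal D_n]\le K^2\epsilon_n^2+2\,\Pi(U_n\mid\mathcal D_n).
\]
On the high-probability event the last term is at most $4e^{-n\epsilon_n^2}$. This is $o(\epsilon_n^2)$ because $\log(1/\epsilon_n)=o(n\epsilon_n^2)$, i.e. $e^{-n\epsilon_n^2}\le\epsilon_n^2$ eventually. The exceptional probability tends to zero because $n\epsilon_n^2\to\infty$, which yields \eqref{app:eq:density-second-moment}. Finally, squaring \eqref{app:eq:hellinger-stability} gives $\|\LUCIDapp[p]-\LUCIDapp[p^*]\|_2^2\le4C_\ell^2h^2(p,p^*)$ pointwise on the posterior support. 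Integrating this against the posterior gives \eqref{app:eq:importance-second-moment}.
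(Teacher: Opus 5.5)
Your proposal is correct and follows the paper's proof essentially step for step: a Jensen--Chebyshev lower bound on the evidence over the G1 neighbourhood, Markov's inequality (via the change of measure) on $\int_{\mathcal G_n^c}L_n\,d\Pi$ and on $(1-\psi_n)\int_{\mathcal A_n}L_n\,d\Pi$, and a binary test $\psi_n$ built as the maximum of robust Hellinger tests over the $C_5\epsilon_n$-cover with $K>3C_5$ large. The second-moment and importance steps then use $h^2\le 2$ and $\|\mathrm{LUCID}[p]-\mathrm{LUCID}[p^*]\|_2^2\le 4C_\ell^2h^2(p,p^*)$, also as in the paper.

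One point needs tightening: your fallback is mandatory, not optional. With your denominator $e^{-(C_2+2C_1^2)n\epsilon_n^2}$ and threshold $e^{-(C_2+2C_1^2+1)n\epsilon_n^2}$, the G2 Markov step only gives $C_3e^{-(3-C_1^2)n\epsilon_n^2}$, which fails once $C_1^2\ge 2$. The constants cannot be rescaled either, because the exponent in G2 is tied to the constants of G1. So you must apply Chebyshev with deviation exactly $n\epsilon_n^2$. This gives the denominator bound $e^{-(C_2+C_1^2+1)n\epsilon_n^2}$ off an event of probability $C_1^2/(n\epsilon_n^2)$, and you then use Markov thresholds $e^{-(C_2+C_1^2+2)n\epsilon_n^2}$, exactly as the paper does.
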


\begin{proof}

Write
\[
L_n(p):=\prod_{i=1}^n\frac{p(Z_i\mid\bm{X}_i)}{p^*(Z_i\mid\bm{X}_i)}.
\]

\paragraph{Proof of Equation \ref{app:eq:posterior-tail}}
By Bayes' theorem, the posterior tail probability  $\Pi\{h(p,p^*)>K\epsilon_n\mid\mathcal D_n\}$ can be written as

\begin{align*}
 \Pi\{h(p,p^*)>K\epsilon_n\mid\mathcal D_n\} &= \frac{\int_{h(p,p^*)>K\epsilon_n} \prod_{i=1}^{n} p(Z_i\mid \bm X_i)\,d\Pi(p)}{\int \prod_{i=1}^{n} p(Z_i\mid \bm X_i)\,d\Pi(p)} \\
 &=\frac{\int_{h(p,p^*)>K\epsilon_n} \frac{\prod_{i=1}^{n} p(Z_i\mid \bm X_i)}{\prod_{i=1}^{n} p^*(Z_i\mid \bm X_i)}\,d\Pi(p)}{\int  \frac{\prod_{i=1}^{n} p(Z_i\mid \bm X_i)}{\prod_{i=1}^{n} p^*(Z_i\mid \bm X_i)}\,d\Pi(p)}\\
 &=\frac{\int_{h(p,p^*)>K\epsilon_n} L_n(p)\,d\Pi(p)}{\int  L_n(p)\,d\Pi(p)}
\end{align*}

Our goal is to show that this posterior tail probability $\Pi\{h(p,p^*)>K\epsilon_n\mid\mathcal D_n\}$ is large only on an event of small sampling probability.

\textbf{(i) The denominator $\int L_n(p)\,d\Pi(p)$ is large with high probability}

We consider the denominator $\int L_n(p)\,d\Pi(p)$. Let $\mathcal B_n$ be the KL neighborhood in Equation~\eqref{app:eq:prior-thickness}, and let $\Pi_{\mathcal B_n}$ denote the prior normalized on $\mathcal B_n$, defined by
$$d\,\Pi_{\mathcal{B}_n}(p) := \frac{\mathbf 1_{\{p\in\mathcal{B}_n\}}d\,\Pi(p)}{\Pi(\mathcal B_n)}.$$ 
By G1 (Equation \ref{app:eq:prior-thickness}), every \(p\in\mathcal B_n\) satisfies $\mathrm{KL}(p^*\|p)\le C_1^2\epsilon_n^2,\ V(p^*,p)\le C_1^2\epsilon_n^2$ and $\mathcal B_n$ satisfies $\Pi(\mathcal B_n)\geq e^{-C_2 n \epsilon^2_n}$.

Since the prior mass $\Pi(\mathcal B_n)$ is bounded below, we restrict the integral to $\mathcal B_n$ and express it in terms of the normalized prior $\Pi_{\mathcal B_n}$:
\[
\int L_n(p)\,d\Pi(p)
\ge \int_{\mathcal B_n}L_n(p)\,d\Pi(p)
= \Pi(\mathcal B_n)\int L_n(p)\,d\Pi_{\mathcal B_n}(p).
\]

Taking logarithms and applying Jensen's inequality gives 
\[
\log\int L_n(p)\,d\Pi(p)
\ge \log\Pi(\mathcal B_n)
+\int\log L_n(p)\,d\Pi_{\mathcal B_n}(p).
\]

\textbf{For the first term:} $ \log\Pi(\mathcal B_n)\geq -C_2n\epsilon_n^2$.

\textbf{For the second term:} We first compute the mean and variance of the second term and then use Chebyshev's inequality to show that the \textbf{denominator is not too small with high probability}.
\begin{align*}
    \int\log L_n(p)\,d\Pi_{\mathcal B_n}(p) &= \int\log \prod_{i=1}^n\frac{p(Z_i\mid\bm{X}_i)}{p^*(Z_i\mid\bm{X}_i)}\,d\Pi_{\mathcal B_n}(p)\\
    &=\int\sum_{i=1}^n\log \frac{p(Z_i\mid\bm{X}_i)}{p^*(Z_i\mid\bm{X}_i)}\,d\Pi_{\mathcal B_n}(p)\\
    &=\sum_{i=1}^n\int\log \frac{p(Z_i\mid\bm{X}_i)}{p^*(Z_i\mid\bm{X}_i)}\,d\Pi_{\mathcal B_n}(p)
\end{align*}

Let's denote 
\begin{align*}
    A_i=\int\log \frac{p(Z_i\mid\bm{X}_i)}{p^*(Z_i\mid\bm{X}_i)}\,d\Pi_{\mathcal B_n}(p)
\end{align*}

Since each $A_i$ is a measurable function of $(Z_i,\bm X_i)$, the variables $A_1,\ldots,A_n$ are independent under the true sampling distribution $P_{Z,\bm X}^{\otimes n}$. We have 
\begin{align*}
    \mathbb E_{P_{Z,\bm X}^{\otimes n}}\!\left[\int\log L_n(p)\,d\Pi_{\mathcal B_n}(p)\right] &=\mathbb E_{P_{Z,\bm X}^{\otimes n}}\!\left[\sum_{i=1}^n A_i\right]\\
    &=n\mathbb E_{P_{Z,\bm X}}\!\left[A_1\right]\\
    &=n\mathbb E_{P_{Z,\bm X}}\!\left[\int\log \frac{p(Z_1\mid\bm{X}_1)}{p^*(Z_1\mid\bm{X}_1)}\,d\Pi_{\mathcal B_n}(p)\right]
    \\
    &=n\int\mathbb E_{P_{Z,\bm X}}\!\left[\log \frac{p(Z_1\mid\bm{X}_1)}{p^*(Z_1\mid\bm{X}_1)}\right]\,d\Pi_{\mathcal B_n}(p)\\
    &=-n\int \mathrm{KL}(p^*\|p)\,d\Pi_{\mathcal B_n}(p)\\
    &\geq -n C_1^2\epsilon_n^2\int \,d\Pi_{\mathcal B_n}(p)\\
    &=-n C_1^2\epsilon_n^2
\end{align*}

\begin{align*}
    \operatorname{Var}_{P_{Z,\bm X}^{\otimes n}}\!\left[\int\log L_n(p)\,d\Pi_{\mathcal B_n}(p)\right] &=\operatorname{Var}_{P_{Z,\bm X}^{\otimes n}}\!\left[\sum_{i=1}^n A_i\right]\\
    &=n\operatorname{Var}_{P_{Z,\bm X}}\!\left[A_1\right]\\
    &=n\operatorname{Var}_{P_{Z,\bm X}}\!\left[\int\log \frac{p(Z_1\mid\bm{X}_1)}{p^*(Z_1\mid\bm{X}_1)}\,d\Pi_{\mathcal B_n}(p)\right]
    \\
   &=n \mathbb E_{P_{Z,\bm X}}\!\left[\left(\int\log \frac{p(Z_1\mid\bm{X}_1)}{p^*(Z_1\mid\bm{X}_1)}\,d\Pi_{\mathcal B_n}(p)-\mathbb E_{P_{Z,\bm X}}\!\left[\int\log \frac{p(Z_1\mid\bm{X}_1)}{p^*(Z_1\mid\bm{X}_1)}\,d\Pi_{\mathcal B_n}(p)\right]\right)^2\right]\\
    &=n \mathbb E_{P_{Z,\bm X}}\!\left[\left(\int\log \frac{p(Z_1\mid\bm{X}_1)}{p^*(Z_1\mid\bm{X}_1)}\,d\Pi_{\mathcal B_n}(p)-\int\mathbb E_{P_{Z,\bm X}}\!\left[\log \frac{p(Z_1\mid\bm{X}_1)}{p^*(Z_1\mid\bm{X}_1)}\right]\,d\Pi_{\mathcal B_n}(p)\right)^2\right]\\
    &=n \mathbb E_{P_{Z,\bm X}}\!\left[\left(\int\log \frac{p(Z_1\mid\bm{X}_1)}{p^*(Z_1\mid\bm{X}_1)}-\mathbb E_{P_{Z,\bm X}}\!\left[\log \frac{p(Z_1\mid\bm{X}_1)}{p^*(Z_1\mid\bm{X}_1)}\right]\,d\Pi_{\mathcal B_n}(p)\right)^2\right]\\
    &\leq n \mathbb E_{P_{Z,\bm X}}\!\left[\int\left(\log \frac{p(Z_1\mid\bm{X}_1)}{p^*(Z_1\mid\bm{X}_1)}-\mathbb E_{P_{Z,\bm X}}\!\left[\log \frac{p(Z_1\mid\bm{X}_1)}{p^*(Z_1\mid\bm{X}_1)}\right]\right)^2\,d\Pi_{\mathcal B_n}(p)\right]\\
    &= n \int \mathbb E_{P_{Z,\bm X}}\!\left[\left(\log \frac{p(Z_1\mid\bm{X}_1)}{p^*(Z_1\mid\bm{X}_1)}-\mathbb E_{P_{Z,\bm X}}\!\left[\log \frac{p(Z_1\mid\bm{X}_1)}{p^*(Z_1\mid\bm{X}_1)}\right]\right)^2\right]\,d\Pi_{\mathcal B_n}(p)\\
    &\leq n\int\mathbb E_{P_{Z,\bm X}}\!\left[\left(\log \frac{p(Z_1\mid\bm{X}_1)}{p^*(Z_1\mid\bm{X}_1)}\right)^2\right]\,d\Pi_{\mathcal B_n}(p)\\
    &=n\int V(p^*,p)\,d\Pi_{\mathcal B_n}(p)\\
    &\leq n C_1^2\epsilon_n^2\int \,d\Pi_{\mathcal B_n}(p)\\
    &=n C_1^2\epsilon_n^2
\end{align*}

Thus, the expectation is at least $-C_1^2n\epsilon_n^2$ and the variance is at most $C_1^2n\epsilon_n^2$.

On the event that $\int\log L_n(p)\,d\Pi_{\mathcal B_n}(p)\geq -(C_1^2+1)n\epsilon_n^2$, the preceding bounds imply $\log\int L_n(p)\,d\Pi(p)\geq -(C_1^2+1+C_2)n\epsilon_n^2$. Chebyshev’s inequality therefore yields
 \begin{align*}
     P_{Z,\bm{X}}^{\otimes n}\!\left\{
\int L_n(p)\,d\Pi(p)
<e^{-(C_2+C_1^2+1)n\epsilon_n^2}
\right\}&=P_{Z,\bm{X}}^{\otimes n}\!\left\{
\log \int L_n(p)\,d\Pi(p)
<{-(C_2+C_1^2+1)n\epsilon_n^2}
\right\}\\
&\leq P_{Z,\bm{X}}^{\otimes n}\!\left\{
\int\log L_n(p)\,d\Pi_{\mathcal B_n}(p)< -(C_1^2+1)n\epsilon_n^2
\right\}\\
&= P_{Z,\bm{X}}^{\otimes n}\!\left\{
\int\log L_n(p)\,d\Pi_{\mathcal B_n}(p)+C_1^2n\epsilon_n^2< -n\epsilon_n^2
\right\}\\
&\leq P_{Z,\bm{X}}^{\otimes n}\!\left\{
\int\log L_n(p)\,d\Pi_{\mathcal B_n}(p)-\mathbb E_{P_{Z,\bm X}^{\otimes n}}\!\left[\int\log L_n(p)\,d\Pi_{\mathcal B_n}(p)\right]< -n\epsilon_n^2
\right\}\\
&\leq P_{Z,\bm{X}}^{\otimes n}\!\left\{
|\int\log L_n(p)\,d\Pi_{\mathcal B_n}(p)-\mathbb E_{P_{Z,\bm X}^{\otimes n}}\!\left[\int\log L_n(p)\,d\Pi_{\mathcal B_n}(p)\right]| > n\epsilon_n^2
\right\}\\
&\le\frac{\operatorname{Var}_{P_{Z,\bm X}^{\otimes n}}\!\left[\int\log L_n(p)\,d\Pi_{\mathcal B_n}(p)\right] }{(n\epsilon_n^2)^2}
\\&\le\frac{C_1^2}{n\epsilon_n^2}.
 \end{align*}
 
\color{red}
Combining the above bounds, with probability at least
\[
1-\frac{C_1^2}{n\epsilon_n^2},
\]
the denominator is bounded below by
\[
e^{-(C_2+C_1^2+1)n\epsilon_n^2}.
\]
\color{black}

\textbf{(ii) The numerator $\int_{h(p,p^*)>K\epsilon_n} L_n(p)\,d\Pi(p)$ is large only with low probability}

We use deterministic density sieves \(\mathcal G_n\) satisfying G2 (Equation \ref{app:eq:sieve-tail}) to control $\Pi(\mathcal G_n^c)
\le C_3e^{-(C_2+C_1^2+4)n\epsilon_n^2}$. The main difficulty is to control the contribution from $\mathcal A_n :=\{h(p,p^*)>K\epsilon_n\}\cap\mathcal G_n$. More precisely,
\begin{align*}
    \int_{h(p,p^*)>K\epsilon_n} L_n(p)\,d\Pi(p)\leq\int_{\mathcal A_n} L_n(p)\,d\Pi(p)+\int_{\mathcal G_n^c} L_n(p)\,d\Pi(p)
\end{align*}

\textbf{For the second term,} the argument is simpler. Since we will ultimately apply Markov inequality to bound the probability, we first bound $\mathbb E_{P_{Z,\bm X}^{\otimes n}}\!\left[\int_{\mathcal G_n^c} L_n(p)\,d\Pi(p)\right]$. For later use, we establish the following more general bound. Let \(\mathcal S_n :=(\mathcal Z\times\mathcal X)^n\) denote the sample space for \(\mathcal D_n\).
\[
S_n^*:=\left\{\mathcal D_n\in\mathcal S_n:\prod_{i=1}^n p^*(z_i\mid\bm x_i)>0\right\}.
\] 
For any nonnegative measurable function \(F\) of \(\mathcal D_n\),
\begin{align}
  \mathbb E_{P_{Z,\bm X}^{\otimes n}}\!\left[\int_{\mathcal G_n^c}F  (\mathcal D_n)L_n(p)\,d\Pi(p)\right]&=\int_{\mathcal G_n^c}\mathbb E_{P_{Z,\bm X}^{\otimes n}}\!\left[F (\mathcal D_n) L_n(p)\right]\,d\Pi(p)\\
  &=\int_{\mathcal G_n^c}\left[\int_{S_n^*} F(\mathcal D_n)\prod_{i=1}^n\frac{p(z_i\mid\bm x_i)}{p^*(z_i\mid\bm x_i)}\,dP_{Z,\bm X}^{\otimes n}(\mathcal D_n)\right]d\Pi(p)\\
   &=\int_{\mathcal G_n^c}\left[
\int_{S_n^*} F(\mathcal D_n)\prod_{i=1}^n p(z_i\mid\bm x_i)
\,d(\mu\otimes P_{\bm X})^{\otimes n}(\mathcal D_n)
\right]d\Pi(p)\\
 &\le\int_{\mathcal G_n^c}\left[
\int_{\mathcal S_n} F(\mathcal D_n)\prod_{i=1}^n p(z_i\mid\bm x_i)
\,d(\mu\otimes P_{\bm X})^{\otimes n}(\mathcal D_n)
\right]d\Pi(p)\\
&=\int_{\mathcal G_n^c}\mathbb E_{P_p^{\otimes n}}[F(\mathcal D_n)]\,d\Pi(p),
\label{app:eq:integrated-change-of-measure}
\end{align}
where $P_p^{\otimes n}$ is the joint law of $n$ independent observations from $P_p$, defined by
\[
dP_p(z,\bm x):=p(z\mid\bm x)\,d\mu(z)\,dP_{\bm X}(\bm x),
\]
with $\mu$ the dominating measure for $Z$ and $P_{\bm X}$ the fixed true marginal distribution of $\bm X$.

Taking \(F\equiv1\), we obtain
\begin{align*}
  \mathbb E_{P_{Z,\bm X}^{\otimes n}}\!\left[\int_{\mathcal G_n^c}L_n(p)\,d\Pi(p)\right]\leq\int_{\mathcal G_n^c}\mathbb E_{P_p^{\otimes n}}[1]\,d\Pi(p)=\Pi(\mathcal{G}_n^c)
\end{align*}
\color{red}
Thus, by Markov's inequality, we have
\begin{align*}
    P_{Z,\bm X}^{\otimes n}\!\left[\int_{\mathcal G_n^c}L_n(p)\,d\Pi(p)>e^{-(C_2+C_1^2+2)n\epsilon_n^2}\right]\leq \frac{C_3e^{-(C_2+C_1^2+4)n\epsilon_n^2}}{e^{-(C_2+C_1^2+2)n\epsilon_n^2}}=C_3e^{-2n\epsilon_n^2}\le e^{-n\epsilon_n^2}
\end{align*}
\color{black}
The last inequality holds for all sufficiently large \(n\), since \(n\epsilon_n^2\to\infty\).

\textbf{For the first term,} we use the standard Hellinger testing construction of \citet{ghosal2000convergence}. We construct a test that distinguishes the true distribution from all candidate distributions indexed by $\mathcal A_n$. More specifically, we test
\[
H_0:\mathcal D_n\sim P_{Z,\bm X}^{\otimes n}
\qquad\text{against}\qquad
H_1:\mathcal D_n\sim P_p^{\otimes n}
\text{ for some }p\in\mathcal A_n.
\] 
We seek a test that equals \(1\) when \(H_0\) is rejected and \(0\) otherwise.

By Equation~\eqref{app:eq:sieve-entropy}, there exists a cover of $\mathcal G_n$ by $N_n$ Hellinger balls of radius $C_5\epsilon_n$, with centers $q_1,\ldots,q_{N_n}$, such that
\[
N_n\le e^{C_4n\epsilon_n^2}.
\]
Retain only the balls that intersect $\mathcal A_n$, and denote their index set by $J_n$. For each $j\in J_n$, choose $p_j\in\mathcal A_n$ in the corresponding ball. The triangle inequality gives
\[
h(q_j,p^*)
\ge h(p_j,p^*)-h(p_j,q_j)
>(K-C_5)\epsilon_n.
\]
In particular, if $K>3C_5$, then
\[
C_5\epsilon_n<\frac12h(q_j,p^*).
\]

The robust Hellinger testing result in
\citet[Section~7, Equations~(7.1)--(7.2)]{ghosal2000convergence}
therefore provides, for each $j\in J_n$, a binary test $\phi_{n,j}$ and a universal constant $c>0$ such that
\begin{align*}
\mathbb E_{P_{Z,\bm X}^{\otimes n}}[\phi_{n,j}]
&\le e^{-cn h^2(q_j,p^*)}
\le e^{-c(K-C_5)^2n\epsilon_n^2},\\
\sup_{p:\,h(p,q_j)\le C_5\epsilon_n}
\mathbb E_{P_p^{\otimes n}}[1-\phi_{n,j}]
&\le e^{-cn h^2(q_j,p^*)}
\le e^{-c(K-C_5)^2n\epsilon_n^2}.
\end{align*}

Define $\psi_n:=\max_{j\in J_n}\phi_{n,j}$, with $\psi_n:=0$ if $J_n$ is empty. By the union bound,
\begin{align*}
\mathbb E_{P_{Z,\bm X}^{\otimes n}}[\psi_n]
&\le\sum_{j\in J_n}\mathbb E_{P_{Z,\bm X}^{\otimes n}}[\phi_{n,j}]\\
&\le N_n e^{-c(K-C_5)^2n\epsilon_n^2}\\
&\le e^{-\{c(K-C_5)^2-C_4\}n\epsilon_n^2}.
\end{align*}
Moreover, every $p\in\mathcal A_n$ belongs to at least one retained ball, say the ball indexed by $j$. Since $\psi_n\ge\phi_{n,j}$,
\[
\mathbb E_{P_p^{\otimes n}}[1-\psi_n]
\le\mathbb E_{P_p^{\otimes n}}[1-\phi_{n,j}]
\le e^{-c(K-C_5)^2n\epsilon_n^2}.
\]
Choose a fixed $K>3C_5$ sufficiently large that
\[
c(K-C_5)^2
\ge\max\{C_4+1,\ C_2+C_1^2+4\}.
\]
Then
\begin{equation}
\mathbb E_{P_{Z,\bm X}^{\otimes n}}[\psi_n]\le e^{-n\epsilon_n^2},
\qquad
\sup_{p\in\mathcal A_n}
\mathbb E_{P_p^{\otimes n}}[1-\psi_n]
\le e^{-(C_2+C_1^2+4)n\epsilon_n^2}.
\label{app:eq:tests}
\end{equation}

We now apply the same change-of-measure argument in Equation~\eqref{app:eq:integrated-change-of-measure}, replacing the integration set $\mathcal G_n^c$ by $\mathcal A_n$ and taking $F(\mathcal D_n)=1-\psi_n(\mathcal D_n)\ge0$. Together with Equation~\eqref{app:eq:tests}, this yields
\begin{align*}
\mathbb E_{P_{Z,\bm X}^{\otimes n}}\!\left[
(1-\psi_n)\int_{\mathcal A_n}L_n(p)\,d\Pi(p)
\right]
&\le\int_{\mathcal A_n}
\mathbb E_{P_p^{\otimes n}}[1-\psi_n]\,d\Pi(p)\\
&\le e^{-(C_2+C_1^2+4)n\epsilon_n^2}\Pi(\mathcal A_n)\\
&\le e^{-(C_2+C_1^2+4)n\epsilon_n^2}.
\end{align*}
Consequently, Markov's inequality gives
\begin{equation}
P_{Z,\bm X}^{\otimes n}\!\left\{
(1-\psi_n)\int_{\mathcal A_n}L_n(p)\,d\Pi(p)
>e^{-(C_2+C_1^2+2)n\epsilon_n^2}
\right\}
\le e^{-2n\epsilon_n^2}.
\label{app:eq:first-term-bound}
\end{equation}
Since $\psi_n$ is binary, Equation~\eqref{app:eq:tests} also gives
$P_{Z,\bm X}^{\otimes n}(\psi_n=1)\le e^{-n\epsilon_n^2}$.
\color{red}
Thus, with probability at least
$1-e^{-n\epsilon_n^2}-e^{-2n\epsilon_n^2}$, we have $\psi_n=0$ and
\[
\int_{\mathcal A_n}L_n(p)\,d\Pi(p)
\le e^{-(C_2+C_1^2+2)n\epsilon_n^2}.
\]
\color{black}

\textbf{(iii) Combining the preceding bounds.}
Recall that
\[
\int_{\{p:h(p,p^*)>K\epsilon_n\}}L_n(p)\,d\Pi(p)
\le \int_{\mathcal A_n}L_n(p)\,d\Pi(p)
+\int_{\mathcal G_n^c}L_n(p)\,d\Pi(p).
\]
By the union bound, the denominator lower bound and both numerator upper bounds hold simultaneously with probability at least 
\[
1-\left\{
\frac{C_1^2}{n\epsilon_n^2}
+2e^{-n\epsilon_n^2}
+e^{-2n\epsilon_n^2}
\right\}
\]
under $P_{Z,\bm X}^{\otimes n}$, for all sufficiently large $n$. On this event,
\begin{align*}
\Pi\{h(p,p^*)>K\epsilon_n\mid\mathcal D_n\}
&=\frac{\int_{\{p:h(p,p^*)>K\epsilon_n\}}L_n(p)\,d\Pi(p)}
{\int L_n(p)\,d\Pi(p)}\\
&\le\frac{\int_{\mathcal A_n}L_n(p)\,d\Pi(p)
+\int_{\mathcal G_n^c}L_n(p)\,d\Pi(p)}
{\int L_n(p)\,d\Pi(p)}\\
&\le\frac{2e^{-(C_2+C_1^2+2)n\epsilon_n^2}}
{e^{-(C_2+C_1^2+1)n\epsilon_n^2}}\\
&=2e^{-n\epsilon_n^2}.
\end{align*}
Since $e^{-2n\epsilon_n^2}\le e^{-n\epsilon_n^2}$, we conclude that
\[
P_{Z,\bm X}^{\otimes n}\!\left\{
\Pi\{h(p,p^*)>K\epsilon_n\mid\mathcal D_n\}
>2e^{-n\epsilon_n^2}
\right\}
\le \frac{C_1^2}{n\epsilon_n^2}+3e^{-n\epsilon_n^2}.
\]

\paragraph{Proof of Equation \ref{app:eq:density-second-moment}}

Since \(h^2\le2\), we have the deterministic posterior bound
\[
\mathbb E_\Pi[h^2(p,p^*)\mid\mathcal D_n]
\le K^2\epsilon_n^2
+2\Pi\{h(p,p^*)>K\epsilon_n\mid\mathcal D_n\}.
\]
On the high-probability event established above,
\[
\mathbb E_\Pi[h^2(p,p^*)\mid\mathcal D_n]
\le K^2\epsilon_n^2+4e^{-n\epsilon_n^2}.
\]
The additional condition $\log(1/\epsilon_n)=o(n\epsilon_n^2)$ implies
\[
\frac{2\log(1/\epsilon_n)}{n\epsilon_n^2}\longrightarrow0.
\]
Thus, 
\[
-n\epsilon_n^2+2\log(1/\epsilon_n)
=-n\epsilon_n^2
\left(1-\frac{2\log(1/\epsilon_n)}{n\epsilon_n^2}\right)
\longrightarrow-\infty,
\]
Therefore, we get
\[
\frac{e^{-n\epsilon_n^2}}{\epsilon_n^2}
=\exp\{-n\epsilon_n^2+2\log(1/\epsilon_n)\}\longrightarrow0.
\]
Thus, for all sufficiently large $n$, we have
\[
4e^{-n\epsilon_n^2}\le\epsilon_n^2.
\]
Hence, we can build
\[
P_{Z,\bm X}^{\otimes n}\!\left\{
\mathbb E_\Pi[h^2(p,p^*)\mid\mathcal D_n]
>(K^2+1)\epsilon_n^2
\right\}
\le \frac{C_1^2}{n\epsilon_n^2}+3e^{-n\epsilon_n^2}
\longrightarrow0.
\]
Consequently,
\[
\mathbb E_\Pi[h^2(p,p^*)\mid\mathcal D_n]
=O_P(\epsilon_n^2),
\]
where $O_P$ is with respect to the true sampling law
$P_{Z,\bm X}^{\otimes n}$.

\paragraph{Proof of Equation \ref{app:eq:importance-second-moment}}

\[
\mathbb E_\Pi[\|\mathrm{LUCID}[p]-\mathrm{LUCID}[p^*]\|_2^2\mid\mathcal D_n]
\leq4C_\ell^2\mathbb E_\Pi[h^2(p,p^*)\mid\mathcal D_n]
=O_P(\epsilon_n^2).
\]

\end{proof}

\textbf{Note:} mere convergence of a posterior tail probability to zero, without a quantitative bound such as Equation~\eqref{app:eq:posterior-tail}, is insufficient to infer the same rate for a posterior mean.

\subsection{DR-BART Specialization}
\label{app:drbart}

We apply Conditions F and P1--P6 of \citet[Section~4]{orlandi2021density} to response $Z$ and conditioning covariates $\bm{X}$. These conditions are adopted in full. They impose smoothness and sparsity of the log conditional density, restrictions on the true conditional quantile function, and specified support-selection, tree, leaf, scale, and split-point priors, including an approximation condition. They are stronger than choosing the default tuning parameters of a density estimator.

\begin{lucidappcor}[DR-BART contraction and posterior moments]
\label{app:cor:drbart}
Under these conditions, with $k$ conditioning coordinates, let
\begin{equation}
\epsilon_n=
\left(\frac{n}{\log n}\right)^{-\frac{\alpha}{\alpha+1}\frac{\beta}{2\beta+d_0}}
+\sqrt{\frac{d_0\log(k+1)}{n}},
\qquad \beta=\min\{\alpha,1\}.
\label{app:eq:drbart-rate}
\end{equation}
Here $0<\alpha\le2$ is the smoothness index and $d_0$ counts active coordinates including the response. The density posterior contracts at this rate in integrated Hellinger distance. Its proof conditions also yield Equation~\eqref{app:eq:density-second-moment}. If Equation~\eqref{app:eq:bounded-loss} holds, the \LCVI\ posterior satisfies both Equations~\eqref{app:eq:posterior-transfer} and \eqref{app:eq:importance-second-moment}.
\end{lucidappcor}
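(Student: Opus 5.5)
The proof is a reduction: import the DR-BART contraction theorem, read off G1--G3 from its proof, and chain the results already established. \textbf{First step:} invoke \citet[Theorem~2]{orlandi2021density} under Conditions F and P1--P6, applied with response $Z$ and conditioning covariates $\bm X$. Its conclusion is that the posterior on conditional densities contracts in integrated Hellinger distance $h$ at the rate $\epsilon_n$ in Equation~\eqref{app:eq:drbart-rate}. Here $d_0$ counts the active coordinates including the response, and $k$ enters only through the $\sqrt{d_0\log(k+1)/n}$ selection term. Since the Hellinger paragraph of Section~\ref{app:stability} shows that $d=h$ satisfies the distance-control condition with $A=1$, Corollary~\ref{app:cor:posterior-transfer} immediately gives Equation~\eqref{app:eq:posterior-transfer} with constant $2C_\ell K$.

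\textbf{Second step:} obtain the second-moment statement by verifying the hypotheses of Lemma~\ref{app:lem:posterior-moments}, rather than using the contraction conclusion itself. The DR-BART contraction proof follows the Ghosal--Ghosh--van der Vaart template, so it exhibits three ingredients.
\begin{itemize}
\item A prior-thickness bound on a KL/$V$ neighbourhood of radius $\asymp\epsilon_n$, with mass at least $e^{-C_2n\epsilon_n^2}$. This is G1, obtained by rescaling $C_1$.
\item Sieves $\mathcal G_n$ built from bounded tree counts, depths, leaf magnitudes and active-variable sets, together with a Hellinger entropy bound $\log\mathcal N(C_5\epsilon_n,\mathcal G_n,h)\le C_4 n\epsilon_n^2$. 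This is G3.
\item A complement-mass bound on $\Pi(\mathcal G_n^c)$.
\end{itemize}
I would then check the rate conditions. $n\epsilon_n^2\to\infty$ holds because both summands of $\epsilon_n$ satisfy it. $\log(1/\epsilon_n)=o(n\epsilon_n^2)$ holds since $\log(1/\epsilon_n)=O(\log n)$, while $n\epsilon_n^2\ge n^{1-2\alpha\beta/((\alpha+1)(2\beta+d_0))}(\log n)^{c}$ grows polynomially.

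The main obstacle is the complement-mass condition G2. Lemma~\ref{app:lem:posterior-moments} requires the specific exponent $C_2+C_1^2+4$, whereas the usual contraction proof only needs $\Pi(\mathcal G_n^c)\le e^{-(C_2+4)n\epsilon_n^2}$ or similar. My plan is to note that in the DR-BART sieve construction the sieve bounds (numbers of trees and leaves, leaf and scale truncation levels) are free multiples $L\,n\epsilon_n^2/\log n$. The priors (P-conditions on tree size, leaf tails, and scale) then give $\Pi(\mathcal G_n^c)\le e^{-c(L)n\epsilon_n^2}$ with $c(L)\to\infty$ as $L\to\infty$. Enlarging $L$ makes the exponent exceed $C_2+C_1^2+4$, at the cost of inflating $C_4$ in G3 by a constant factor. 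That cost is harmless, because Lemma~\ref{app:lem:posterior-moments} lets $K$ be chosen after all of $C_1,\dots,C_5$. I would state explicitly that this tunability is part of adopting the conditions "in full".

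\textbf{Final step:} with G1--G3 verified, Lemma~\ref{app:lem:posterior-moments} yields Equation~\eqref{app:eq:density-second-moment}. Under Equation~\eqref{app:eq:bounded-loss} it also yields Equation~\eqref{app:eq:importance-second-moment}, via the pointwise bound $\|\LUCIDapp[p]-\LUCIDapp[p^*]\|_2\le2C_\ell h(p,p^*)$ from Equation~\eqref{app:eq:hellinger-stability}, which completes the corollary.
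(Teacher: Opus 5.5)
Your proposal is correct and follows essentially the same route as the paper: contraction from Theorem~2 of \citet{orlandi2021density}, transfer to \LCVI\ via Corollary~\ref{app:cor:posterior-transfer} with $d=h$ and $A=1$, and verification of G1--G3 for Lemma~\ref{app:lem:posterior-moments}, where G2 is secured exactly as the paper does it, by enlarging the free sieve constant (the paper's $\kappa$) until the complement exponent exceeds $C_2+C_1^2+4$, letting $C_4,C_5$ depend on that choice, and choosing $K$ last. One small slip: $n\epsilon_n^2\to\infty$ does not hold because ``both summands'' satisfy it, since $n\cdot d_0\log(k+1)/n=d_0\log(k+1)$ stays bounded for fixed $k$; it follows from the first summand alone, because $\epsilon_n$ dominates it and so $n\epsilon_n^2\ge n^{1-2a}(\log n)^{2a}$ with $a=\frac{\alpha}{\alpha+1}\frac{\beta}{2\beta+d_0}\le 2/9$.
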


\begin{proof}
The density contraction statement follows from
\citet[Theorem~2]{orlandi2021density}. We verify that the bounds in its proof satisfy the assumptions of Lemma~\ref{app:lem:posterior-moments}.

First, condition G1, verified in Section~8.3 of the cited paper, gives constants $C_1,C_2>0$ such that
\[
\Pi\{p:\mathrm{KL}(p^*\|p)\le C_1^2\epsilon_n^2,\ 
V(p^*,p)\le C_1^2\epsilon_n^2\}
\ge e^{-C_2n\epsilon_n^2}.
\]
This is Equation~\eqref{app:eq:prior-thickness}.

Second, the proof of G2 constructs sieves $\mathcal G_{n,\kappa}$ indexed by a sufficiently large fixed constant $\kappa$ and establishes
\[
\Pi(\mathcal G_{n,\kappa}^c)
\le \exp(-C_\mathcal{G}\kappa n\epsilon_n^2),
\]
where $C_\mathcal{G}>0$ depends only on the prior. Choose
\[
\kappa>\frac{C_2+C_1^2+4}{C_\mathcal{G}}
\]
and fix $\mathcal G_n=\mathcal G_{n,\kappa}$. Then
\[
\Pi(\mathcal G_n^c)
\le e^{-(C_2+C_1^2+4)n\epsilon_n^2},
\]
which verifies Equation~\eqref{app:eq:sieve-tail} with $C_3=1$. This choice changes only the sieve used in the proof, not the prior.

Third, the covering construction establishing G3 remains valid for this fixed $\kappa$. It gives finite constants $C_4,C_5>0$, which may depend on $\kappa$, such that
\[
\log\mathcal N(C_5\epsilon_n,\mathcal G_n,h)
\le C_4n\epsilon_n^2.
\]
This is Equation~\eqref{app:eq:sieve-entropy}. 

It remains to check the growth conditions. Assume $\epsilon_n\to0$, as stipulated in the corollary, and write
\[
a=\frac{\alpha}{\alpha+1}\frac{\beta}{2\beta+d_0},
\qquad \beta=\min\{\alpha,1\}.
\]
Since $0<\alpha\le2$ and $d_0\ge1$,
\[
0<a\le\frac23\cdot\frac13=\frac29.
\]
For all sufficiently large $n$,
\[
\epsilon_n\ge(n/\log n)^{-a}
\ge(n/\log n)^{-2/9},
\]
and hence
\[
n\epsilon_n^2
\ge n^{5/9}(\log n)^{4/9}\longrightarrow\infty.
\]
Moreover, once $\epsilon_n<1$,
\[
0\le\log(1/\epsilon_n)
\le\frac29\log(n/\log n)
\le\frac29\log n
=o(n\epsilon_n^2).
\]
Lemma~\ref{app:lem:posterior-moments} therefore yields
Equation~\eqref{app:eq:density-second-moment}. Under
Equation~\eqref{app:eq:bounded-loss}, it also gives
Equation~\eqref{app:eq:importance-second-moment}.
Finally, the density contraction result, together with
Corollary~\ref{app:cor:posterior-transfer} and
Equation~\eqref{app:eq:hellinger-stability}, establishes the stated
contraction of the \LCVI\ posterior.
\end{proof}

\subsection{Convergence Rate of Posterior Averaging}
\label{app:posterior-averaging}

The linear dependence of \LCVI\ on the replacement distribution allows posterior averaging without explicitly constructing an average density. For analysis, define
\begin{equation}
\overline p(z'\mid \bm{x}):=\mathbb E_\Pi[p(z'\mid \bm{x})\mid\mathcal D_n].
\label{app:eq:mixture-densities}
\end{equation}
This is a conceptual replacement distribution used for the analysis; it need not be fitted or evaluated explicitly.

\begin{lucidappprop}[Posterior averaging and its rate]
\label{app:prop:posterior-averaging}
Let $\overline p(z\mid\bm{x}):=\mathbb E_\Pi[p(z\mid\bm{x})\mid\mathcal D_n]$ denote the posterior mean conditional density.

If Equation~\eqref{app:eq:density-second-moment} holds, then
\begin{equation}
h(\overline p,p^*)=O_P(\epsilon_n).
\label{app:eq:posterior-mean-density-rate}
\end{equation}

Under Equation~\eqref{app:eq:bounded-loss},
\begin{equation}
\LUCIDapp[\overline p](\bm{w})
=\mathbb E_\Pi[\LUCIDapp[p](\bm{w})\mid\mathcal D_n].
\label{app:eq:posterior-mean}
\end{equation}
If Equation~\eqref{app:eq:importance-second-moment} also holds, then
\begin{equation}
\|\LUCIDapp[\overline p]-\LUCIDapp[p^*]\|_2=O_P(\epsilon_n).
\label{app:eq:posterior-mean-rate}
\end{equation}
\end{lucidappprop}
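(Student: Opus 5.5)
Three claims need proof: the Hellinger rate for $\overline p$, the exchange identity, and the \LCVI rate for $\overline p$. Each follows from convexity or linearity in the density together with Equation~\eqref{app:eq:density-second-moment} or Equation~\eqref{app:eq:importance-second-moment}.

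\emph{Hellinger rate.} The key observation is that $q\mapsto h^2(q,p^*)$ is convex along linear mixtures of conditional densities. Indeed, for fixed $\bm x$,
\[
h^2(q(\cdot\mid\bm x),p^*(\cdot\mid\bm x))=2-2\int\sqrt{q(z'\mid\bm x)\,p^*(z'\mid\bm x)}\,dz',
\]
and $q\mapsto\int\sqrt{q\,p^*}$ is concave, since $t\mapsto\sqrt t$ is concave. Integrating over $P_{\bm X}$ preserves convexity. I then apply Jensen's inequality to the posterior average, in the form
\[
\int\!\!\int\sqrt{\overline p\,p^*}\ \ge\ \mathbb E_\Pi\!\left[\int\!\!\int\sqrt{p\,p^*}\ \Big|\ \mathcal D_n\right].
\]
This pointwise inequality follows from concavity of the square root; Fubini--Tonelli justifies it because all integrands are nonnegative. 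The result is $h^2(\overline p,p^*)\le\mathbb E_\Pi[h^2(p,p^*)\mid\mathcal D_n]=O_P(\epsilon_n^2)$, and taking square roots gives Equation~\eqref{app:eq:posterior-mean-density-rate}.

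\emph{Exchange identity.} I would write $\LUCIDapp[\overline p](\bm w)=\int\ell(g(z',\bm x),g(z,\bm x))\,\mathbb E_\Pi[p(z'\mid\bm x)\mid\mathcal D_n]\,dz'$. The posterior expectation and the $z'$-integral can be swapped by Fubini. This is justified because the loss is bounded by $C_\ell$ on the posterior support and $\int p\,d\mu=1$, so the integrand is dominated by $C_\ell\,p(z'\mid\bm x)$, which has total posterior--$\mu$ integral $1$. A measurability caveat remains: $(p,z')\mapsto p(z'\mid\bm x)$ must be jointly measurable, which I would take as part of the standing posterior setup.

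\emph{\LCVI rate.} There are two routes, and I would present the first and mention the second as a check. First, the stability bound \eqref{app:eq:hellinger-stability} applied to $(\overline p,p^*)$ gives $\|\LUCIDapp[\overline p]-\LUCIDapp[p^*]\|_2\le 2C_\ell\,h(\overline p,p^*)=O_P(\epsilon_n)$. This requires the bounded-loss condition over the support of $\overline p$, which is the union of posterior supports. Second, using the exchange identity, $\LUCIDapp[\overline p]-\LUCIDapp[p^*]=\mathbb E_\Pi[\LUCIDapp[p]-\LUCIDapp[p^*]\mid\mathcal D_n]$ pointwise in $\bm w$. Jensen's inequality, or Minkowski's integral inequality for the $L^2(P_{Z,\bm X})$ norm, then yields
\[
\|\LUCIDapp[\overline p]-\LUCIDapp[p^*]\|_2^2\le\mathbb E_\Pi[\|\LUCIDapp[p]-\LUCIDapp[p^*]\|_2^2\mid\mathcal D_n]=O_P(\epsilon_n^2)
\]
by Equation~\eqref{app:eq:importance-second-moment}.

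The main obstacle is not analytic but the measure-theoretic bookkeeping: joint measurability of the posterior density map and justifying Fubini between $\Pi(\cdot\mid\mathcal D_n)$, $\mu$, and $P_{\bm X}$. Once that is granted, the rest is convexity combined with the second-moment control already established. Posterior contraction alone would not suffice here, as the Note after Lemma~\ref{app:lem:posterior-moments} emphasizes.
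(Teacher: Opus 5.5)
Your first two steps match the paper's proof. The Hellinger rate comes from convexity of $h^2(\cdot,p^*)$, via concavity of the square root together with Jensen and Tonelli. The exchange identity comes from Fubini with the $C_\ell\,p$ domination. The paper additionally records that $\overline p(\cdot\mid\bm x)$ integrates to one by Tonelli, which you need implicitly before writing $h^2(\overline p,p^*)=2-2\int\!\!\int\sqrt{\overline p\,p^*}$.

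For the \LCVI rate, the paper uses only your second route. The exchange identity writes $\LUCIDapp[\overline p]-\LUCIDapp[p^*]$ as a posterior average, and Jensen in $L^2(P_{Z,\bm X})$ combined with Equation~\eqref{app:eq:importance-second-moment} finishes the argument.

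Your primary route instead skips the exchange identity. It feeds the already-proved bound $h(\overline p,p^*)=O_P(\epsilon_n)$ into the deterministic inequality \eqref{app:eq:hellinger-stability}. This route is valid:
\begin{itemize}
\item Bounded loss transfers to $\overline p$. Let $B$ be the set of replacement values where $|\ell|>C_\ell$; then $\int_B\overline p\,d\mu=\mathbb E_\Pi[\int_B p\,d\mu\mid\mathcal D_n]=0$.
\item It gives the same constant $2C_\ell$, since \eqref{app:eq:importance-second-moment} was itself derived from $4C_\ell^2\,\mathbb E_\Pi[h^2(p,p^*)\mid\mathcal D_n]$.
\end{itemize}

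The trade-off lies in hypotheses and modularity. Your route rests on the density-level bound \eqref{app:eq:density-second-moment}, whereas the statement attaches the third claim to \eqref{app:eq:importance-second-moment}. The paper's route uses exactly that hypothesis. It also does not care which distance $d$ produced the second-moment control: it needs neither convexity of $d^2$ in its first argument nor a rate for $\overline p$ itself in $d$. Your route does need such a density-level rate for $\overline p$.

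Since you include the paper's argument as your check, nothing is missing.
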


\begin{proof}
Tonelli's theorem ensures that $\overline p(\cdot\mid\bm{x})$ integrates to one. Convexity of squared Hellinger distance in its first argument gives
\[
h^2(\overline p,p^*)
\le\mathbb E_\Pi[h^2(p,p^*)\mid\mathcal D_n]
=O_P(\epsilon_n^2),
\]
where the last equality follows from Equation~\eqref{app:eq:density-second-moment}. Taking square roots proves Equation~\eqref{app:eq:posterior-mean-density-rate}.

Under Equation~\eqref{app:eq:bounded-loss}, Fubini's theorem and Equation~\eqref{app:eq:functional} yield
\begin{align*}
\LUCIDapp[\overline p](\bm{w})
&=\int \ell(g(z',\bm{x}),g(z,\bm{x}))\,\overline p(z'\mid\bm{x})\,dz'\\
&=\mathbb E_\Pi\left[\int \ell(g(z',\bm{x}),g(z,\bm{x}))\,p(z'\mid\bm{x})\,dz'\,\middle|\,\mathcal D_n\right]\\
&=\mathbb E_\Pi[\LUCIDapp[p](\bm{w})\mid\mathcal D_n],
\end{align*}
proving Equation~\eqref{app:eq:posterior-mean}. Jensen's inequality in $L^2(P_{Z,\bm{X}})$ then gives
\[
\|\LUCIDapp[\overline p]-\LUCIDapp[p^*]\|_2^2
\le\mathbb E_\Pi[\|\LUCIDapp[p]-\LUCIDapp[p^*]\|_2^2\mid\mathcal D_n]
=O_P(\epsilon_n^2),
\]
where the last equality follows from Equation~\eqref{app:eq:importance-second-moment}. Taking square roots proves Equation~\eqref{app:eq:posterior-mean-rate}.
\end{proof}

These results establish the convergence rate of the exact posterior mean. Approximating it with finitely many retained posterior draws and conditional replacements introduces additional Monte Carlo error, which must be controlled separately.

\newpage
\section{Implementation Details and Experimental Setup}
\label{app:implementation}

\subsection{Implementation Details}
\label{app:implementation_details}
This section describes conditional replacement sampling, posterior summaries, and the experimental settings used to evaluate \LCVI.

\paragraph{Pseudocode.}
We first present the pseudocode for \LCVI estimation without batching in \cref{alg:lcvi}.

\begin{algorithm}[h]
\caption{Estimating \LCVI by conditional sampling}
\label{alg:lcvi}
\begin{algorithmic}[1]
\Require Fixed model $g$; evaluation observations
$\{\bm w_i=(z_i,\bm x_i)\}_{i=1}^{N}$;
conditional sampler $\widehat P_{Z\mid \bm X=\bm x_{i}}(\cdot\mid \bm x_i)$;
loss $\ell$; number of replacements $M$

\For{$i=1,\ldots,N$}
    \State Draw $z_{i,1}',\ldots,z_{i,M}'$ independently from
    $\widehat P_{Z\mid \bm X=\bm x_{i}}(\cdot\mid \bm x_i)$
    \State Form perturbed observations
    $\bm w_{i,m}'=(z_{i,m}',\bm x_i)$, $m=1,\ldots,M$
    \State Compute $\widehat{\mathrm{\LCVI}}_i$ using
    \cref{eq:lcvi_estimator}
\EndFor

\State \Return $\{\widehat{\mathrm{\LCVI}}_i\}_{i=1}^{N}$
\end{algorithmic}
\end{algorithm}

\paragraph{Direct conditional sampling.}
For a continuous feature, a posterior draw $b$ of DR-BART has the
latent-variable representation
\begin{equation}
Z'
=f^{(b)}(\bm x,U)
+\sigma^{(b)}(\bm x,U)\varepsilon,
\qquad
U\sim\operatorname{Unif}(0,1),\quad
\varepsilon\sim\mathcal{N}(0,1),
\label{eq:drbart_latent_sampler}
\end{equation}
where $U$ and $\varepsilon$ are independent, and $f^{(b)}$ and
$\sigma^{(b)}$ are the location and standard-deviation functions
from that posterior draw. Here $U$ is a continuous latent variable;
$b$ indexes a draw from the posterior over the model parameters.

We use this representation to generate conditional replacements directly, without first evaluating the fitted conditional density on a grid. Conditional on a posterior draw, this procedure samples exactly from the corresponding fitted conditional distribution, avoiding grid discretization and numerical inversion of the cumulative distribution function. It also eliminates the need to construct and store a density-evaluation matrix for sampling. More specifically, first draw $U$, evaluate the location and standard deviation at $(\bm x,U)$, and then draw $Z'$ from the resulting normal distribution. \Cref{alg:drbart_conditional_sampling} gives the sampling procedure.

\begin{algorithm}[h]
\caption{Direct posterior-predictive sampling with DR-BART}
\label{alg:drbart_conditional_sampling}
\begin{algorithmic}[1]
\Require Conditioning vector $\bm x$; retained posterior functions
$\{(f^{(b)},\sigma^{(b)})\}_{b=1}^{B}$; number of replacements $M$
\For{$m=1,\ldots,M$}
    \State Draw $b_m\sim\operatorname{Unif}\{1,\ldots,B\}$
    \State Draw $u_m\sim\operatorname{Unif}(0,1)$
    \State Set $\mu_m\gets f^{(b_m)}(\bm x,u_m)$ and
    $s_m\gets\sigma^{(b_m)}(\bm x,u_m)$
    \State Draw $Z_m'\sim \mathcal{N}(\mu_m,s_m^2)$
\EndFor
\State \Return $Z_1',\ldots,Z_M'$
\end{algorithmic}
\end{algorithm}

\paragraph{Binary and categorical features.}
For binary and categorical features, the conditional distribution is fully determined by the conditional category probabilities. We therefore
use classification BART \citep{chipman2010bart,quiroga2022bart} to estimate these probabilities directly. For a binary feature, we estimate $P(Z=1\mid \bm X=\bm x)$ and draw replacements from the corresponding Bernoulli distribution. For a feature with multiple categories, we estimate a vector of conditional category probabilities that sum to one and draw replacements from the resulting categorical distribution. Each posterior draw of these probabilities specifies a complete conditional distribution, which can be used in the same \LCVI estimation and posterior summarization procedures described below. 

\paragraph{Posterior point estimates and credible intervals.}
For each retained posterior draw $b=1,\ldots,B$, generate $M$ replacements using \cref{alg:drbart_conditional_sampling}, however, keeping $b$ fixed across all replacements, and compute
\begin{equation}
\widehat{\mathrm{\LCVI}}^{(b)}(\bm w)
:=
\frac{1}{M}\sum_{m=1}^{M}
\ell\!\left(
g(Z_m^{\prime(b)},\bm x),
g(z,\bm x)
\right).
\label{eq:lcvi_drbart_posterior_draw_app}
\end{equation}
The posterior mean is approximated by
\begin{equation}
\widehat{\mathrm{\LCVI}}^{\mathrm{post}}(\bm w)
:=
\frac{1}{B}\sum_{b=1}^{B}
\widehat{\mathrm{\LCVI}}^{(b)}(\bm w),
\label{eq:lcvi_drbart_posterior_mean}
\end{equation}

When only a posterior point estimate is required, we can alternatively sample from the posterior predictive distribution. For each replacement, we select an index $b$ uniformly from $\{1,\ldots,B\}$ and generate a conditional replacement using \cref{eq:drbart_latent_sampler}. Because \LCVI is linear in the conditional replacement distribution, averaging the resulting loss contrasts estimates the same posterior mean, conditional on the retained draws, without requiring a separate \LCVI estimate for every draw.

To construct posterior credible intervals, we instead compute a separate \LCVI estimate for each posterior draw using \cref{eq:lcvi_drbart_posterior_draw_app}. The empirical quantiles of these estimates $\{\widehat{\mathrm{\LCVI}}^{(b)}(\bm w)\}_{b=1}^{B}$ describe posterior uncertainty in \LCVI induced by the conditional density estimator. Writing $\widehat Q_{\bm w}(\gamma)$ for the empirical $\gamma$-quantile of
$\{\widehat{\mathrm{\LCVI}}^{(b)}(\bm w)\}_{b=1}^{B}$, an approximate
$(1-\gamma)$ posterior credible interval is
\begin{equation}
\left[
\widehat Q_{\bm w}(\gamma/2),
\widehat Q_{\bm w}(1-\gamma/2)
\right].
\label{eq:lcvi_credible_interval}
\end{equation}
These empirical quantiles approximate posterior uncertainty in LUCID when the Monte Carlo error within each posterior draw is negligible; otherwise, they also reflect finite-\(M\) sampling variability.

\paragraph{Sampling feature groups.}
Let $\bm Z=(Z_1,\ldots,Z_d)$ denote a feature group with a fixed ordering, and let $\bm X$ denote the remaining covariates. We construct a joint conditional sampler through the factorization
\begin{equation}
p(\bm z\mid\bm x)
=
\prod_{r=1}^{d}
p\!\left(
z_r\mid \bm x,z_1,\ldots,z_{r-1}
\right).
\label{eq:group_conditional_factorization}
\end{equation}
We fit each factor using the remaining covariates and the preceding group members as predictors. For continuous group members, we fit these factors with DR-BART.

To generate one replacement vector, first sample $Z_1'$ given $\bm x$, then sample $Z_2'$ given $(\bm x,Z_1')$, and continue through the group. Each step conditions on the values already sampled, allowing the sampler to represent dependence within the group. Substitute the resulting joint draw $\bm Z'$ into \cref{eq:group_lcvi}, and average the resulting losses as in \cref{eq:lcvi_estimator}.

To quantify posterior uncertainty in group \LCVI, we additionally draw the factor models from their joint posterior. With independent priors for the factor models, a posterior draw of the joint sampler is formed by selecting a posterior draw from each factor independently. We then hold these selected functions fixed across the $M$ joint replacements used to compute one group \LCVI draw. Repeating this procedure yields posterior point estimates and credible intervals for $\mathrm{\LCVI}(\bm w)$ for the feature group, in the same way as for individual features.

\paragraph{Feature bounds.}
The experimental implementation optionally restricts continuous features to bounds $[L_j,H_j]$. These bounds may be specified by the user; when they are not provided, we set them to the observed dataset range padded by 10\% on each side. Candidates outside this interval are rejected while holding the posterior index fixed and redrawing only $U_{im}$ and $\varepsilon_{im}$. Thus, each posterior-specific replacement follows the corresponding conditional distribution truncated to $[L_j,H_j]$.

After at most $10{,}000$ rejection rounds, any remaining replacements are generated using a bounded-mixture fallback based on latent-variable tree cutpoints, which samples from the fitted conditional distribution further restricted to $X_j\in[L_j,H_j]$.

\subsection{Additional Implementation Details for Direct Conditional Sampling and Batched \LCVI Computation}
\label{sec:drbart_sampling_batching}

This section provides implementation-level details for our \LCVI computation, including direct conditional sampling, batching, and the corresponding time and space complexity analysis. 

\paragraph{Notation and computational scope.}
Since the analysis is carried out feature by feature and we aim to make the full update procedure explicit, we introduce a separate notation system with additional subscripts to indicate precisely which observation, feature, posterior draw, and replacement each quantity corresponds to.

Consider a fixed feature $j$ and $N$ evaluation observations $\{\bm x_i\}_{i=1}^{N}$, with $\bm x_i\in\mathbb R^{p}$. Here, \(\bm x_i\) denotes the full observation previously written as \(\bm w_i\). The predictive model $g$ is held fixed throughout the calculation. This notation system will continue to be used in \cref{app:dgp} and \cref{app:openxai_scaling}.

Let $\{(f_j^{(b)},\sigma_j^{(b)})\}_{b=1}^{B}$ denote the retained conditional-density posterior functions. We use $M$ replacements per observation, an evaluation batch size $A_{\text{batch}}$, and a replacement-prediction batch size $C_{\text{batch}}$. 

The following complexity statements concern evaluation after fitting. They exclude the input dataset and the storage of the fitted predictive model and posterior tree ensembles. \textbf{In particular, direct sampling does not eliminate the storage required for the $B$ retained posterior functions themselves.} Let $\tau_f$ denote the cost of evaluating both $f_j^{(b)}$ and $\sigma_j^{(b)}$ at one conditioning--latent pair. 

\paragraph{Batched implementation.}
In our implementation, feature-specific calculations are submitted as independent parallel jobs. Within each job, conditional replacements are generated in evaluation batches of size $A_{\text{batch}}$. The implementation then processes each observation separately, evaluating its perturbed inputs in replacement batches of size $C_{\text{batch}}$ and accumulating their losses. Thus, sampling is batched across observations, whereas predictive-model evaluation is batched across replacements of a single observation.

An independent job executes \cref{alg:lcvi_batch} for each feature. These jobs can run concurrently subject to available resources; each job writes its feature-specific result separately.

\begin{algorithm}[h]
\caption{Batched prediction-reference \LCVI estimation per feature}
\label{alg:lcvi_batch}
\begin{algorithmic}[1]
\Require Fixed model $g$; evaluation observations
$\{\bm x_i\}_{i=1}^{N}\subset\mathbb R^p$;
feature $j$; fitted direct conditional sampler for feature $j$;
number of replacements $M$;
evaluation batch size $A_{\text{batch}}$;
replacement-prediction batch size $C_{\text{batch}}$

\State Evaluate the fixed model on all original observations:
$g(\bm x_i)$, $i=1,\ldots,N$

\For{each consecutive observation batch
$\mathcal I\subseteq\{1,\ldots,N\}$ of size at most $A_{\text{batch}}$}
    \State Use \textbf{direct posterior-predictive} (in \cref{alg:drbart_conditional_sampling}) sampling to generate
    $\bm R\in\mathbb R^{|\mathcal I|\times M}$, where
    $R_{im}$ replaces feature $j$ conditional on $\bm x_{i,-j}$

    \For{each observation $i\in\mathcal I$}
        \State Set accumulated loss $T_i\gets0$

        \For{each consecutive replacement batch
        $\mathcal C\subseteq\{1,\ldots,M\}$ of size at most $C_{\text{batch}}$}
            \State Form $\bm x_{im}'$ by replacing coordinate $j$
            of $\bm x_i$ with $R_{im}$, for $m\in\mathcal C$
            \State Evaluate
            $\{g(\bm x_{im}')\}_{m\in\mathcal C}$
            in one model call
            \State Accumulate
            $T_i\gets T_i+
            \sum_{m\in\mathcal C}
            \ell(g(\bm x_{im}'),g(\bm x_i))$
        \EndFor

        \State Compute
        $\widehat \LCVI(\bm x_i)\gets T_i/M$
    \EndFor
\EndFor

\State Save and return $\widehat{\LCVI}$ for feature $j$
\end{algorithmic}
\end{algorithm}

\paragraph{Direct sampling and posterior conditioning.}
For a continuous feature, the unbounded conditional sampler uses
\begin{equation}
\begin{aligned}
U_{im} &\sim \operatorname{Unif}(0,1),\\
\varepsilon_{im} &\sim \mathcal N(0,1),\\
X_{i,j,m}' &=
f_j^{(b_{im})}(\bm x_{i,-j},U_{im})
+
\sigma_j^{(b_{im})}(\bm x_{i,-j},U_{im})
\varepsilon_{im}.
\end{aligned}
\label{eq:drbart_batch_latent_sampler}
\end{equation}
For a posterior-predictive point estimate, the implementation draws
$b_{im}\sim\operatorname{Unif}\{1,\ldots,B\}$
independently for every observation--replacement pair.
For a posterior-specific \LCVI estimate, it instead fixes
$b_{im}=b$ for all observations and all $M$ replacements.

The conditional standard deviation follows the fitted variance mode:
\begin{equation}
\sigma_j^{(b)}(\bm x_{-j},u)=
\begin{cases}
\left[
\phi_{j,b}^{*}\tau_j^{(b)}(\bm x_{-j},u)
\right]^{-1/2},
& \texttt{ux},\\[2pt]
\left[
\phi_{j,b}^{*}\tau_j^{(b)}(\bm x_{-j})
\right]^{-1/2},
& \texttt{x},\\[2pt]
\sigma_{j,b},
& \texttt{const}.
\end{cases}
\label{eq:drbart_batch_variance_modes}
\end{equation}
Here, $\tau_j^{(b)}$ is the fitted precision-tree function, and $\phi_{j,b}^{*}>0$ is the global precision scale retained in posterior draw $b$ for the conditional model of feature $j$. Their product gives the conditional precision. For the constant-variance model, $\sigma_{j,b}$ is the retained
posterior standard deviation. This is the parameterization used by the default DR-BART implementation; we make the conversion from the retained precision-tree representation to the corresponding conditional variance explicit only because the latter is not directly exposed by the default API.

\paragraph{Complexity of direct sampling.}
The following bounds concern unbounded fixed-posterior sampling and exclude fitted-model storage, model loading, and subsequent predictive-model evaluation.

\begin{proposition}[Complexity of direct sampling per feature]
\label{prop:drbart_direct_fixed_complexity}
For one feature and one fixed posterior draw, generating $M$
replacements for each of $N$ evaluation observations takes
\[
T_{\mathrm{sample}}
=
O\!\left(NM(p+\tau_f)\right)
\]
time and
\[
\mathcal S_{\mathrm{sample}}
=
O(A_{\text{batch}}Mp)
\]
peak sampling storage.
\end{proposition}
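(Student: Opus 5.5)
The bound comes from counting the operations and live memory of \cref{alg:drbart_conditional_sampling} with the posterior index $b$ held fixed, as in \cref{eq:drbart_batch_latent_sampler}. The unit of work is one observation--replacement pair $(i,m)$, and the time bound follows by summing its cost over all $NM$ pairs. The space bound follows by bounding what must be stored at once within a single evaluation batch.

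\textbf{Time.} For a fixed pair $(i,m)$, sampling proceeds in three steps:
\begin{itemize}[nosep]
\item draw $U_{im}$ and $\varepsilon_{im}$, at $O(1)$ cost;
\item form the conditioning vector $\bm x_{i,-j}$ and append $U_{im}$, at $O(p)$ cost (copying or indexing $p-1$ coordinates);
\item evaluate $f_j^{(b)}$ and $\sigma_j^{(b)}$ at this conditioning--latent pair, at cost $\tau_f$ by definition.
\end{itemize}
Converting the retained precision representation in \cref{eq:drbart_batch_variance_modes} to a standard deviation takes $O(1)$ additional arithmetic. In the \texttt{ux} and \texttt{x} modes, the cost of evaluating the precision tree $\tau_j^{(b)}$ is absorbed into $\tau_f$. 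Forming $X'_{i,j,m}$ is then $O(1)$. Each pair therefore costs $O(p+\tau_f)$, and summing over $N$ observations and $M$ replacements gives $T_{\mathrm{sample}}=O(NM(p+\tau_f))$. Batching only regroups this work, so it does not change the total.

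\textbf{Space.} Sampling for an evaluation batch $\mathcal I$ with $|\mathcal I|\le A_{\text{batch}}$ materializes at most three kinds of objects:
\begin{itemize}[nosep]
\item the stacked conditioning--latent inputs $(\bm x_{i,-j},U_{im})$ for $i\in\mathcal I$ and $m\le M$, of size $O(A_{\text{batch}}Mp)$;
\item the latent draws and the evaluated location and scale vectors, of size $O(A_{\text{batch}}M)$;
\item the output matrix $\bm R\in\mathbb R^{|\mathcal I|\times M}$, also of size $O(A_{\text{batch}}M)$.
\end{itemize}
Earlier batches can be freed or streamed out once processed, so the peak is the maximum over batches, not the sum. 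Tree evaluation of a fixed posterior ensemble needs only working memory per input row, which is either $O(1)$ or absorbed into the stored inputs. Fitted-model storage is excluded by the statement's scope. Together these give $\mathcal S_{\mathrm{sample}}=O(A_{\text{batch}}Mp)$.

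\textbf{Main obstacle.} Keeping the accounting consistent with the stated scope is the main obstacle. The proof has to make clear that $\tau_f$ already includes all tree traversals, including the precision tree, so that no hidden dependence on ensemble size appears. It also has to make clear that the $O(A_{\text{batch}}Mp)$ input buffer is the dominant live object, and that no quantity scaling with $N$ is retained across batches. I would state both of these conventions explicitly before doing the count.
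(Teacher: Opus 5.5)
Your proposal is correct and follows the same route as the paper: the paper also charges each replacement $O(p)$ for materializing the batch's $aM\times(p-1)$ conditioning array, plus one evaluation of $f_j^{(b)}$ and $\sigma_j^{(b)}$ at cost $\tau_f$, which gives $O(NM(p+\tau_f))$ in total, and it bounds peak storage by that single live batch array, $O(A_{\text{batch}}Mp)$. Your explicit accounting of the $O(A_{\text{batch}}M)$ latent, location/scale and output buffers, and your note that the precision-tree cost is already inside $\tau_f$, only spell out details the paper's one-line argument leaves implicit.
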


\begin{proof}
For a batch of $a\leq A_{\text{batch}}$ observations (the final batch may contain fewer than \(A_{\text{batch}}\) observations), the implementation materializes an $aM\times(p-1)$ conditioning array and evaluates the conditional mean and standard deviation once per replacement. Thus, processing all $NM$ replacements takes $O(NM(p+\tau_f))$ time, while the current batch requires $O(A_{\text{batch}}Mp)$ storage.
\end{proof}

Materializing each batch enables vectorized tree evaluation and reduces per-call overhead. Using $A_{\text{batch}}\ll N$ reduces peak sampling storage from $O(NMp)$ to $O(A_{\text{batch}}Mp)$. Subsequent predictive-model evaluation uses replacement batches of size $C_{\text{batch}}$, requiring an additional $O(C_{\text{batch}}p)$ perturbed-input workspace; it does not reduce the replacements already generated by the sampler. Feature-specific calculations can be distributed across independent parallel jobs, subject to available computational resources. The storage bound above is per job; concurrent jobs increase aggregate memory usage.

\textbf{Note:} Direct sampling does not require storing a conditional-density evaluation matrix. By generating one replacement per observation at a time and immediately accumulating its loss contribution, the sampling workspace can, in principle, be reduced to $O(A_{\text{batch}}p)$ while still using $M$ replacements per observation. The current implementation instead materializes all $M$ replacements within each evaluation batch, using $O(A_{\text{batch}}Mp)$ sampling workspace to enable vectorized tree evaluation and reduce per-call overhead. This creates a trade-off between memory usage and computational efficiency when processing very large datasets.

\subsection{Experimental Setup}

\paragraph{Real-world datasets.}
Our primary real-data case study uses the 2011 hourly subset of the UCI Bike Sharing dataset \citep{fanaee2014event}, containing 8,645 observations and distributed as \texttt{Bikeshare} in the \texttt{ISLR2} R package.\footnote{\url{https://search.r-project.org/CRAN/refmans/ISLR2/html/Bikeshare.html}}
The UCI dataset is released under the Creative Commons
Attribution 4.0 International (CC BY 4.0) license.\footnote{
\url{https://archive.ics.uci.edu/dataset/275/bike+sharing+dataset}}

To further evaluate scalability and robustness on higher-dimensional tabular data, we additionally use four datasets from the OpenXAI \citep{NEURIPS2022_65398a0e} benchmark: Framingham Heart Study (Heart), German Credit, HELOC, and Give Me Some Credit (GMSC). We obtain the benchmark versions of these datasets from the OpenXAI Harvard Dataverse archive, where the deposited dataset is listed under the CC0 1.0 Universal Public Domain Dedication.\footnote{\url{https://doi.org/10.7910/DVN/HIBCFL}}

\paragraph{Experiment Platform.}
Experiments were submitted as Slurm jobs on a shared high-performance computing cluster. The nodes used included Intel Xeon Gold 6226 and AMD EPYC 9554 processors, with node memory ranging from 768\,GB to approximately 1\,TB. Standard feature-specific worker tasks requested one CPU core and 32\,GB RAM, with no GPU resources requested. Independent feature and fold tasks were scheduled concurrently. The software environment used Python 3.14.4 and R 4.5.2.

\paragraph{Software Packages.}
We used the following upstream packages together with the feature-specific sampling and importance calculations implemented in our repository.

\textbf{DR-BART:}
Continuous conditional models were fitted using the R package
\texttt{drbart} (development version 0.0.0.9000), available at
\url{https://github.com/vittorioorlandi/drbart}.
Direct latent-variable sampling, batching, and posterior
\LCVI calculations were implemented in our repository.
Binary and categorical conditional models used
\texttt{pymc-bart} 0.13.1 and \texttt{pymc} 6.3.2, available at
\url{https://github.com/pymc-devs/pymc-bart} and
\url{https://github.com/pymc-devs/pymc}, respectively.

\textbf{ARF:}
We used the R package \texttt{arf} 0.2.5, available at
\url{https://github.com/bips-hb/arf}, for adversarial random-forest
fitting, density estimation, and conditional generation.
Its random-forest backend was \texttt{ranger} 0.18.0.

\textbf{cs-PFI:}
Our conditional-subgroup sampler used
\texttt{trtf} 0.4-3, \texttt{mlt} 1.8-2, and
\texttt{partykit} 1.3-0.
These packages are available at
\url{https://CRAN.R-project.org/package=trtf},
\url{https://CRAN.R-project.org/package=mlt}, and
\url{https://CRAN.R-project.org/package=partykit}.
The conditional-subgroup method's original code is available at
\url{https://github.com/christophM/paper_conditional_subgroups}.
Our repository implements the sampler and its integration with
the common \LCVI evaluation routine.

\textbf{LIME:}
We used \texttt{lime} 0.2.0.1 and its
\texttt{LimeTabularExplainer}, available at
\url{https://github.com/marcotcr/lime}.

\textbf{SHAP:}
We used \texttt{shap} 0.52.0 through
\texttt{shap.Explainer}, available at
\url{https://github.com/shap/shap}.

\textbf{Conditional SHAP:}
We used \texttt{pyshapr} 0.6.0 with the R package
\texttt{shapr} 1.1.0, available at
\url{https://github.com/NorskRegnesentral/shapr}.
Conditional expectations were estimated using its
\texttt{arf} approach. In some experimental figures and tables, conditional SHAP is labeled “SHAPr”.

\textbf{Other baselines and predictive models:}
CMR, GCMR, CLIQUE, ICI, local LOCO, and the analytical
\LCVI/LOCO benchmarks were implemented in our repository.
CMR, GCMR, and local LOCO used \texttt{xgboost} 3.4.1,
available at \url{https://github.com/dmlc/xgboost}.
The same package supplied predictive models for the real-data
experiments; synthetic experiments used the known DGP prediction
functions.
The CLIQUE baseline used our quantile-replacement implementation,
rather than the upstream CLIQUE R package.

\textbf{Numerical and data-processing tools:}
We used NumPy 2.5.3, SciPy 1.18.1, pandas 3.0.6, and
scikit-learn 1.9.1.
Figures were generated using Matplotlib 3.11.2.
The Python interface to the R SHAPr implementation used
rpy2 3.6.8, and resource measurements used psutil 7.2.2.

\paragraph{Parameters.}
We did not perform hyperparameter tuning. Unless specified below, package hyperparameters were left at their defaults.

For continuous-feature DR-BART models, we used 2,000 burn-in iterations and retained 2,000 posterior draws without thinning (\texttt{nburn}=2,000, \texttt{nsim}=2,000, \texttt{nthin}=1). The conditional mean and precision ensembles contained 200 and 100 trees, respectively (\texttt{m\_mean}=200, \texttt{m\_var}=100). We used variance mode \texttt{ux}, allowing the conditional variance to depend on both the conditioning features and the latent variable.
Group conditional models used the same settings.

For binary and categorical features, the conditional BART models used 50 trees, 1,000 tuning iterations, and 1,000 posterior draws from one chain.

Point-estimation experiments used $M=1,000$ replacements per observation, with evaluation and replacement-prediction batch sizes $A_{\text{batch}}=C_{\text{batch}}=100$. The default experiment seed was 2026.

For real-data experiments, we used five-fold cross-fitting. For each held-out fold, we trained an XGBoost predictive model $g$ on the other four folds using default hyperparameters, without tuning, and held it fixed when computing \LCVI on the held-out observations. This yielded five separately fitted predictive models, each shared across the compared methods within its corresponding fold.
\newpage
\section{Additional DGP Results}
\label{app:dgp}
In this section, we provide the full experimental results, analytical calculations, and additional analyses for the synthetic DGPs considered in the main text. Because this section focuses on concrete simulation settings, we slightly simplify the notation used in the main paper. We use $X_j$ to denote the feature of interest and $\bm X_{-j}$ to denote the remaining features, with lowercase $x_j$ and $\bm x_{-j}$ denoting their observed values. We use $n$ (from now on) for the number of observations and $p$ for the number of features. For the DGPs in the main paper, we set $n=5000$. Unless otherwise stated, all \LCVI quantities are defined with respect to the fixed prediction function used in each DGP.

\subsection{Synthetic Data-Generating Processes}
We consider five synthetic DGPs in the main paper designed to isolate different aspects of local conditional variable importance. Again, for each DGP, Table~\ref{app:tab:dgp-summary} reports the data-generating process, the analytical \LCVI for each relevant feature, and the corresponding global conditional permutation feature importance (CPFI).
\begin{table}[h]
\centering
\caption{Synthetic DGPs and analytical \LCVI and analytical CPFI under squared loss.}
\label{app:tab:dgp-summary}
\renewcommand{\arraystretch}{1.05}
\setlength{\tabcolsep}{4pt}
\begin{tabularx}{\linewidth}{
    @{}
    >{\raggedright\arraybackslash}p{0.16\linewidth}
    >{\raggedright\arraybackslash}X
    >{\raggedright\arraybackslash}p{0.27\linewidth}
    >{\raggedright\arraybackslash}p{0.16\linewidth}
    @{}
}
\toprule
\textbf{DGP}
& \textbf{Data-generating process}
& \textbf{Analytical \LCVI}
& \textbf{CPFI} \\
\midrule

\textbf{Discontinuity}
&
$X_1,X_2,X_3\stackrel{\mathrm{iid}}{\sim}\mathcal N(0,1)$,
$g(\bm x)=x_1\mathbf 1_{x_3>0}+x_2\mathbf 1_{x_3\leq0}$.
&
$I_1=(x_1^2+1)\mathbf 1_{x_3>0}$;
$I_2=(x_2^2+1)\mathbf 1_{x_3\leq0}$;
$I_3=\frac12(x_1-x_2)^2$.
&
$\mathrm{CPFI}_1=1$;
$\mathrm{CPFI}_2=1$;
$\mathrm{CPFI}_3=1$.
\\
\midrule

\textbf{Correlated}
&
$(X_1,X_2)^\top\sim\mathcal N_2(0,\Sigma_\rho)$,
$\Sigma_{\rho,11}=\Sigma_{\rho,22}=1$, $\Sigma_{\rho,12}=\rho$, $\rho=0.9$;
$g(\bm x)=x_1$.
&
$I_1=1-\rho^2+(x_1-\rho x_2)^2$;
$I_2=0$.
&
$\mathrm{CPFI}_1=2(1-\rho^2)$;
$\mathrm{CPFI}_2=0$.
\\
\midrule

\textbf{Heteroskedastic}
&
$X_2\sim\mathcal N(0,1)$,
$X_1\mid X_2=x_2\sim\mathcal N(0,x_2^2)$;
$g(\bm x)=x_1$.
&
$I_1=x_1^2+x_2^2$;
$I_2=0$.
&
$\mathrm{CPFI}_1=2$;
$\mathrm{CPFI}_2=0$.
\\
\midrule

\textbf{Interacted discontinuity}
&
$X_2,X_3\stackrel{\mathrm{iid}}{\sim}\mathcal N(0,1)$,
$\mu=4(\mathbf 1_{X_2>0}\oplus\mathbf 1_{X_3>0})-2$;
$X_1\mid X_2,X_3\sim\mathcal N(\mu,1)$;
$g(\bm x)=x_1$.
&
$I_1=1+(x_1-\mu)^2$;
$I_2=I_3=0$.
&
$\mathrm{CPFI}_1=2$;
$\mathrm{CPFI}_2=\mathrm{CPFI}_3=0$.
\\
\midrule

\textbf{Extreme tail}
&
$X_2\sim\mathcal N(0,1)$,
$X_1\mid X_2=x_2\sim\mathcal N(0,v(x_2))$;
$v(x_2)=1+\mathbf 1_{x_2>0}$,
$H(x_2)=\mathbf 1_{|x_2|>\Phi^{-1}(.99)}$;
$g(\bm x)=x_1+20H(x_2)$.
&
$I_1=x_1^2+v(x_2)$;
$I_2=8+384H(x_2)$.
&
$\mathrm{CPFI}_1=3$;
$\mathrm{CPFI}_2=15.68$.
\\

\bottomrule
\end{tabularx}
\end{table}

\subsection{Analytical LUCID Calculations}
\label{app:analytic_lucid_dgps}

We derive the expressions in Table~\ref{app:tab:dgp-summary}. Under squared loss and using the prediction itself as the reference, the \LCVI of feature $j$ at observation $\bm x$ reduces to
\begin{equation}
I_j(\bm x)=\mathbb E\left[\left\{g(X_j',\bm x_{-j})-g(\bm x)\right\}^2\mid X_{-j}=\bm x_{-j}\right],
\label{app:eq:analytic-lcvi-general}
\end{equation}
where $X_j'\sim P(\cdot\mid \bm X_{-j}=\bm x_{-j})$. The corresponding global conditional permutation feature importance is
\begin{equation}
\mathrm{CPFI}_j=\mathbb E\left[I_j(\bm X)\right].
\label{app:eq:analytic-cpfi-general}
\end{equation}

\paragraph{DGP 1: Discontinuity.} Since the features are mutually independent, the conditional replacement distributions equal their marginals. For $X_1$, replacing $x_1$ affects the prediction only when $x_3>0$, giving
\begin{align}
I_1(\bm x)
&=\mathbf 1_{x_3>0}\mathbb E[(X_1'-x_1)^2]\nonumber\\
&=\mathbf 1_{x_3>0}\left\{\operatorname{Var}(X_1')+(\mathbb E[X_1']-x_1)^2\right\}\nonumber\\
&=(1+x_1^2)\mathbf 1_{x_3>0}.
\end{align}
Similarly,
\begin{equation}
I_2(\bm x)=(1+x_2^2)\mathbf 1_{x_3\leq0}.
\end{equation}
For $X_3$, a replacement changes the prediction only when $X_3'$ falls on the opposite side of zero from the observed $x_3$. Since $\Pr(X_3'>0)=\Pr(X_3'\leq0)=1/2$,
\begin{equation}
I_3(\bm x)=\frac12(x_1-x_2)^2.
\end{equation}
Averaging over the data distribution gives
\begin{align}
\mathrm{CPFI}_1&=\mathbb E[(1+X_1^2)\mathbf 1_{X_3>0}]=2\times\frac12=1,\\
\mathrm{CPFI}_2&=\mathbb E[(1+X_2^2)\mathbf 1_{X_3\leq0}]=2\times\frac12=1,\\
\mathrm{CPFI}_3&=\frac12\mathbb E[(X_1-X_2)^2]=\frac12(1+1)=1.
\end{align}

\paragraph{DGP 2: Correlated.} For the bivariate Gaussian distribution,
\begin{equation}
X_1\mid X_2=x_2\sim\mathcal N(\rho x_2,1-\rho^2).
\end{equation}
Because $g(\bm x)=x_1$, only perturbations of $X_1$ affect the prediction. Hence
\begin{align}
I_1(\bm x)
&=\mathbb E[(X_1'-x_1)^2\mid X_2=x_2]\nonumber\\
&=\operatorname{Var}(X_1'\mid X_2=x_2)+\{\mathbb E[X_1'\mid X_2=x_2]-x_1\}^2\nonumber\\
&=1-\rho^2+(x_1-\rho x_2)^2.
\end{align}
Since $g$ does not depend on $x_2$ directly,
\begin{equation}
I_2(\bm x)=0.
\end{equation}
Moreover, $X_1-\rho X_2$ has variance $1-\rho^2$, so
\begin{align}
\mathrm{CPFI}_1
&=1-\rho^2+\mathbb E[(X_1-\rho X_2)^2]\nonumber\\
&=2(1-\rho^2),
\end{align}
while $\mathrm{CPFI}_2=0$.

\paragraph{DGP 3: Heteroskedastic.} Here
\begin{equation}
X_1\mid X_2=x_2\sim\mathcal N(0,x_2^2),
\end{equation}
so the conditional variance of $X_1$ depends on the local value of $X_2$. Since $g(\bm x)=x_1$,
\begin{align}
I_1(\bm x)
&=\mathbb E[(X_1'-x_1)^2\mid X_2=x_2]\nonumber\\
&=\operatorname{Var}(X_1'\mid X_2=x_2)+\{\mathbb E[X_1'\mid X_2=x_2]-x_1\}^2\nonumber\\
&=x_2^2+x_1^2.
\end{align}
Again, $g$ does not depend directly on $X_2$, so $I_2(\bm x)=0$. Since
\begin{equation}
\mathbb E[X_1^2]=\mathbb E[\operatorname{Var}(X_1\mid X_2)]=\mathbb E[X_2^2]=1,
\end{equation}
we obtain
\begin{equation}
\mathrm{CPFI}_1=\mathbb E[X_1^2+X_2^2]=2,
\qquad
\mathrm{CPFI}_2=0.
\end{equation}

\paragraph{DGP 4: Interacted discontinuity.} Define
\begin{equation}
\mu(X_2,X_3)=4\left\{\mathbf 1_{X_2>0}\oplus\mathbf 1_{X_3>0}\right\}-2.
\end{equation}
Conditional on $(X_2,X_3)$,
\begin{equation}
X_1\mid X_2,X_3\sim\mathcal N(\mu,1).
\end{equation}
Since $g(\bm x)=x_1$, the \LCVI of $X_1$ is
\begin{align}
I_1(\bm x)
&=\mathbb E[(X_1'-x_1)^2\mid X_2=x_2,X_3=x_3]\nonumber\\
&=1+(x_1-\mu)^2.
\end{align}
Although $X_2$ and $X_3$ determine the conditional distribution of $X_1$, neither enters the prediction function once $x_1$ is held fixed. Therefore,
\begin{equation}
I_2(\bm x)=I_3(\bm x)=0.
\end{equation}
Finally, because $X_1-\mu\mid X_2,X_3\sim\mathcal N(0,1)$,
\begin{equation}
\mathrm{CPFI}_1=1+\mathbb E[(X_1-\mu)^2]=2,
\qquad
\mathrm{CPFI}_2=\mathrm{CPFI}_3=0.
\end{equation}

\paragraph{DGP 5: Extreme tail.} Let
\begin{equation}
v(x_2)=1+\mathbf 1_{x_2>0},
\qquad
H(x_2)=\mathbf 1_{|x_2|>c},
\qquad
c=\Phi^{-1}(0.99).
\end{equation}
For $X_1$, conditional on $X_2=x_2$ we have $X_1'\sim\mathcal N(0,v(x_2))$. Since changing $X_1$ leaves the tail indicator fixed,
\begin{align}
I_1(\bm x)
&=\mathbb E[(X_1'-x_1)^2\mid X_2=x_2]\nonumber\\
&=v(x_2)+x_1^2.
\end{align}
Averaging gives
\begin{align}
\mathrm{CPFI}_1
&=\mathbb E[v(X_2)]+\mathbb E[X_1^2]\nonumber\\
&=2\mathbb E[v(X_2)]\nonumber\\
&=2\left(1+\frac12\right)=3.
\end{align}

For $X_2$, the value of $X_1$ remains fixed under perturbation, and therefore
\begin{equation}
g(x_1,X_2')-g(x_1,x_2)=20\{H(X_2')-H(x_2)\}.
\end{equation}
Although $X_1$ and $X_2$ are dependent through $v(X_2)$, the conditional probability that a replacement lies in the extreme tail remains $0.02$. To see this, note that
\begin{equation}
p(x_2\mid x_1)\propto \phi(x_2)\phi\{x_1;0,v(x_2)\},
\end{equation}
where $\phi$ is a Gaussian density\footnote{Here, \(\phi(\cdot)\) denotes the standard Gaussian density, and \(\phi(\cdot;0,v)\) denotes the Gaussian density with mean zero and variance \(v\).}, and $v(x_2)$ is constant within each of the two regions $x_2>0$ and $x_2\leq0$. Thus, conditional on either sign of $X_2$, its distribution remains a standard Gaussian truncated to that half-line. Consequently,
\begin{align}
\Pr(X_2'>c\mid X_1,X_2'>0)&=\frac{\Pr(X_2>c)}{\Pr(X_2>0)}=\frac{0.01}{0.5}=0.02,\\
\Pr(X_2'<-c\mid X_1,X_2'\leq0)&=\frac{\Pr(X_2<-c)}{\Pr(X_2\leq0)}=\frac{0.01}{0.5}=0.02,
\end{align}
and hence
\begin{equation}
\Pr\{H(X_2')=1\mid X_1\}=0.02.
\label{app:eq:extreme-tail-probability}
\end{equation}
Therefore, when $H(x_2)=0$, a perturbation changes the indicator with probability $0.02$, while when $H(x_2)=1$, it changes the indicator with probability $0.98$. It follows that
\begin{align}
I_2(\bm x)
&=400\left[0.02\{1-H(x_2)\}+0.98H(x_2)\right]\nonumber\\
&=8+384H(x_2).
\end{align}
Finally, since $\Pr\{H(X_2)=1\}=0.02$,
\begin{align}
\mathrm{CPFI}_2
&=\mathbb E[8+384H(X_2)]\nonumber\\
&=8+384(0.02)\nonumber\\
&=15.68.
\end{align}

\subsection{Stress Test Across Random Seeds}
\label{app:dgp_stress_test}

To assess the robustness of our synthetic experiments to random variation, we repeat all five DGP experiments over 50 independent random seeds. For each method and DGP, we compute the MSE between the estimated and analytical \LCVI for $X_1$. Table~\ref{tab:dgp_stress_mse} reports the mean MSE together with its standard deviation across the 50 repetitions, thereby showing both average estimation accuracy and its variation across random realizations. We additionally rank the seven \LCVI estimators within each paired DGP--seed experiment and report the resulting mean rank across all $5\times50=250$ settings in Table~\ref{tab:dgp_stress_rank}.

\begin{table}[t]
\centering
\caption{MSE of estimated \LCVI for $X_1$ across 50 independent random seeds. Each entry reports mean $\pm$ standard deviation across seeds. Lower values indicate more accurate and more stable estimation.}
\label{tab:dgp_stress_mse}
\setlength{\tabcolsep}{3.5pt}
\renewcommand{\arraystretch}{1.05}
\begin{tabular}{lccccc}
\toprule
\textbf{Method}
& \textbf{DGP 1}
& \textbf{DGP 2}
& \textbf{DGP 3}
& \textbf{DGP 4}
& \textbf{DGP 5} \\
\midrule
DR-BART
& $0.0225\pm0.0036$
& $\mathbf{0.0029\pm0.0006}$
& $\mathbf{0.3668\pm0.1095}$
& $\mathbf{0.5092\pm0.1293}$
& $0.0865\pm0.0122$ \\

ARF
& $0.1187\pm0.0147$
& $0.0041\pm0.0006$
& $0.6582\pm0.2888$
& $1.2859\pm0.1738$
& $0.2063\pm0.0194$ \\

cs-PFI
& $0.0064\pm0.0019$
& $0.0116\pm0.0016$
& $1.0804\pm0.2386$
& $78.0902\pm5.7078$
& $\mathbf{0.0510\pm0.0125}$ \\

CLIQUE
& $\mathbf{0.0008\pm0.0007}$
& $4.5720\pm0.2497$
& $1.9922\pm0.1122$
& $80.3271\pm1.7660$
& $0.2536\pm0.0027$ \\

CMR
& $0.5556\pm0.0586$
& $0.0106\pm0.0010$
& $2.5597\pm0.2128$
& $1.5994\pm0.2780$
& $0.8806\pm0.0583$ \\

GCMR
& $0.5447\pm0.0603$
& $0.0108\pm0.0010$
& $2.4399\pm0.1563$
& $3.9210\pm0.3313$
& $0.8883\pm0.0590$ \\

Permutation
& $0.0063\pm0.0017$
& $4.5381\pm0.2462$
& $2.0145\pm0.1150$
& $80.0110\pm1.7766$
& $0.2787\pm0.0073$ \\
\bottomrule
\end{tabular}
\end{table}

The repeated experiments confirm that the qualitative conclusions of the main experiments are stable across random seeds. DR-BART achieves the lowest mean MSE in DGPs~2--4 and remains highly competitive in DGP~5. In DGP~1, where the relevant features are independent and no conditional density estimation is required, simpler permutation-based approaches attain smaller errors. DGP~5 is similarly favorable to cs-PFI, although DR-BART remains close and exhibits low variability across repetitions. Most importantly, DR-BART does not rely on a single favorable dependence structure: its MSE remains consistently low across linear dependence, heteroskedasticity, discontinuous interactions, and extreme-tail behavior. The relatively small standard deviations further indicate that this performance is not driven by a small number of favorable random realizations.

\begin{table}[t]
\centering
\caption{Mean MSE rank for $X_1$ across the five DGPs and 50 independent random seeds, giving $250$ paired DGP--seed comparisons. Lower is better.}
\label{tab:dgp_stress_rank}
\setlength{\tabcolsep}{7pt}
\renewcommand{\arraystretch}{1.03}
\begin{tabular}{lc}
\toprule
\textbf{Method} & \textbf{Mean rank} \\
\midrule
DR-BART & \textbf{1.80} \\
ARF & 2.84 \\
cs-PFI & 3.29 \\
CLIQUE & 4.60 \\
Permutation & 4.80 \\
CMR & 5.16 \\
GCMR & 5.51 \\
\bottomrule
\end{tabular}
\end{table}

DR-BART also obtains the best overall mean rank of $1.80$ across the $250$ paired settings, followed by ARF ($2.84$) and cs-PFI ($3.29$). Together, the MSE distributions and ranking results show that the relative performance observed in the main experiments persists across repeated data realizations rather than being driven by a particular random seed.

\subsection{DGP 1: Discontinuity.}
\label{app:subsec:discontinuity}
 Additional results are provided in Figures~\ref{app:fig:cde-regime-dependent} and~\ref{app:fig:regime-dependent-all-methods}.

\subsection{DGP 2: Correlated Features}
\label{app:subsec:correlated}
We show the results for $\rho=0.9$ in Figure~\ref{app:fig:strong-conditional-dependence-all-methods}. The first row contains the conditional permutation-based methods, which accurately capture the correlation-induced restriction on the replacement distribution in this relatively simple setting. In contrast, marginal permutation does not account for this dependence structure and therefore fail to recover the corresponding local importance pattern. Additional results are provided in Figures~\ref{app:fig:cde-strong-conditional-dependence}.

\subsection{DGP 3: Heteroskedastic Conditional Dependence}
\label{app:subsec:Heteroskedastic}
Recall that $X_2\sim N(0,1)$, $X_1\mid X_2=x_2\sim N(0,x_2^2)$, and $g(\bm x)=x_1$. Under squared loss, the analytical importance of $X_1$ is
\[
I_1(\bm x)=\mathbb E[(X_1'-x_1)^2\mid X_2=x_2]=x_1^2+x_2^2.
\]
Now consider any replacement scheme that uses the same distribution $Q$ for every observation, with $\mathbb E_Q[(X_1')^2]<\infty$. Its population local score is
\[
I_1^Q(\bm x):=\mathbb E_Q[(X_1'-x_1)^2]
=x_1^2-2x_1\mathbb E_Q[X_1']+\mathbb E_Q[(X_1')^2].
\]
Consequently,
\[
I_1^Q(\bm x)-I_1(\bm x)
=\mathbb E_Q[(X_1')^2]-x_2^2-2x_1\mathbb E_Q[X_1'].
\]
Since $\mathbb E[X_1\mid X_2]=0$, the cross term vanishes:
\[
\mathbb E\!\left[X_1\left\{\mathbb E_Q[(X_1')^2]-X_2^2\right\}\right]
=\mathbb E\!\left[\mathbb E[X_1\mid X_2]\left\{\mathbb E_Q[(X_1')^2]-X_2^2\right\}\right]=0.
\]
Using $\mathbb E[X_1^2]=\mathbb E[X_2^2]=1$ and $\mathbb E[X_2^4]=3$, we obtain
\begin{align*}
\mathbb E[(I_1^Q(\bm X)-I_1(\bm X))^2]
&=\mathbb E\!\left[\left\{\mathbb E_Q[(X_1')^2]-X_2^2\right\}^2\right]
+4\{\mathbb E_Q[X_1']\}^2\mathbb E[X_1^2]\\
&=\{\mathbb E_Q[(X_1')^2]\}^2-2\mathbb E_Q[(X_1')^2]+3
+4\{\mathbb E_Q[X_1']\}^2\\
&=\left\{\mathbb E_Q[(X_1')^2]-1\right\}^2+2
+4\{\mathbb E_Q[X_1']\}^2\\
&\ge 2.
\end{align*}
Thus, every common replacement distribution incurs a population local MSE of at least $2$ in this DGP, even when its induced score is computed exactly.

Nevertheless, the global averages satisfy
\[
\mathbb E[I_1(\bm X)]=2,\qquad
\mathbb E[I_1^Q(\bm X)]=1+\mathbb E_Q[(X_1')^2].
\]
In particular, the marginal distribution of $X_1$, which is also the pooled residual distribution under the correct conditional mean, satisfies $\mathbb E_Q[X_1']=0$ and $\mathbb E_Q[(X_1')^2]=1$. It therefore yields
\[
I_1^Q(\bm x)=x_1^2+1,\qquad
\mathbb E[I_1^Q(\bm X)]=\mathbb E[I_1(\bm X)]=2,\qquad
\mathbb E[(I_1^Q(\bm X)-I_1(\bm X))^2]=2.
\]
Hence, exact recovery of global-average importance does not imply accurate local importance. The local error arises from replacing the context-dependent conditional variance $x_2^2$ with a common variance and persists even with the correct conditional mean and arbitrarily many Monte Carlo replacements.

Additional results are provided in Figures~\ref{app:fig:cde-heteroskedastic} and~\ref{app:fig:heteroskedastic-conditional-dependence-all-methods}.

\subsection{DGP 4: Interacted Discontinuity.}
\label{app:subsec:interacted}
\textbf{What happens to local VI when the covariate dependence involves interactions?} 
This case examines dependence driven by an XOR-type interaction between $X_2$ and $X_3$ in the conditional distribution of $X_1$. Such interactions can be difficult for recursive tree methods based on univariate split selection, because their joint signal is not apparent from individual univariate splits \citep{hornung2022interaction}. We use this setting to assess how this difficulty affects local importance estimation. As shown in the results for DGP 4 in Table~\ref{tab:dgp-x1-lcvi-mse}, DR-BART (which explores ensembles of tree structures through Markov Chain Monte Carlo simulations \citep{orlandi2021density}) yields estimates closest to the analytical \LCVI. The cs-PFI estimates closely resemble marginal permutation, consistent with its transformation-tree partition failing to recover the pure interaction. ARF substantially improves on marginal permutation, but its MSE is still approximately 3$\times$ higher than our DR-BART implementation. These results highlight the importance of recovering interaction structure in the conditional distribution when estimating \LCVI. Additional results are provided in Figures~\ref{app:fig:cde-interaction} and~\ref{app:fig:interaction-conditional-dependence-all-methods}.

\begin{figure}[t]
    \centering
    \includegraphics[width=\columnwidth]{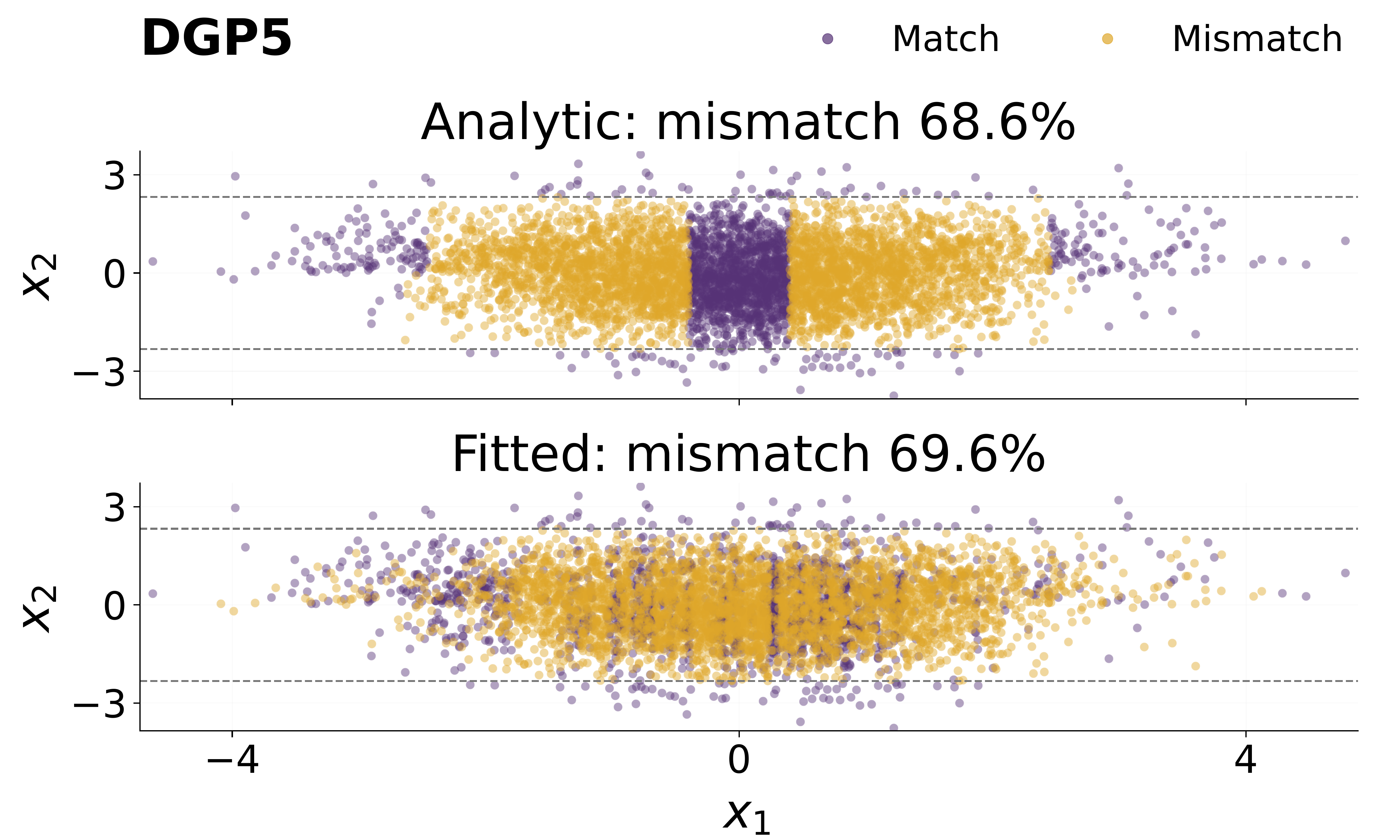}
    \caption{\textbf{Extreme tail.} Match and mismatch regions for the analytical and fitted \LCVI structures in DGP~5. Match and mismatch regions between LUCID and local LOCO feature rankings in DGP 5, using analytical and estimated importance measures.}
    \label{fig:main_experiments_dgp_5}
\end{figure}
\subsection{DGP 5: Extreme Tail.}
\label{app:subsec:extrem_tail}
\textbf{How does \LCVI differ from local LOCO?} 
LOCO provides a natural comparison for \LCVI because both assess a feature's contribution given the information available in the remaining features. Although global LOCO and global conditional permutation importance differ only by a factor of two under squared loss and an oracle regression predictor \citep{reyero2025conditional}, this proportionality does not generally hold locally, as established in \cref{subsec:lcvi_loco}. Compared to local LOCO, \LCVI additionally accounts for prediction variability under conditional feature replacement, reflecting how important it is to know the exact feature value; omitting this component can substantially alter local feature rankings. We construct a heteroskedastic DGP with an extreme-tail component in the prediction function to examine the consequences of this distinction. Analytical local LOCO and analytical \LCVI yield different feature rankings with an analytical probability of $67.4\%$ (Analytical local LOCO and \LCVI assign different feature rankings to 68.6\% of the evaluated observations.). Consistent with this theoretical discrepancy, estimated local LOCO disagrees with estimated \LCVI on $69.6\%$ of the evaluated instances, as shown in Figure~\ref{fig:main_experiments_dgp_5}. The disagreement between the analytical rankings demonstrates that this discrepancy is intrinsic to the two importance measures and cannot be eliminated by improving LOCO estimation alone. This example shows that \LCVI provides fuller local feature information about dispersion, leading to substantially different feature rankings from local LOCO despite their proportionality at the global level. Additional results are provided in Figures~\ref{app:fig:cde-extreme-tail} and~\ref{app:fig:extreme-tail-all-methods}. 

\subsection{Complete Conditional Replacement Distributions}
\label{app:dgp_cde_complete}

We further show complete results of conditional replacement distributions produced by each sampling method across representative covariates. For each DGP, we evaluate the conditional distribution of $X_1$ at five empirical quantiles of $X_2$ and compare the estimated replacement distributions with the analytical conditional distribution. Across the five settings, DR-BART is the most stable method overall: it consistently tracks changes in the location, scale, and shape of the true conditional distribution. The remaining methods can perform well when their structural assumptions align with the DGP, but their performance is less uniform across the different conditional structures.

\paragraph{DGP 1: Discontinuity.} Figure~\ref{app:fig:cde-regime-dependent} shows the conditional replacement distributions for the discontinuity DGP. Because $X_1$ is independent of the remaining features, its true conditional distribution does not vary with $X_2$. DR-BART closely preserves this invariant distribution across all five conditioning points. Most methods also perform reasonably well in this simple setting, as no conditional dependence needs to be learned, although ARF exhibits noticeable shifts in location or scale at the more extreme conditioning points.

\begin{figure}[p]
\centering
\includegraphics[width=\linewidth,height=0.84\textheight,keepaspectratio]{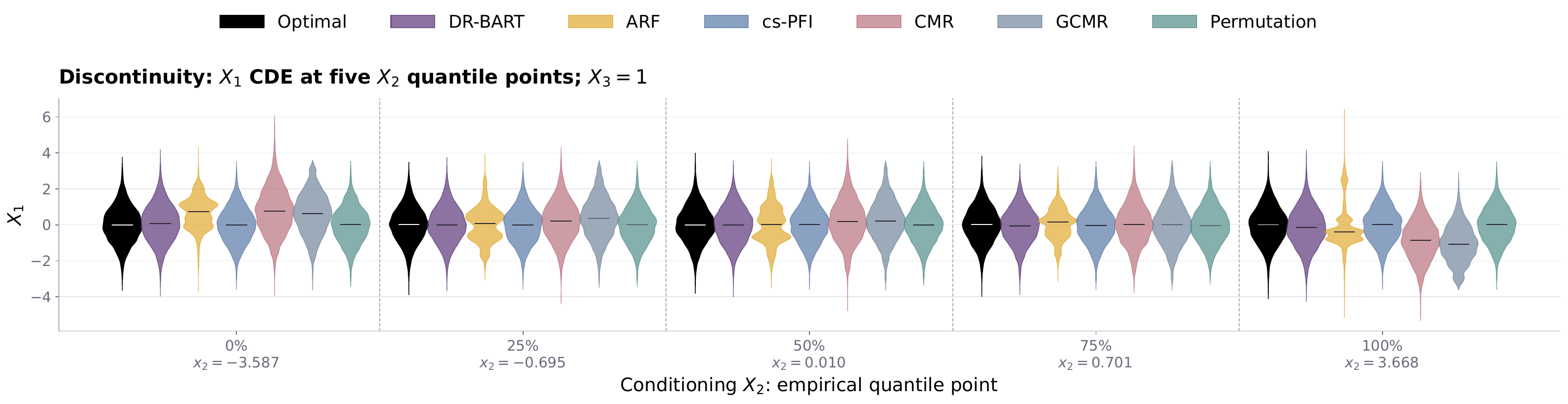}
\caption{Estimated conditional replacement distributions for $X_1$ in the discontinuity DGP at five empirical quantiles of $X_2$, with $X_3=1$. The analytical conditional distribution is shown as Optimal.}
\label{app:fig:cde-regime-dependent}
\end{figure}

\paragraph{DGP 2: Correlated features.} Figure~\ref{app:fig:cde-strong-conditional-dependence} illustrates the effect of strong linear dependence. As $X_2$ moves from its lower to upper quantiles, the true conditional distribution of $X_1$ shifts correspondingly while remaining substantially narrower than its marginal distribution. DR-BART closely follows both the changing conditional location and the restricted conditional spread across the full range of $X_2$. In contrast, marginal permutation remains centered around the marginal distribution and therefore entirely misses the correlation-induced restriction, while several other conditional methods capture the trend but exhibit larger errors in location or scale at the extremes.

\begin{figure}[p]
\centering
\includegraphics[width=\linewidth,height=0.84\textheight,keepaspectratio]{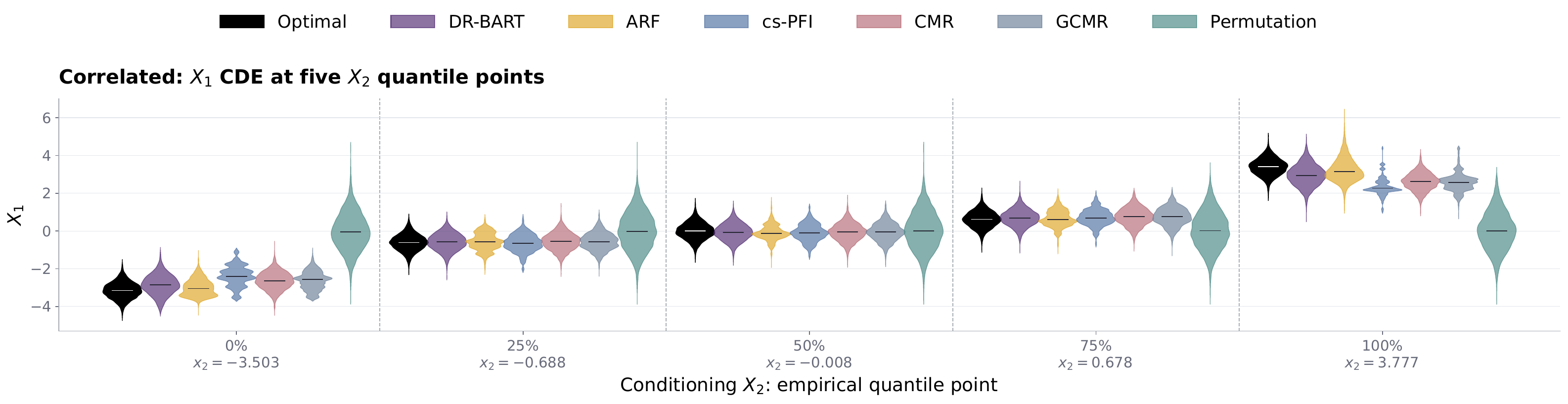}
\caption{Estimated conditional replacement distributions for $X_1$ in the correlated-features DGP at five empirical quantiles of $X_2$.}
\label{app:fig:cde-strong-conditional-dependence}
\end{figure}

\paragraph{DGP 3: Heteroskedastic conditional dependence.} Figure~\ref{app:fig:cde-heteroskedastic} isolates changes in conditional variance while keeping the conditional mean fixed at zero. The analytical distribution becomes increasingly concentrated as $X_2$ approaches zero and substantially more dispersed in the tails. DR-BART reproduces this local scale variation particularly closely, including the near-degenerate distribution around the median of $X_2$ and the much wider distributions at the extreme quantiles. Methods that rely on permuting residual distributions cannot preserve this heteroskedastic structure, while subgroup-based methods recover some scale variation but less uniformly across conditioning points.

\begin{figure}[p]
\centering
\includegraphics[width=\linewidth,height=0.84\textheight,keepaspectratio]{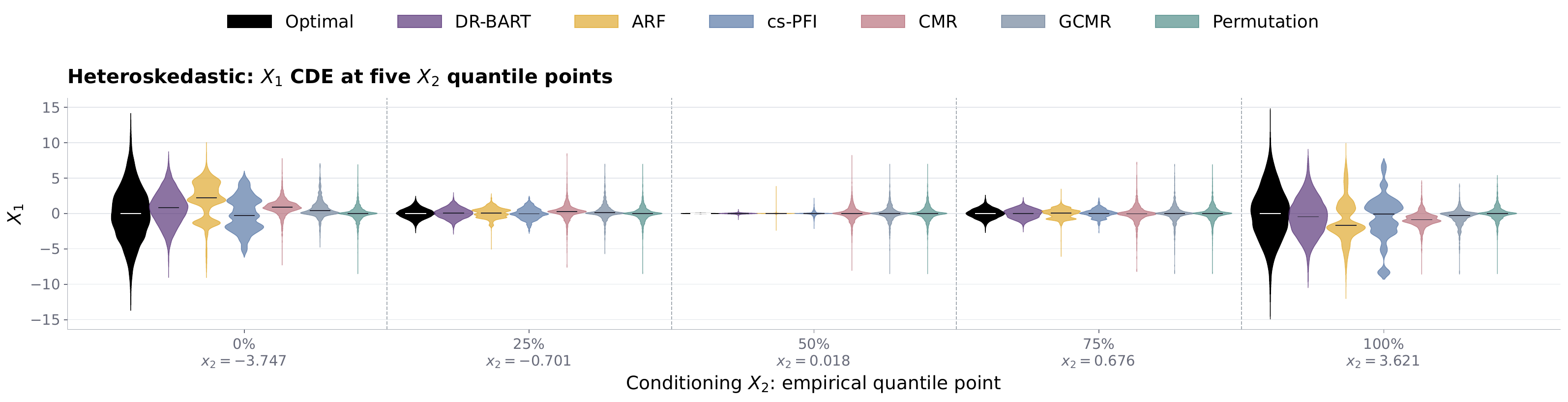}
\caption{Estimated conditional replacement distributions for $X_1$ in the heteroskedastic DGP at five empirical quantiles of $X_2$.}
\label{app:fig:cde-heteroskedastic}
\end{figure}

\paragraph{DGP 4: Interacted discontinuity.} Figure~\ref{app:fig:cde-interaction} presents the most structurally challenging conditional mean in our five DGPs. With $X_3=1$, the conditional distribution of $X_1$ undergoes an abrupt location shift as $X_2$ crosses zero while maintaining approximately constant conditional variance. DR-BART accurately captures both sides of this discontinuity and transitions sharply at the relevant boundary. ARF, CMR, and GCMR also capture much of the location shift, whereas cs-PFI and marginal permutation fail to reproduce the correct regime-specific conditional location in this configuration. This example highlights the importance of representing interactions in the conditioning variables rather than only marginal or coarse subgroup structure.

\begin{figure}[p]
\centering
\includegraphics[width=\linewidth,height=0.84\textheight,keepaspectratio]{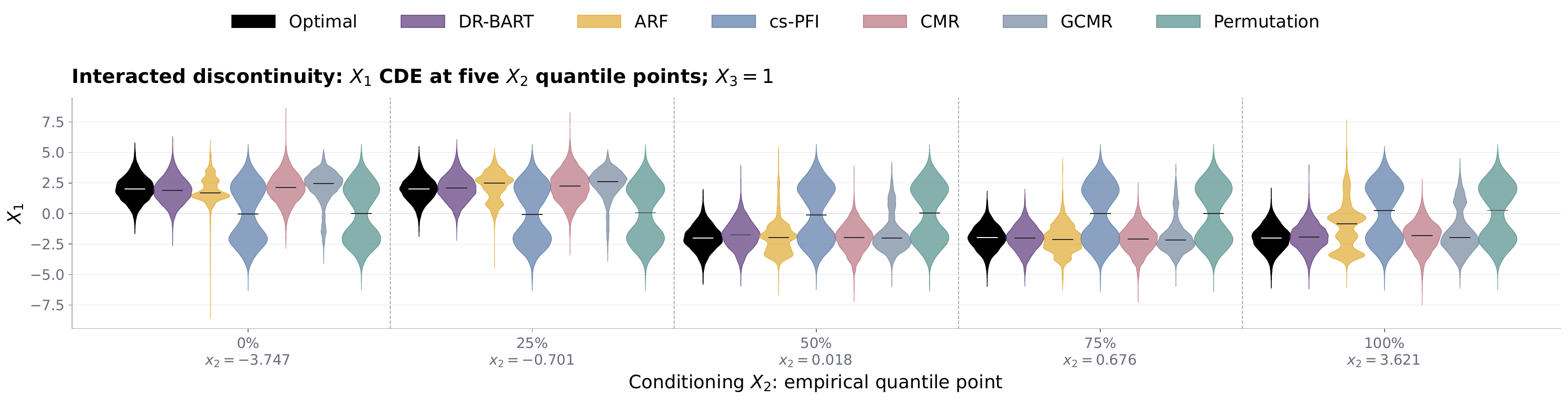}
\caption{Estimated conditional replacement distributions for $X_1$ in the interacted-discontinuity DGP at five empirical quantiles of $X_2$, with $X_3=1$.}
\label{app:fig:cde-interaction}
\end{figure}

\paragraph{DGP 5: Extreme tail.} Figure~\ref{app:fig:cde-extreme-tail} shows the conditional distributions in the extreme-tail setting. Although the conditional mean remains zero, the conditional variance changes across the sign of $X_2$, producing a moderate but important change in distributional spread. DR-BART again closely follows the analytical distribution across all five conditioning points and preserves the increase in conditional dispersion for positive $X_2$. Several alternative methods introduce noticeable local shifts in the conditional mean or fail to reproduce the change in scale, while marginal permutation necessarily uses essentially the same unconditional replacement distribution throughout.

\begin{figure}[p]
\centering
\includegraphics[width=\linewidth,height=0.84\textheight,keepaspectratio]{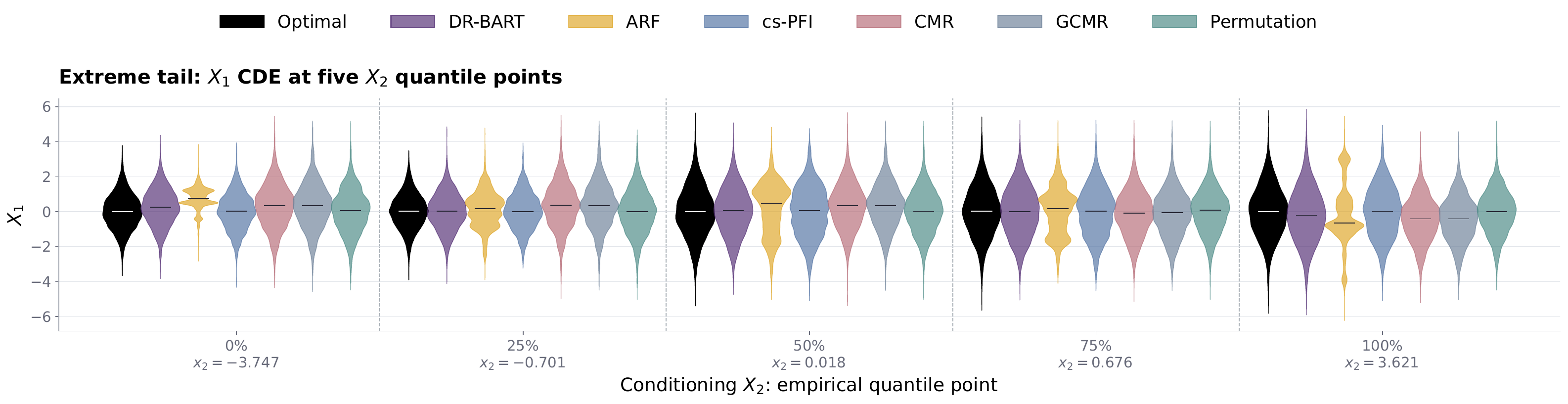}
\caption{Estimated conditional replacement distributions for $X_1$ in the extreme-tail DGP at five empirical quantiles of $X_2$.}
\label{app:fig:cde-extreme-tail}
\end{figure}

Taken together, these results show that the advantage of DR-BART is not tied to a single type of conditional dependence. It remains stable across independence, smooth location shifts, heteroskedasticity, discontinuous interactions, and sign-dependent changes in scale. This flexibility is particularly relevant for \LCVI, since errors in the local replacement distribution directly propagate to errors in observation-level importance.

\subsection{Complete Pointwise Results}
\label{app:dgp_pointwise}

We provide the complete observation-level local variable importance for all methods and all five DGPs in Figures~\ref{app:fig:regime-dependent-all-methods}--\ref{app:fig:extreme-tail-all-methods}. Each panel compares the estimated \LCVI with its analytical value across evaluation observations, with the corresponding mean squared error (MSE) reported above each feature. These figures complement the aggregate MSE comparisons in the main text by showing where estimation errors arise across the covariate space.

\textbf{Note: MSE is reported uniformly across all methods for completeness; however, the values for SHAP, conditional SHAP, and LIME should not be interpreted as directly comparable estimation errors because these methods target different estimands from \LCVI.}

\begin{figure}[p]
\centering
\includegraphics[width=\linewidth,height=0.88\textheight,keepaspectratio]{figures/appendix_paper/regime_dependent_all_methods.pdf}
\caption{Complete pointwise \LCVI results for the \textbf{regime-dependent} DGP. Estimated local variable importance values from all conditional sampling methods are compared with the corresponding analytical \LCVI across evaluation observations.}
\label{app:fig:regime-dependent-all-methods}
\end{figure}

\begin{figure}[p]
\centering
\includegraphics[width=\linewidth,height=0.88\textheight,keepaspectratio]{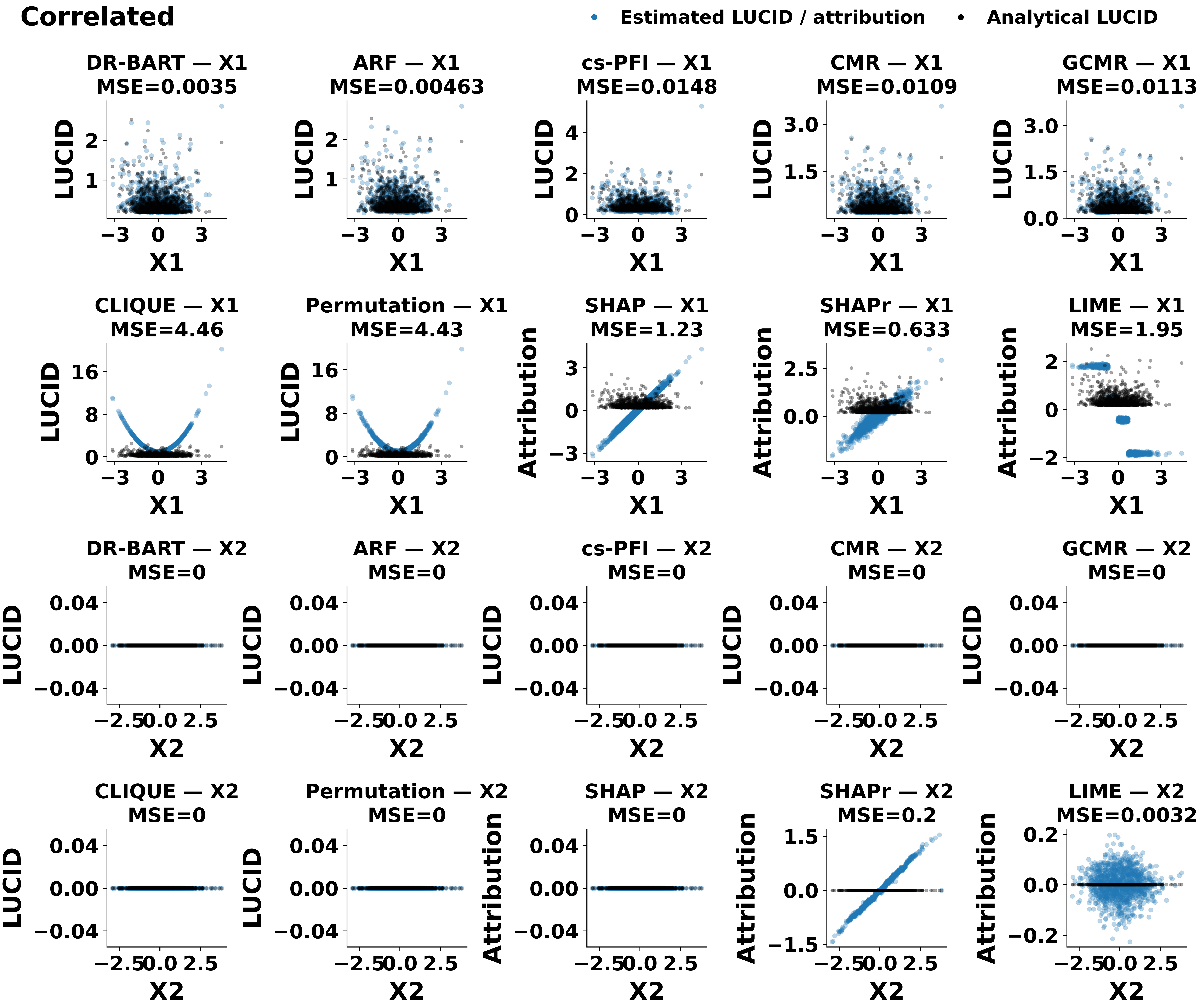}
\caption{Complete pointwise \LCVI results for the \textbf{strong conditional dependence} DGP. Estimated local variable importance values from all conditional sampling methods are compared with the corresponding analytical \LCVI across evaluation observations.}
\label{app:fig:strong-conditional-dependence-all-methods}
\end{figure}

\begin{figure}[p]
\centering
\includegraphics[width=\linewidth,height=0.88\textheight,keepaspectratio]{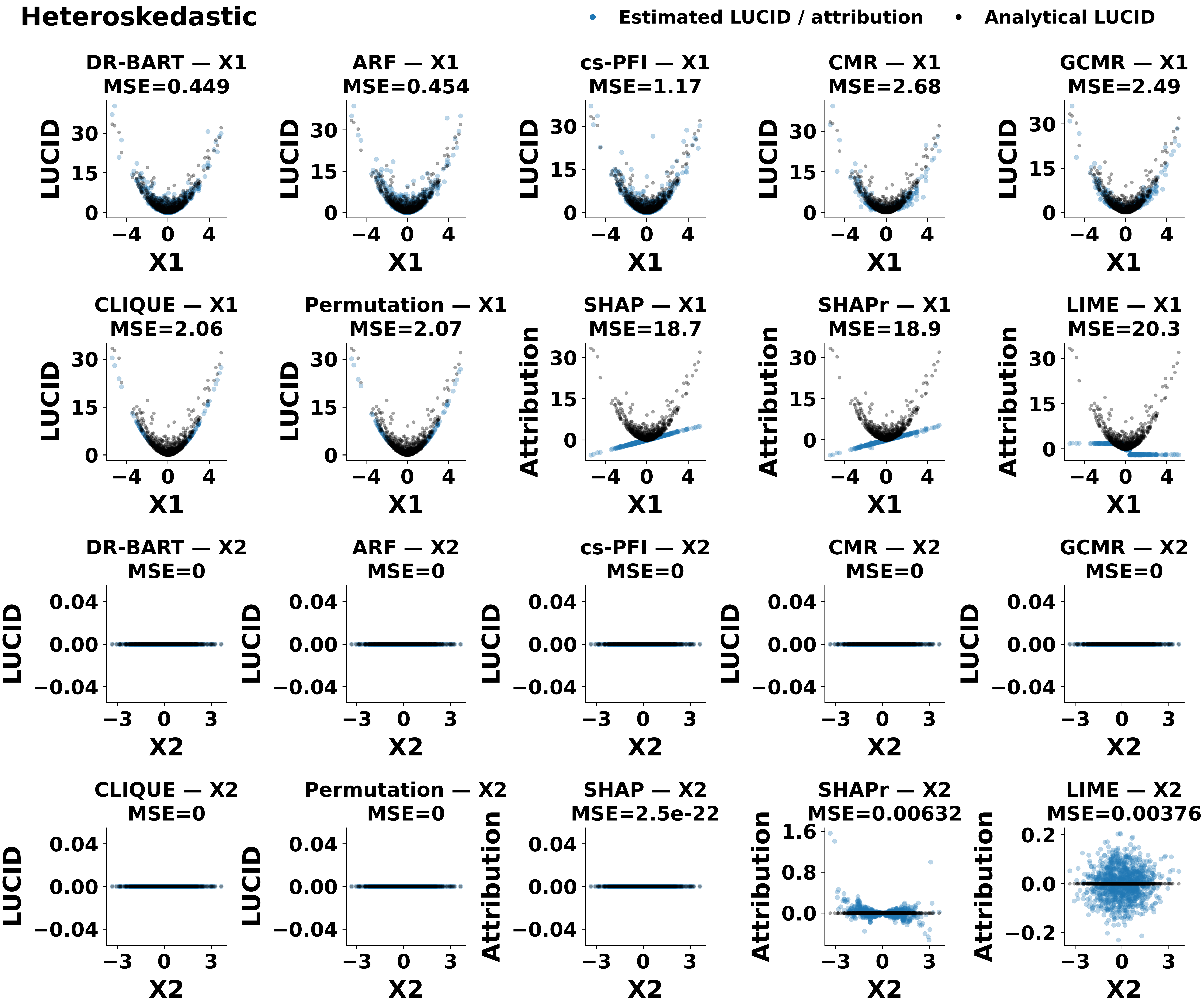}
\caption{Complete pointwise \LCVI results for the \textbf{heteroskedastic conditional dependence} DGP. Estimated local variable importance values from all conditional sampling methods are compared with the corresponding analytical \LCVI across evaluation observations.}
\label{app:fig:heteroskedastic-conditional-dependence-all-methods}
\end{figure}

\begin{figure}[p]
\centering
\includegraphics[width=\linewidth,height=0.88\textheight,keepaspectratio]{figures/appendix_paper/interaction_conditional_dependence_all_methods.pdf}
\caption{Complete pointwise \LCVI results for the \textbf{interaction conditional dependence} DGP. Estimated local variable importance values from all conditional sampling methods are compared with the corresponding analytical \LCVI across evaluation observations.}
\label{app:fig:interaction-conditional-dependence-all-methods}
\end{figure}

\begin{figure}[p]
\centering
\includegraphics[width=\linewidth,height=0.88\textheight,keepaspectratio]{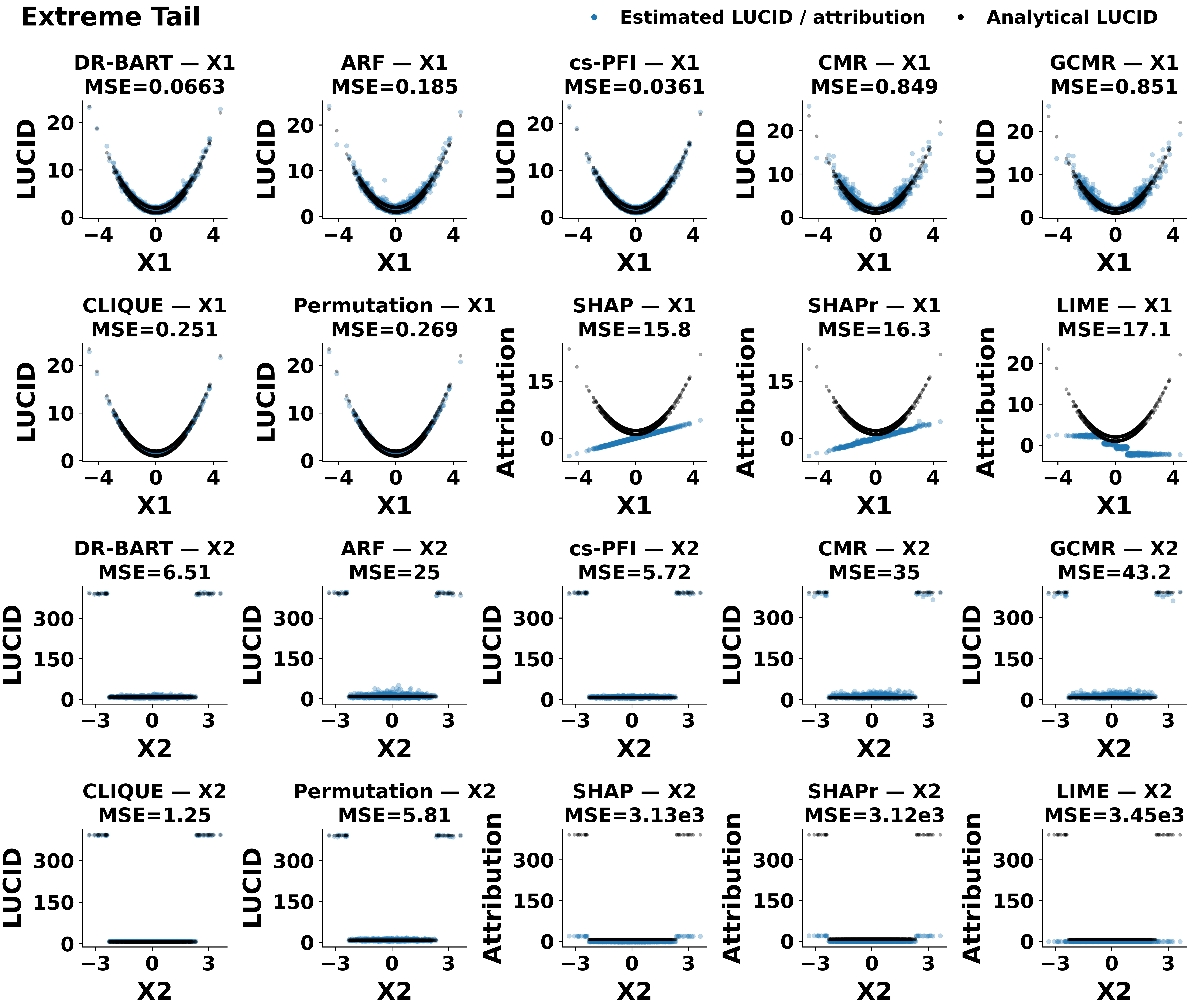}
\caption{Complete pointwise \LCVI results for the \textbf{extreme-tail} DGP. Estimated local variable importance values from all conditional sampling methods are compared with the corresponding analytical \LCVI across evaluation observations.}
\label{app:fig:extreme-tail-all-methods}
\end{figure}
\subsection{Additional Posterior Credible Interval Results}
\label{app:credible_interval_additional}

We further evaluate the calibration of the DR-BART posterior \LCVI intervals across multiple nominal coverage levels for observations where \(X_1\) is active in the prediction function. Table~\ref{tab:dgp-credible-interval-additional} reports the empirical coverage ($C$) and average interval width ($W$) for nominal levels of $50\%$, $80\%$, $90\%$, and $95\%$. Across the five DGPs, empirical coverage generally increases with the nominal level as expected, while interval widths increase accordingly. Coverage is close to or above nominal in most settings, except that DGP 3 shows mild undercoverage at the 95\% nominal level, where the more extreme heteroskedastic observations remain challenging in finite samples. Figures~\ref{app:fig:dgp-ci-50}--\ref{app:fig:dgp-ci-95} show the corresponding pointwise posterior credible intervals for $X_1$ across all five DGPs. Observations are ordered by the observed value of $X_1$, and we randomly select only $1000$ points to avoid a crowded plot. Each figure compares the posterior mean and credible interval with the analytical \LCVI, highlighting observations for which the analytical value is covered or missed.

\begin{table}[t]
\centering
\caption{Empirical coverage ($C$) and average width ($W$) of posterior \LCVI credible intervals for observations where \(X_1\) is active in the prediction function across nominal levels.}
\label{tab:dgp-credible-interval-additional}
\setlength{\tabcolsep}{4pt}
\renewcommand{\arraystretch}{1.03}
\begin{tabular}{lrrrrrrrr}
\toprule
& \multicolumn{2}{c}{50\%} & \multicolumn{2}{c}{80\%} & \multicolumn{2}{c}{90\%} & \multicolumn{2}{c}{95\%} \\
\cmidrule(lr){2-3}
\cmidrule(lr){4-5}
\cmidrule(lr){6-7}
\cmidrule(lr){8-9}
\textbf{DGP} & $C$ & $W$ & $C$ & $W$ & $C$ & $W$ & $C$ & $W$ \\
\midrule
DGP 1 & 0.686 & 0.311 & 0.948 & 0.596 & 0.984 & 0.766 & 0.994 & 0.910 \\
DGP 2 & 0.519 & 0.063 & 0.828 & 0.121 & 0.917 & 0.155 & 0.958 & 0.184 \\
DGP 3 & 0.640 & 0.382 & 0.871 & 0.734 & 0.913 & 0.944 & 0.929 & 1.123 \\
DGP 4 & 0.606 & 0.393 & 0.872 & 0.753 & 0.932 & 0.969 & 0.956 & 1.152 \\
DGP 5 & 0.676 & 0.387 & 0.929 & 0.742 & 0.971 & 0.956 & 0.985 & 1.138 \\
\bottomrule
\end{tabular}
\end{table}

\begin{figure}[p]
\centering
\includegraphics[width=\linewidth,height=0.84\textheight,keepaspectratio]{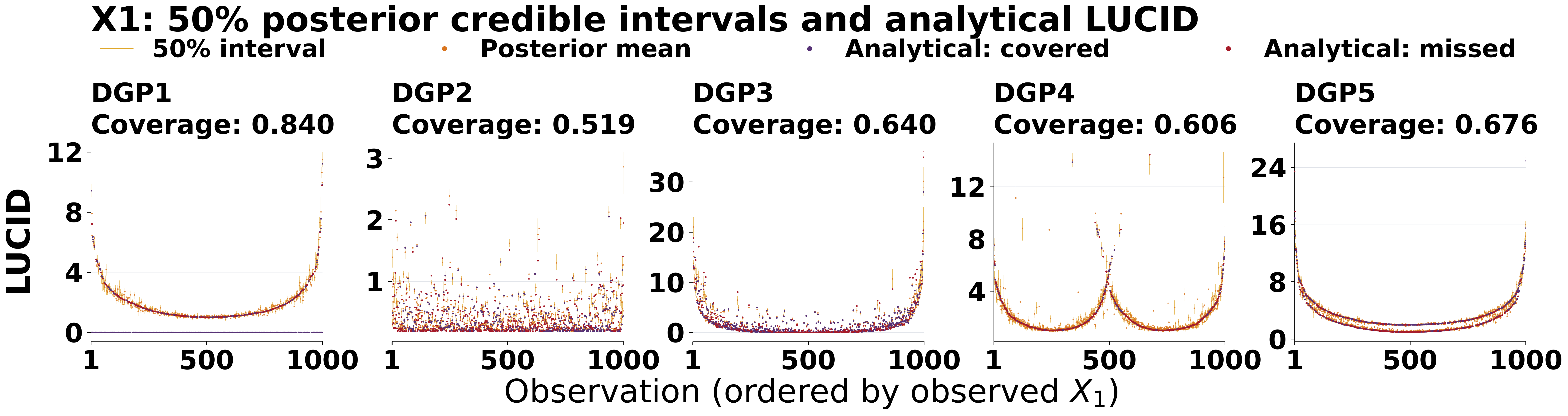}
\caption{\textbf{50\% posterior credible intervals.} Pointwise posterior \LCVI credible intervals for $X_1$ across the five DGPs at the $50\%$ nominal level. Posterior means are compared with analytical \LCVI values, with covered and missed analytical values indicated separately.}
\label{app:fig:dgp-ci-50}
\end{figure}

\begin{figure}[p]
\centering
\includegraphics[width=\linewidth,height=0.84\textheight,keepaspectratio]{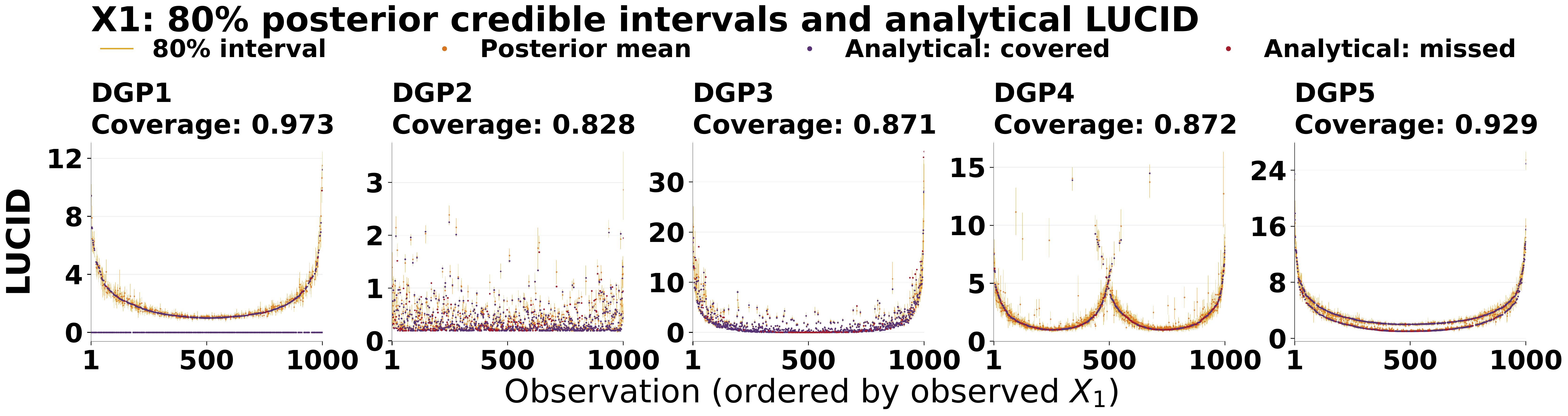}
\caption{\textbf{80\% posterior credible intervals.} Pointwise posterior \LCVI credible intervals for $X_1$ across the five DGPs at the $80\%$ nominal level. Posterior means are compared with analytical \LCVI values, with covered and missed analytical values indicated separately.}
\label{app:fig:dgp-ci-80}
\end{figure}

\begin{figure}[p]
\centering
\includegraphics[width=\linewidth,height=0.84\textheight,keepaspectratio]{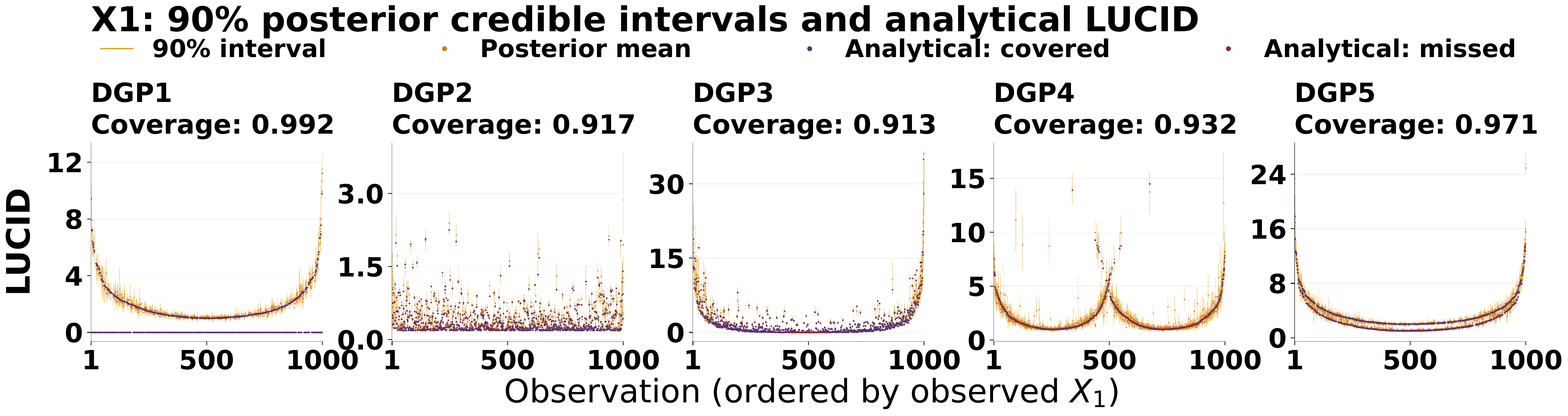}
\caption{\textbf{90\% posterior credible intervals.} Pointwise posterior \LCVI credible intervals for $X_1$ across the five DGPs at the $90\%$ nominal level. Posterior means are compared with analytical \LCVI values, with covered and missed analytical values indicated separately.}
\label{app:fig:dgp-ci-90}
\end{figure}

\begin{figure}[p]
\centering
\includegraphics[width=\linewidth,height=0.84\textheight,keepaspectratio]{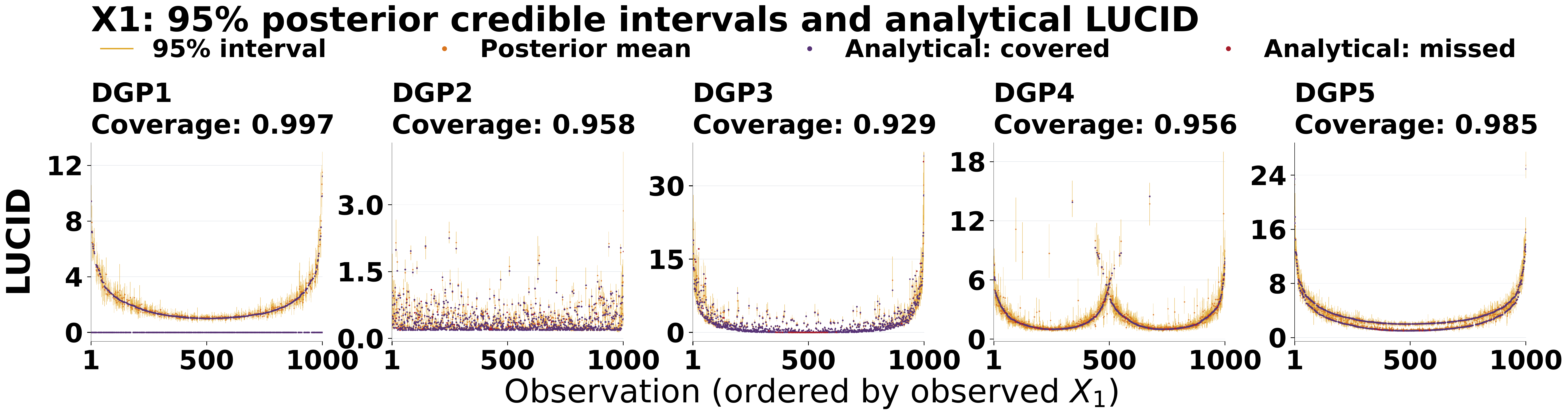}
\caption{\textbf{95\% posterior credible intervals.} Pointwise posterior \LCVI credible intervals for $X_1$ across the five DGPs at the $95\%$ nominal level. Posterior means are compared with analytical \LCVI values, with covered and missed analytical values indicated separately.}
\label{app:fig:dgp-ci-95}
\end{figure}

\clearpage
\subsection{Additional Results on the cs-PFI Simulation}
\label{app:add_results_cspfi}

To further evaluate \LCVI estimation in higher-dimensional settings, we consider the ground-truth simulation introduced by \citet{molnar2024model} for evaluating conditional permutation feature importance. We retain the original scenario names---\textit{independent}, \textit{linear}, \textit{multiple linear dependencies}, and \textit{non-linear}---and vary the sample size and ambient dimension as in their experiment. Following their setting~(I), we evaluate the true prediction function directly. Outcome noise is omitted because the \LCVI calculation does not use the outcome. We only care about the given model.

The prediction function is
\begin{equation}
g(\bm x)=x_1x_2+\sum_{j=1}^{10}x_j,
\label{app:eq:cspfi-model}
\end{equation}
where all features except $X_1$ are independently distributed as $X_j\sim\mathcal N(0,1)$. Feature $X_1$ is generated according to
\begin{equation}
X_1=h(\bm X_{-1})+\epsilon_x,
\label{app:eq:cspfi-x1}
\end{equation}
with $h$ and the distribution of $\epsilon_x$ varying across the four scenarios. The original simulation was designed to range from independence to increasingly complex conditional dependence structures, including dependence on many covariates and discontinuous changes in both the conditional mean and variance \citep{molnar2024model}. In the first three scenarios, \(\epsilon_x\) is independent of \(\bm X_{-1}\) and \(\sigma_x(\bm X_{-1})\) in the last scenario denotes the conditional variance, rather than the conditional standard deviation.

\begin{table}[h]
\centering
\caption{Data-generating processes from the cs-PFI simulation and the corresponding analytical \LCVI and CPFI for $X_1$ under squared loss.}
\label{tab:cspfi-dgp-summary}
\renewcommand{\arraystretch}{1.05}
\setlength{\tabcolsep}{4pt}
\begin{tabularx}{\linewidth}{
    @{}
    >{\raggedright\arraybackslash}p{0.19\linewidth}
    >{\raggedright\arraybackslash}X
    >{\raggedright\arraybackslash}p{0.28\linewidth}
    >{\raggedright\arraybackslash}p{0.12\linewidth}
    @{}
}
\toprule
\textbf{Setting} & \textbf{Conditional distribution of $X_1$} & \textbf{Analytical \LCVI} & \textbf{CPFI} \\
\midrule
\textbf{Independent} & $h(\bm X_{-1})=0$, $\epsilon_x\sim\mathcal N(0,1)$ & $(x_2+1)^2(1+x_1^2)$ & $4$ \\
\\\midrule\\
\textbf{Linear} & $h(\bm X_{-1})=X_2$, $\epsilon_x\sim\mathcal N(0,1)$ & $(x_2+1)^2\{1+(x_1-x_2)^2\}$ & $4$ \\
\\\midrule\\
\textbf{Multiple linear dependencies} & $h(\bm X_{-1})=\sum_{j=2}^{10}X_j$, $\epsilon_x\sim\mathcal N(0,5)$ & $(x_2+1)^2\left\{5+\left(x_1-\sum_{j=2}^{10}x_j\right)^2\right\}$ & $20$ \\
\\\midrule\\
\textbf{Non-linear} & $\begin{aligned} h(\bm X_{-1})={}&3\mathbf 1(X_2>0)-3\mathbf 1(X_2\leq0)\mathbf 1(X_3>0),\\ \epsilon_x\mid \bm X_{-1}&\sim\mathcal N(0,\sigma_x(X)),\\ \sigma_x(X)={}&\mathbf 1(X_2>0)\\&+2\mathbf 1(X_2\leq0)\mathbf 1(X_3>0)\\&+5\mathbf 1(X_2\leq0)\mathbf 1(X_3\leq0) \end{aligned}$ & $(x_2+1)^2\left[\sigma_x(x)+\{x_1-h(x_{-1})\}^2\right]$ & $5.011$ \\
\bottomrule
\end{tabularx}
\end{table}

\paragraph{Analytical calculation.} For all four scenarios, the dependence of the prediction function on $X_1$ can be isolated as
\begin{equation}
g(x_1',\bm x_{-1})-g(x_1,\bm x_{-1})=(x_2+1)(x_1'-x_1).
\end{equation}
Let
\[
m(\bm x_{-1})=\mathbb E[X_1\mid \bm X_{-1}=\bm x_{-1}],\qquad v(\bm x_{-1})=\operatorname{Var}(X_1\mid \bm X_{-1}=\bm x_{-1}).
\]
Under squared loss, the observation-level \LCVI for $X_1$ is therefore
\begin{align}
\mathrm{\LCVI}_{i,1}
&=\mathbb E\left[\{g(X_1',\bm x_{i,-1})-g(x_{i,1},\bm x_{i,-1})\}^2\mid \bm x_{i,-1}\right]\nonumber\\
&=(x_{i,2}+1)^2\mathbb E\left[(X_1'-x_{i,1})^2\mid \bm x_{i,-1}\right]\nonumber\\
&=(x_{i,2}+1)^2\left[v(\bm x_{i,-1})+\{x_{i,1}-m(\bm x_{i,-1})\}^2\right].
\label{app:eq:cspfi-lcvi-general}
\end{align}
Averaging over observations gives the corresponding conditional permutation feature importance,
\begin{align}
\mathrm{CPFI}_1
&=\mathbb E[\mathrm{\LCVI}_{1}]\nonumber\\
&=2\,\mathbb E\left[(X_2+1)^2v(\bm X_{-1})\right],
\label{app:eq:cspfi-cpfi-general}
\end{align}
where the second equality follows because, conditional on $\bm X_{-1}$,
\[
\mathbb E\left[\{X_1-m(\bm X_{-1})\}^2\mid \bm X_{-1}\right]=v(\bm X_{-1}).
\]

For the \textit{independent} setting, $m(\bm X_{-1})=0$ and $v(\bm X_{-1})=1$, giving
\[
\mathrm{\LCVI}_{1}=(X_2+1)^2(1+X_1^2),\qquad \mathrm{CPFI}_1=2\mathbb E[(X_2+1)^2]=4.
\]
For the \textit{linear} setting, $m(\bm X_{-1})=X_2$ and $v(\bm X_{-1})=1$, so
\[
\mathrm{\LCVI}_{1}=(X_2+1)^2\{1+(X_1-X_2)^2\},\qquad \mathrm{CPFI}_1=4.
\]
For the \textit{multiple linear dependencies} setting,
\[
m(\bm X_{-1})=\sum_{j=2}^{10}X_j,\qquad v(\bm X_{-1})=5,
\]
and hence
\[
\mathrm{\LCVI}_{1}=(X_2+1)^2\left[5+\left(X_1-\sum_{j=2}^{10}X_j\right)^2\right],\qquad \mathrm{CPFI}_1=20.
\]

Finally, in the \textit{non-linear} setting,
\[
m(\bm X_{-1})=h(\bm X_{-1}),\qquad v(\bm X_{-1})=\sigma_x(X),
\]
which yields
\[
\mathrm{\LCVI}_{1}=(X_2+1)^2\left[\sigma_x(X)+\{X_1-h(\bm X_{-1})\}^2\right].
\]
Since $X_2$ and $X_3$ are independent standard Gaussian variables,
\begin{align}
\mathrm{CPFI}_1
&=2\mathbb E[(X_2+1)^2\sigma_x(X)]\nonumber\\
&=2\left[\mathbb E\{(X_2+1)^2\mathbf 1(X_2>0)\}+\frac{2+5}{2}\mathbb E\{(X_2+1)^2\mathbf 1(X_2\leq0)\}\right]\nonumber\\
&=2\left[1+\sqrt{\frac{2}{\pi}}+\frac{7}{2}\left(1-\sqrt{\frac{2}{\pi}}\right)\right]\nonumber\\
&\approx 5.011.
\end{align}

\paragraph{Results.} Table~\ref{tab:cspfi-highdim-mse} reports the MSE between estimated and analytical \LCVI for $X_1$. We additionally report the mean rank across all combinations of sample size ($n$), dimension ($p$), and dependence structure.
\begin{table}[t]
\centering
\caption{MSE between estimated and analytical \LCVI for $X_1$ in the cs-PFI simulation settings. Lower values indicate more accurate estimation. The lowest MSE in each setting is shown in bold.}
\label{tab:cspfi-highdim-mse}
\setlength{\tabcolsep}{3.5pt}
\renewcommand{\arraystretch}{1.03}
\begin{tabular}{lrrrrrrr}
\toprule
\textbf{Setting} & \textbf{DR-BART} & \textbf{ARF} & \textbf{cs-PFI} & \textbf{CMR} & \textbf{GCMR} & \textbf{CLIQUE} & \textbf{Perm.} \\
\midrule
\multicolumn{8}{l}{\textbf{Independent}} \\
\quad $n=300,\ p=10$ & 3.30 & 13.23 & 1.29 & 21.94 & 15.68 & \textbf{0.13} & 0.95 \\
\quad $n=300,\ p=90$ & 5.03 & 4.04 & 0.31 & 17.31 & 10.32 & 0.26 & \textbf{0.25} \\
\quad $n=3000,\ p=10$ & 1.21 & 15.65 & 0.22 & 10.89 & 16.05 & \textbf{0.014} & 0.21 \\
\quad $n=3000,\ p=90$ & 1.21 & 7.44 & 0.10 & 9.53 & 8.23 & \textbf{0.004} & 0.10 \\[2pt]
\multicolumn{8}{l}{\textbf{Linear}} \\
\quad $n=300,\ p=10$ & \textbf{26.52} & 170.37 & 73.22 & 60.85 & 45.93 & 302.60 & 367.92 \\
\quad $n=300,\ p=90$ & 30.00 & 206.58 & 24.65 & \textbf{21.86} & 30.50 & 201.73 & 197.66 \\
\quad $n=3000,\ p=10$ & \textbf{5.59} & 31.87 & 7.19 & 18.80 & 16.46 & 304.57 & 312.82 \\
\quad $n=3000,\ p=90$ & \textbf{2.51} & 437.41 & 5.66 & 12.80 & 16.21 & 437.32 & 443.73 \\[2pt]
\multicolumn{8}{l}{\textbf{Multiple linear dependencies}} \\
\quad $n=300,\ p=10$ & \textbf{239.59} & 2336.59 & 8575.39 & 1629.81 & 3037.15 & 7386.04 & 8688.08 \\
\quad $n=300,\ p=90$ & \textbf{848.80} & 4147.11 & 2562.16 & 1668.91 & 1712.87 & 3351.69 & 2931.48 \\
\quad $n=3000,\ p=10$ & \textbf{125.38} & 1028.88 & 2742.75 & 639.26 & 741.14 & 6823.13 & 6856.28 \\
\quad $n=3000,\ p=90$ & \textbf{139.18} & 8502.58 & 4116.04 & 1146.00 & 837.31 & 7948.26 & 7991.29 \\[2pt]
\multicolumn{8}{l}{\textbf{Non-linear}} \\
\quad $n=300,\ p=10$ & 41.87 & 1118.96 & \textbf{6.71} & 122.98 & 246.57 & 1832.60 & 1989.33 \\
\quad $n=300,\ p=90$ & 101.58 & 1619.07 & \textbf{6.26} & 66.22 & 71.33 & 1418.45 & 1429.19 \\
\quad $n=3000,\ p=10$ & 3.86 & 196.79 & \textbf{0.74} & 59.03 & 80.88 & 1572.78 & 1588.26 \\
\quad $n=3000,\ p=90$ & 3.97 & 1654.96 & \textbf{0.26} & 50.32 & 51.35 & 1897.84 & 1899.32 \\
\midrule
\textbf{Mean rank} & \textbf{2.31} & 5.38 & 2.81 & 3.56 & 4.06 & 4.56 & 5.31 \\
\bottomrule
\end{tabular}
\end{table}
\footnote{Ranks are computed from unrounded MSE values.}

Across all 16 settings, DR-BART achieves the best overall mean rank of $2.31$, followed by cs-PFI with a mean rank of $2.81$. The relative performance varies substantially with the structure of the conditional distribution. In the \textit{linear} setting, DR-BART is highly competitive and attains the lowest MSE in three of the four configurations. Its advantage is particularly pronounced in the \textit{multiple linear dependencies} setting, where $X_1$ depends jointly on nine covariates and DR-BART attains the lowest MSE in all four configurations. This setting is challenging for subgroup-based permutation methods because a tree must approximate several simultaneous linear dependencies through repeated partitioning; indeed, \citet{molnar2024model} similarly identify this setting as particularly difficult for cs-PFI.

The \textit{independent} setting represents a different edge case. Because $X_1\perp \bm X_{-1}$, the correct conditional distribution is simply the marginal distribution of $X_1$, so no conditional density estimation is required. Methods that estimate a conditional distribution can therefore incur additional finite-sample estimation error without gaining information, whereas marginal or near-marginal perturbation schemes can perform particularly well. This explains why the simplest methods obtain very small errors in this setting and why the independent case should not be interpreted as favoring a more flexible conditional estimator.

In the \textit{non-linear} setting, cs-PFI performs best in all four configurations, while DR-BART ranks second in three of the four configurations. This DGP is particularly favorable to the subgroup construction used by cs-PFI: both the conditional mean and variance of $X_1$ change across only a small number of axis-aligned regions defined by the signs of $X_2$ and $X_3$. A transformation tree can therefore represent the relevant conditional structure using only a few splits. In contrast, DR-BART estimates the full conditional density through a flexible mixture representation, which is substantially more general than is required for this piecewise structure and consequently pays a larger finite-sample estimation cost. Nevertheless, its error decreases sharply as the sample size increases. For example, when $p=10$, the MSE decreases from $41.87$ at $n=300$ to $3.86$ at $n=3000$, while for $p=90$ it decreases from $101.58$ to $3.97$. Thus, although the finite-sample problem is substantially harder for DR-BART in this setting, the improvement with increasing sample size indicates that it is able to recover the underlying conditional structure with sufficient data.

Another notable result is the poor performance of ARF in these higher-dimensional settings, where it is consistently among the worst-performing methods and often deteriorates markedly as the ambient dimension increases. This provides further empirical support for our motivation to estimate the conditional density directly: approaches such as ARF recover the conditional distribution through an estimated joint distribution of all covariates, a task that becomes increasingly challenging as the feature dimension grows \citep{izbicki2016nonparametric}.

Overall, these experiments emphasize that no conditional sampler uniformly dominates across all dependence structures. Subgroup permutation is particularly effective when the conditional distribution admits a simple tree partition, whereas direct conditional density estimation is substantially more competitive when dependence is smooth or distributed across many covariates. Importantly, \LCVI itself is agnostic to this choice of conditional density estimator; these results therefore characterize the statistical trade-offs of the sampling mechanism rather than a restriction of the \LCVI framework.

\newpage

\section{Additional Bike Sharing Results}
\label{app:bike}

\subsection{Complete Individual \LCVI Results}
\label{app:bike_individual}

We provide the complete individual \LCVI results for all five features in Figure~\ref{fig:main_experiments_bike_sharing_temp_hr_workingday_hum_season} as a complement to the main paper's results. For each feature, the first column shows how its mean \LCVI varies over hour of day on working and non-working days, while the remaining columns show the corresponding DR-BART conditional replacement distributions across season, working-day status, and hour periods, together with the marginal permutation distribution for comparison. The results reveal substantial heterogeneity for temperature, humidity, and season: their plausible replacement values shift across different covariate contexts, and their local importance varies accordingly. In contrast, the conditional replacement distributions of hour and working day are comparatively stable across the displayed contexts.

\begin{figure}[t]
    \centering
    \includegraphics[width=\linewidth]{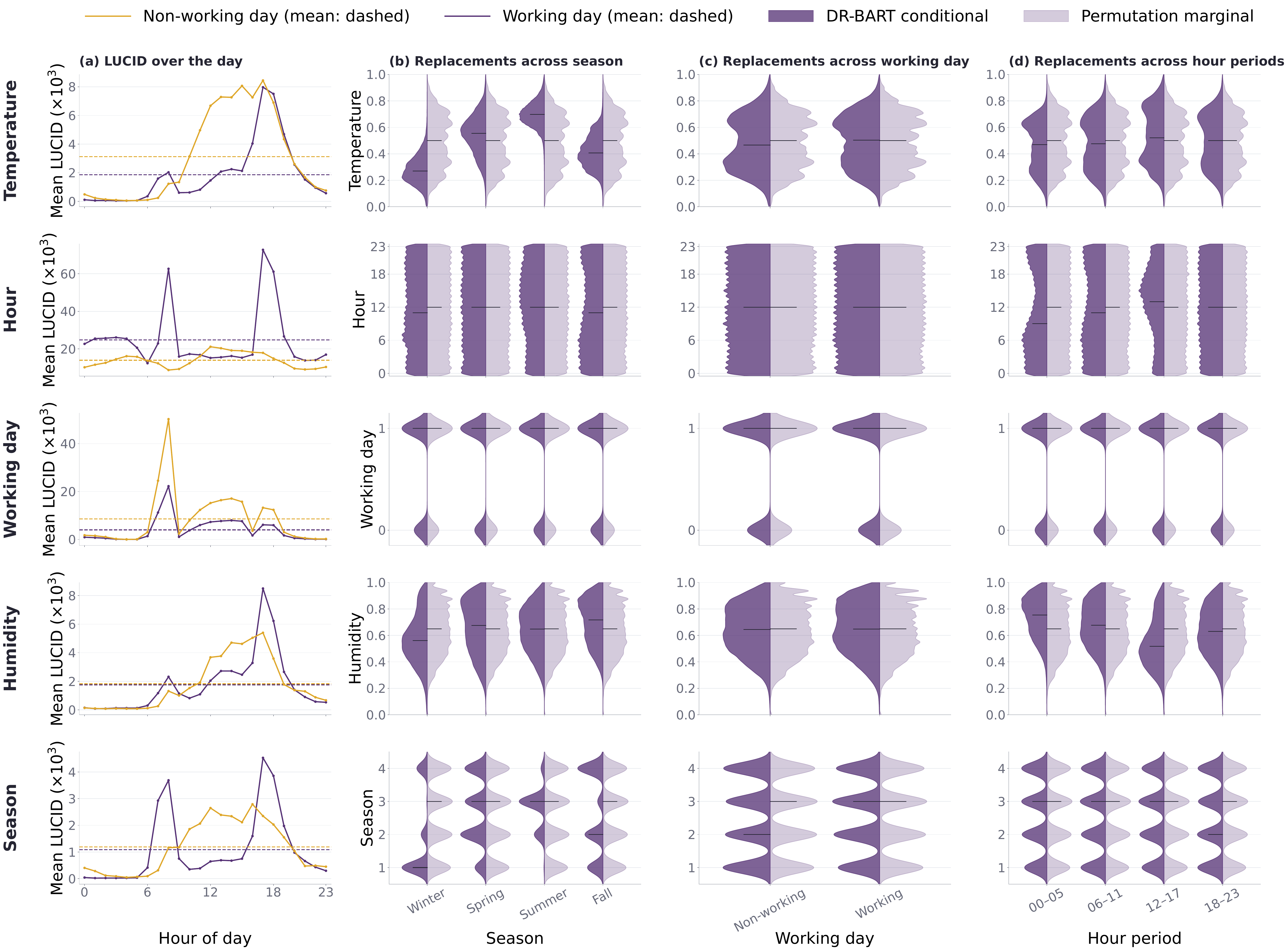}
    \caption{\textbf{Complete individual \LCVI results for the Bike Sharing data.} For temperature, hour, working day, humidity, and season, panel (a) shows mean \LCVI across hour of day separately for working and non-working days. Panels (b)--(d) compare the DR-BART conditional replacement distributions across season, working-day status, and hour periods with the corresponding marginal permutation distributions.}
    \label{fig:main_experiments_bike_sharing_temp_hr_workingday_hum_season}
\end{figure}

\subsection{Feature Dependence and the Interpretation of Individual \LCVI}
\label{app:bike_relationship}

\paragraph{How do feature dependencies inform the interpretation of individual \LCVI?} Figure~\ref{fig:bike_sharing_drbart_relationship_compact} summarizes feature dependencies alongside the estimated individual \LCVI{}s. We report pairwise correlations to describe linear associations and quantify each feature's predictability from the remaining features as
\[
R_j^2
:=1-\frac{\mathbb{E}[\operatorname{Var}(X_j\mid X_{-j})]}{\operatorname{Var}(X_j)}.
\]
High predictability indicates that conditional resampling permits less variation on average relative to marginal resampling because information in this feature is already captured by the remaining features. Consequently, a small individual \LCVI may reflect either limited model sensitivity to the feature or limited variation under conditional replacement. If the remaining features completely determine the feature of interest, then conditional replacement introduces no variation and \LCVI is zero. Conversely, low predictability alone does not imply large \LCVI, because the effect of replacement also depends on the model's response: if the model does not use the feature of interest, its \LCVI remains zero. Moreover, $R_j^2=0$ does not imply independence, since conditional variance or other distributional properties may still depend on the remaining features, as illustrated by the heteroskedastic synthetic example.

Pairwise correlations provide a complementary but weaker diagnostic. By the law of total variance,
\[
R_j^2
=
\frac{\operatorname{Var}\{\mathbb{E}(X_j\mid X_{-j})\}}{\operatorname{Var}(X_j)},
\]
which is the squared correlation ratio of \(X_j\) with respect to \(X_{-j}\). Therefore, for any $k\neq j$,
\[
\operatorname{Corr}(X_j,X_k)^2
\leq
\sup_{\bm a}\operatorname{Corr}(X_j,\bm a^\top\bm X_{-j})^2
\leq
R_j^2.
\]
Thus, pairwise correlation does not capture all forms of predictability, but a strong pairwise correlation is sufficient evidence that the feature of interest may be highly predictable from the remaining covariates and that conditional importance should be interpreted accordingly. In Figure~\ref{fig:bike_sharing_drbart_relationship_compact}, temperature has particularly high predictability, while humidity and season are also meaningfully predictable from the remaining features. In contrast, hour and working day have very low predictability. These patterns help distinguish whether a small or large \LCVI is primarily associated with restrictions in the conditional replacement distribution or with the model's response to the feature.

\textbf{Note:} For binary and categorical features, we replace variance by Gini impurity, $G(p)=1-\sum_c p_c^2$, and compute predictability as $1-\mathbb{E}\{G(p(\cdot\mid X_{-j}))\}/G(p)$. This avoids dependence on arbitrary category codes and agrees with the variance-based measure for binary features. We estimate conditional uncertainty using 200 DR-BART draws per observation, with a finite-sample correction for Gini impurity. DR-BART samples continuous values from its posterior predictive density and discrete values from BART-based class probabilities; ARF samples from its conditional forest distribution; cs-PFI resamples within fitted tree leaves; CMR and GCMR resample residuals in the original and Gaussianized spaces, respectively, without enforcing categorical support. Permutation samples from the empirical marginal, while CLIQUE uses marginal quantiles or category-frequency grids.

\begin{figure}[t]
    \centering
    \includegraphics[width=\linewidth]{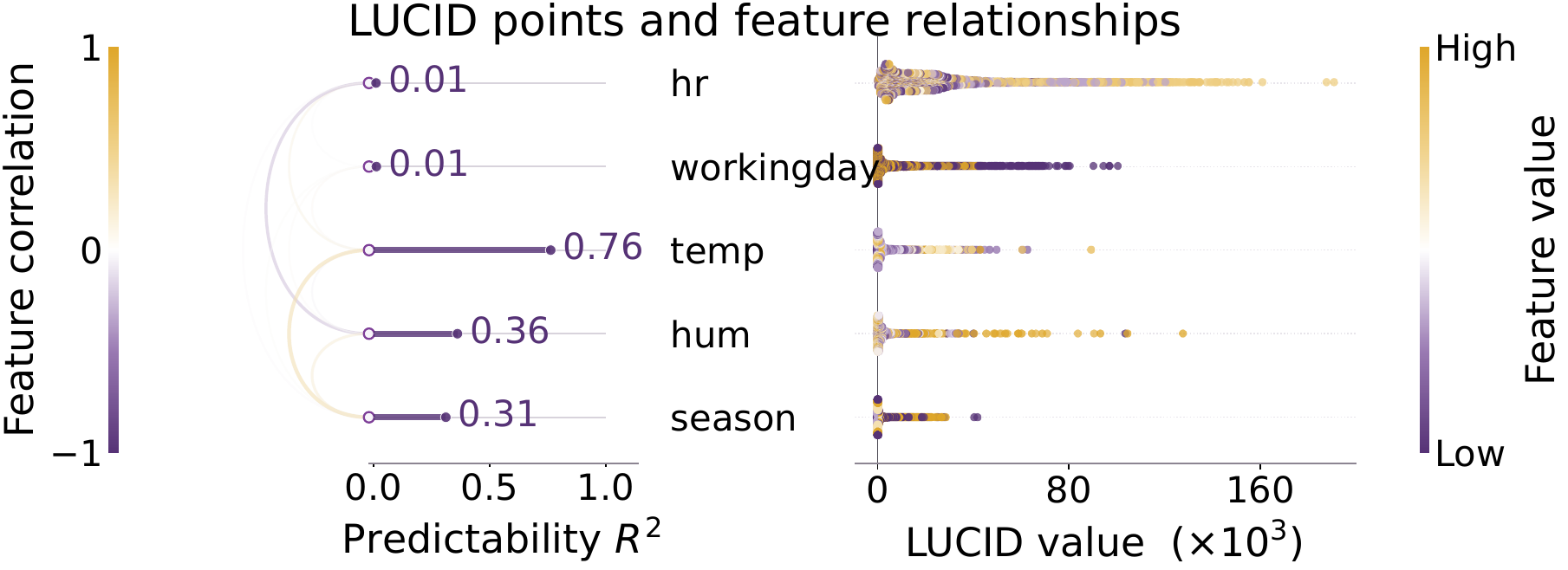}
    \caption{\textbf{Feature relationships and individual \LCVI in the Bike Sharing data.} The left panel summarizes pairwise correlations and each feature's predictability $R_j^2$ from the remaining covariates. The right panel shows observation-level \LCVI values, colored by the observed feature value. Temperature is highly predictable from the remaining features, while hour and working day have very low predictability, providing additional context for interpreting their individual \LCVI values.}
    \label{fig:bike_sharing_drbart_relationship_compact}
\end{figure}

\subsection{Predictability Diagnostics for Feature Groups}
\label{app:bike_group_predictability}

The main paper shows that group \LCVI can complement individual importance when several features contain overlapping or shared information. Here, we provide two additional predictability diagnostics to help interpret a candidate feature group and assess whether its members meaningfully cover one another's information. As defined above, let \(G\) denote a group of \(d=|G|\) features, and let \(X_{-G}\) denote all features outside the group. The first metric is the group outside predictability,
\begin{equation}
R_G^2
=
1-\frac{1}{d}
\sum_{j\in G}
\frac{
\mathbb{E}\!\left[
\operatorname{Var}(X_j\mid X_{-G})
\right]
}{
\operatorname{Var}(X_j)
},
\label{eq:group_outside_predictability}
\end{equation}
which measures, under equal weighting, how well the features outside the group can explain the features within the group.

The second metric, denoted by $R_{G,\mathrm{within}}^2$, measures how much the other features within the group further improve the prediction of each group member relative to using only features outside the group:
\begin{equation}
R_{G,\mathrm{within}}^2
=
1-\frac{1}{d}
\sum_{j\in G}
\frac{
\mathbb{E}\!\left[
\operatorname{Var}(X_j\mid X_{-j})
\right]
}{
\mathbb{E}\!\left[
\operatorname{Var}(X_j\mid X_{-G})
\right]
}.
\label{eq:group_within_predictability}
\end{equation}

At the population level, both metrics lie in \([0,1]\) when their denominators are positive. Finite-sample estimates need not satisfy these bounds exactly. A large $R_G^2$ indicates that the group can already be predicted well from features outside the group, suggesting that the group may contribute limited information beyond the remaining covariates. In contrast, a small $R_G^2$ indicates that the group retains substantial information that outside features cannot easily reconstruct. A large $R_{G,\mathrm{within}}^2$ indicates that on average across group members, the other members substantially reduce residual prediction variance, making joint perturbation particularly informative relative to separate individual perturbations. If $R_{G,\mathrm{within}}^2$ is very small, adding the other group members contributes little additional predictive information, which may suggest reconsidering whether those variables form a substantively meaningful group. 

\begin{table}[t]
\centering
\caption{Group outside predictability $R_G^2$ and within-group predictability $R_{G,\mathrm{within}}^2$ for candidate weather-related feature groups in the Bike Sharing data.}
\label{tab:bike_group_predictability}
\setlength{\tabcolsep}{7pt}
\renewcommand{\arraystretch}{1.05}
\begin{tabular}{lcc}
\toprule
\textbf{Group} & $\boldsymbol{R_G^2}$ & $\boldsymbol{R_{G,\mathrm{within}}^2}$ \\
\midrule
Season + temperature & 0.198 & 0.450 \\
Temperature + humidity & 0.482 & 0.190 \\
Humidity + season & 0.298 & 0.054 \\
Season + humidity + temperature & 0.095 & 0.410 \\
\bottomrule
\end{tabular}
\end{table}

Table~\ref{tab:bike_group_predictability} shows substantial differences among the candidate groups. The three-feature group consisting of season, humidity, and temperature has the lowest outside predictability ($R_G^2=0.095$) while maintaining relatively high within-group predictability ($R_{G,\mathrm{within}}^2=0.410$). This makes it a particularly natural candidate for group \LCVI: the group is difficult to reconstruct from the remaining covariates, yet its members contain substantial overlapping information that can be revealed by joint perturbation. The season--temperature group shows a similar pattern, with low outside predictability ($0.198$) and the highest within-group predictability ($0.450$), also supporting a meaningful grouping interpretation.

In contrast, the temperature--humidity group has substantially higher outside predictability ($R_G^2=0.482$), indicating that much of the group's information can already be recovered from the remaining covariates, while its within-group predictability is more modest ($0.190$). The humidity--season group has moderate outside predictability ($0.298$) but very low within-group predictability ($0.054$), suggesting that these two features provide little additional predictive information about one another once the remaining covariates are known. Thus, among the groups considered here, the three-feature weather group and the season--temperature pair provide the strongest evidence for complementary group analysis, whereas the humidity--season pair is less naturally motivated by shared conditional information.

These metrics are intended as diagnostics rather than a unique grouping rule; substantive knowledge and the scientific question remain important when choosing feature groups. For broader discussions of feature grouping strategies, we refer readers to the literature on grouped variable importance \citep{gregorutti2015grouped,au2022grouped,chamma2024variable}.

Together with Figure \ref{fig:bike_sharing_group_combination_main_2}, these results show that a group's \LCVI is not fully reflected by the sum of its members' individual \LCVI{}s, and that group importance itself varies across the remaining covariate context. Individual and group \LCVI therefore provide complementary views of the learned model's local predictive reliance. As shown throughout this section, the \LCVI formulation and estimator accommodate both perspectives and support richer exploration of the fitted black-box model than marginal feature importance measures alone.

\subsection{Dispersion--Displacement Decomposition}
\label{app:bike_decomposition}

We further examine the squared-loss decomposition of \LCVI into its \textcolor{mypurple}{dispersion} and \textcolor{myyellow}{displacement} components in practice. In Figure~\ref{fig:app_experiments_bike_sharing_decomposition}, $V$ denotes the \textcolor{mypurple}{dispersion} term and $D$ denotes the \textcolor{myyellow}{displacement} term; observations with similar contributions from the two components appear near the balanced region. Averaged over all observations, the decomposition is nearly even, with $V=51.3\%$ and $D=48.7\%$, consistent with Corollary \ref{app:cor:squared-averages}. Importantly, this near equality in the global average masks substantial observation-level heterogeneity in the mechanism driving local importance.

We use the hour feature as an example to show how to interpret the decomposition locally. For hour, which is only weakly predictable from the remaining features, conditional replacement leaves a relatively broad range of plausible hour values. The labeled examples in Figure~\ref{fig:app_experiments_bike_sharing_decomposition} illustrate several qualitatively different regimes. At late-night observations such as example~(1), the observed prediction can be far below the conditional replacement mean, producing a substantial displacement contribution. During the morning and evening commuting peaks on working days, as in example~(2), the original model prediction is substantially higher than the conditional mean of the replacement predictions, so local importance is strongly displacement-driven. At similar hours on non-working days, the observed prediction is often closer to the conditional replacement mean, consistent with the less pronounced commuting pattern on non-working days and the remaining importance is driven more by dispersion across plausible replacements. Around midday, as illustrated by example~(3), displacement is smaller, and variation across replacements contributes a larger share of the total \LCVI. Thus, even for the same feature, a large \LCVI can arise through different local mechanisms.

\begin{figure}[t]
    \centering
    \includegraphics[width=\linewidth]{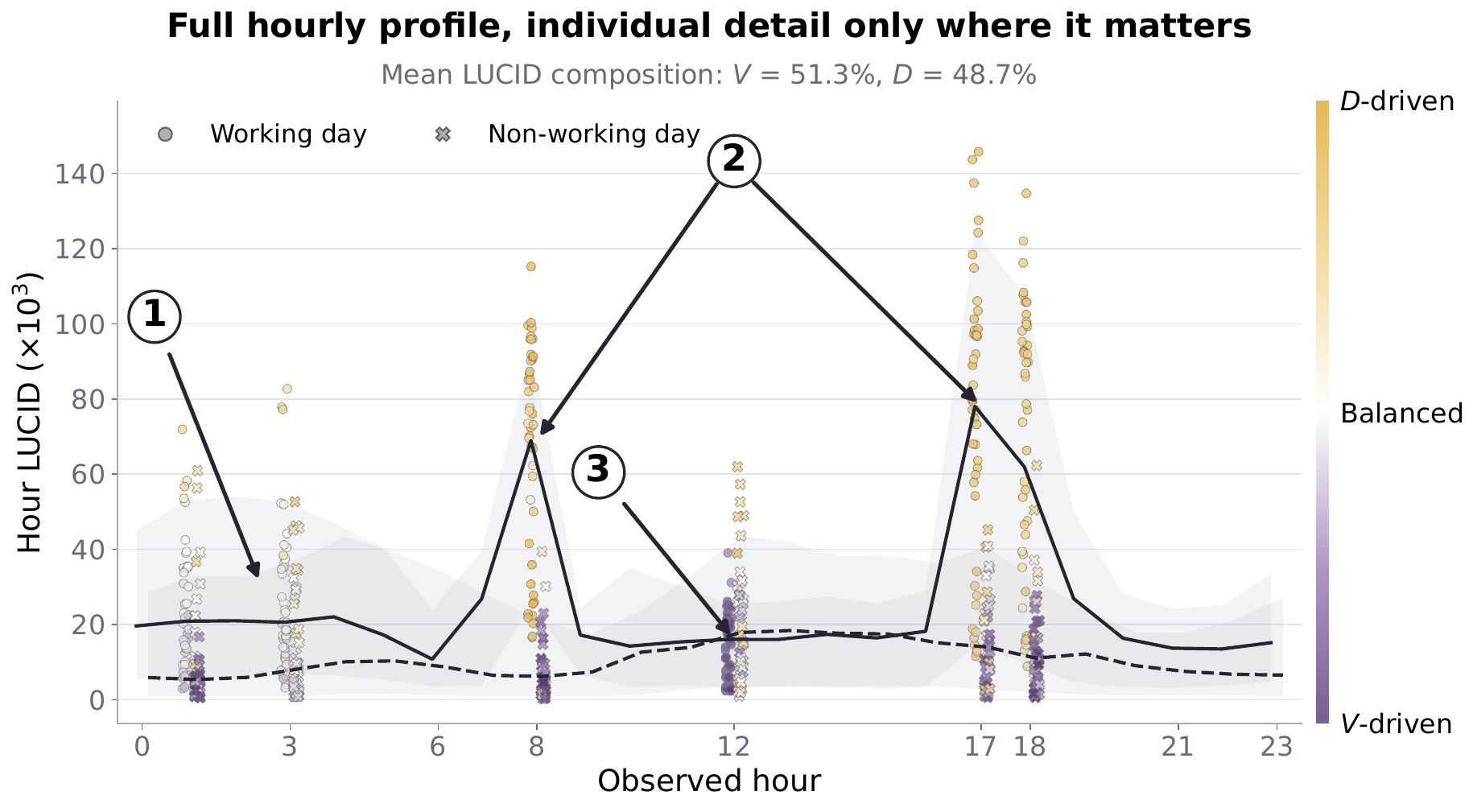}
    \caption{\textbf{Dispersion--displacement decomposition of hour \LCVI.} Observation-level hour \LCVI is decomposed into the \textcolor{mypurple}{dispersion} term $V$ and the \textcolor{myyellow}{displacement} term $D$, the percentages indicate their respective shares of the mean LUCID across observations. Mean LUCID composition: $V=51.3\%$, $D=48.7\%$ of the mean LUCID. Color indicates which component contributes more strongly, while the solid and dashed curves summarize working and non-working days, respectively.}
    \label{fig:app_experiments_bike_sharing_decomposition}
\end{figure}

\subsection{Conditional Replacement Distributions Across Methods}
\label{app:bike_cde_methods}

Finally, Figures~\ref{fig:app_experiments_bike_sharing_cde_hr}--\ref{fig:app_experiments_bike_sharing_cde_season} compare the conditional replacement distributions produced by the different sampling methods across season, working-day status, and hour periods. The goal here is not to identify a ground-truth conditional distribution, which is unavailable in the real data, but to illustrate how the different estimators behave under the same observed covariate contexts. DR-BART, ARF, cs-PFI, and GCMR generally preserve the support and qualitative structure of the observed features, whereas CMR can generate values outside the natural feature range because its residual-based construction is not intrinsically support-preserving; GCMR was designed in part to address this limitation for continuous features. Marginal permutation, by construction, does not adapt its replacement distribution to the conditioning context and therefore misses context-dependent distributional shifts. Differences among the conditional methods are most visible for features such as temperature and humidity, whose replacement distributions vary appreciably across season or hour period.

\textbf{Note on visualization.} For binary and categorical features, the violin shapes use smoothing only for display: the underlying samples from permutation, cs-PFI, DR-BART, and ARF remain on the valid categorical support. Each violin is normalized independently to the same maximum width, so violin widths are comparable only within a distribution and should not be interpreted as categorical probabilities across methods. Methods that generate small off-support continuous values, such as CMR and GCMR, are displayed without the discrete smoothing used for purely categorical samples. As a result, each local mode can appear visually narrower, so do not interpret the apparent width of individual bumps as lower categorical probability mass.

\begin{figure}[t]
    \centering
    \includegraphics[width=\linewidth]{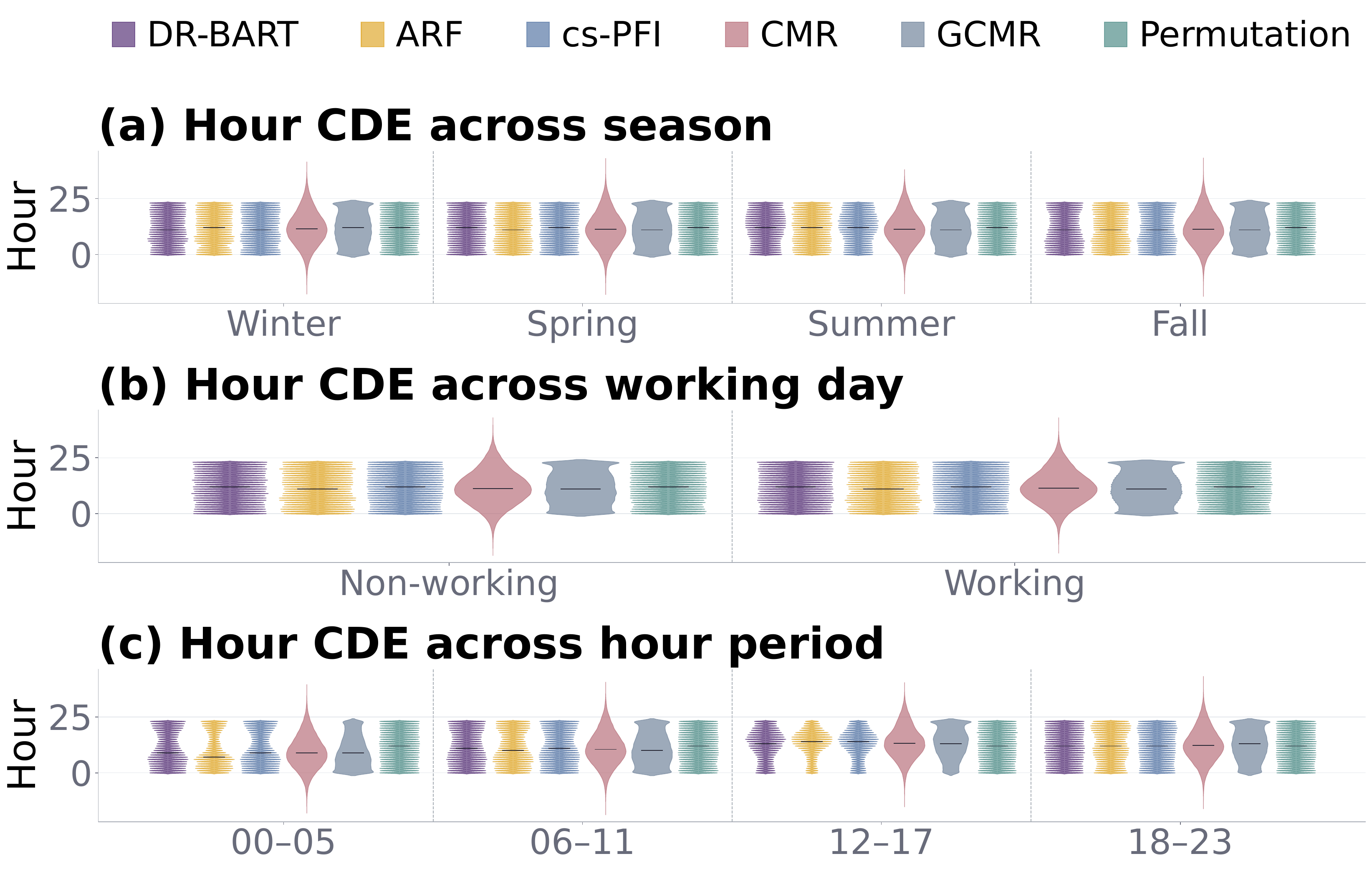}
    \caption{\textbf{Conditional replacement distributions for hour.} Replacement distributions produced by DR-BART, ARF, cs-PFI, CMR, GCMR, and marginal permutation across season, working-day status, and hour periods. Hour is weakly predictable from the remaining features, and its conditional replacement distribution is correspondingly broad and relatively stable across most contexts.}
    \label{fig:app_experiments_bike_sharing_cde_hr}
\end{figure}

\begin{figure}[t]
    \centering
    \includegraphics[width=\linewidth]{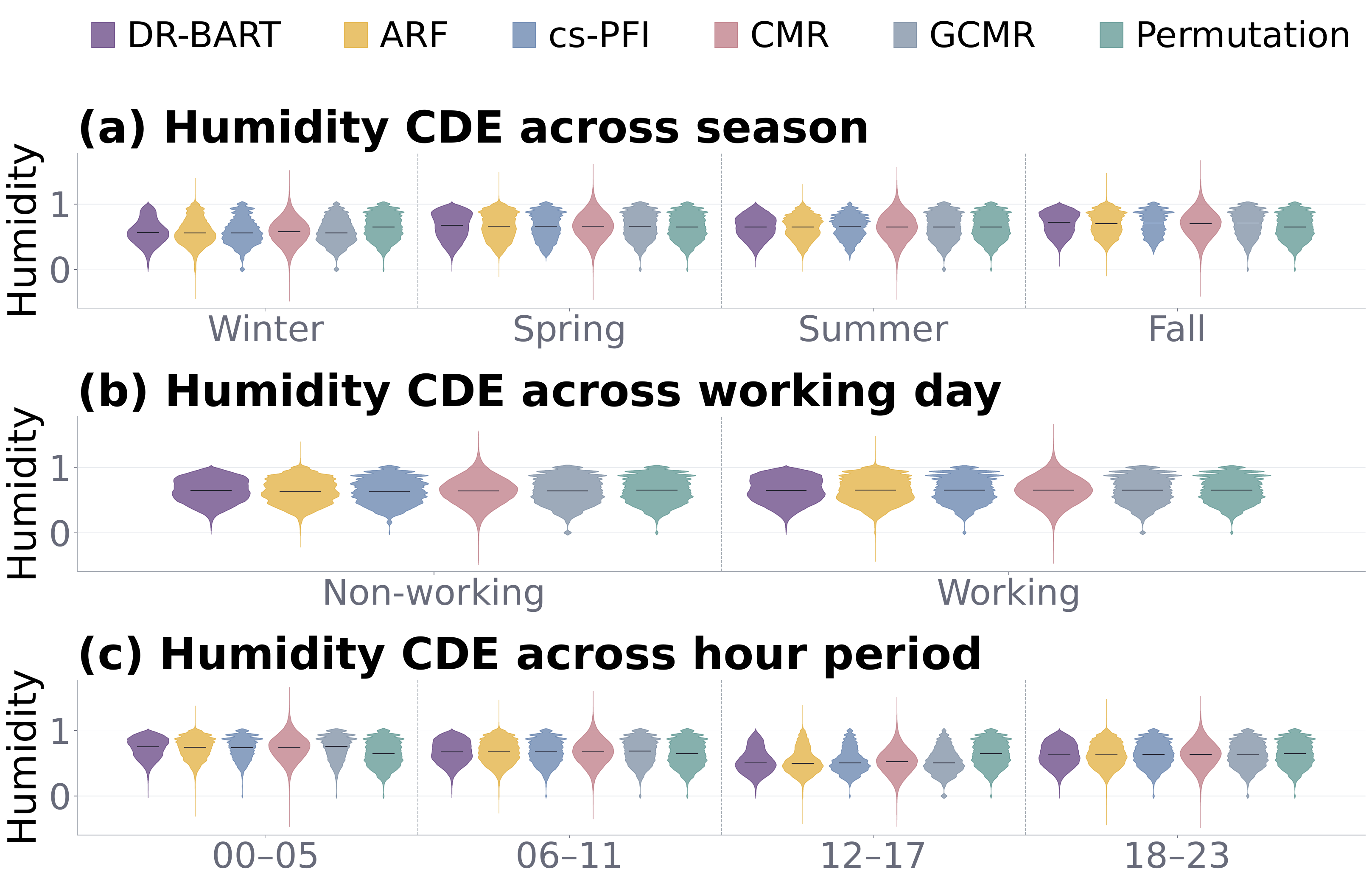}
    \caption{\textbf{Conditional replacement distributions for humidity.} Replacement distributions across season, working-day status, and hour periods. Several conditional methods capture systematic changes in humidity across covariate contexts, particularly across hour periods, whereas marginal permutation uses the same unconditional distribution throughout. CMR also exhibits broader support than the naturally bounded humidity feature.}
    \label{fig:app_experiments_bike_sharing_cde_hum}
\end{figure}

\begin{figure}[t]
    \centering
    \includegraphics[width=\linewidth]{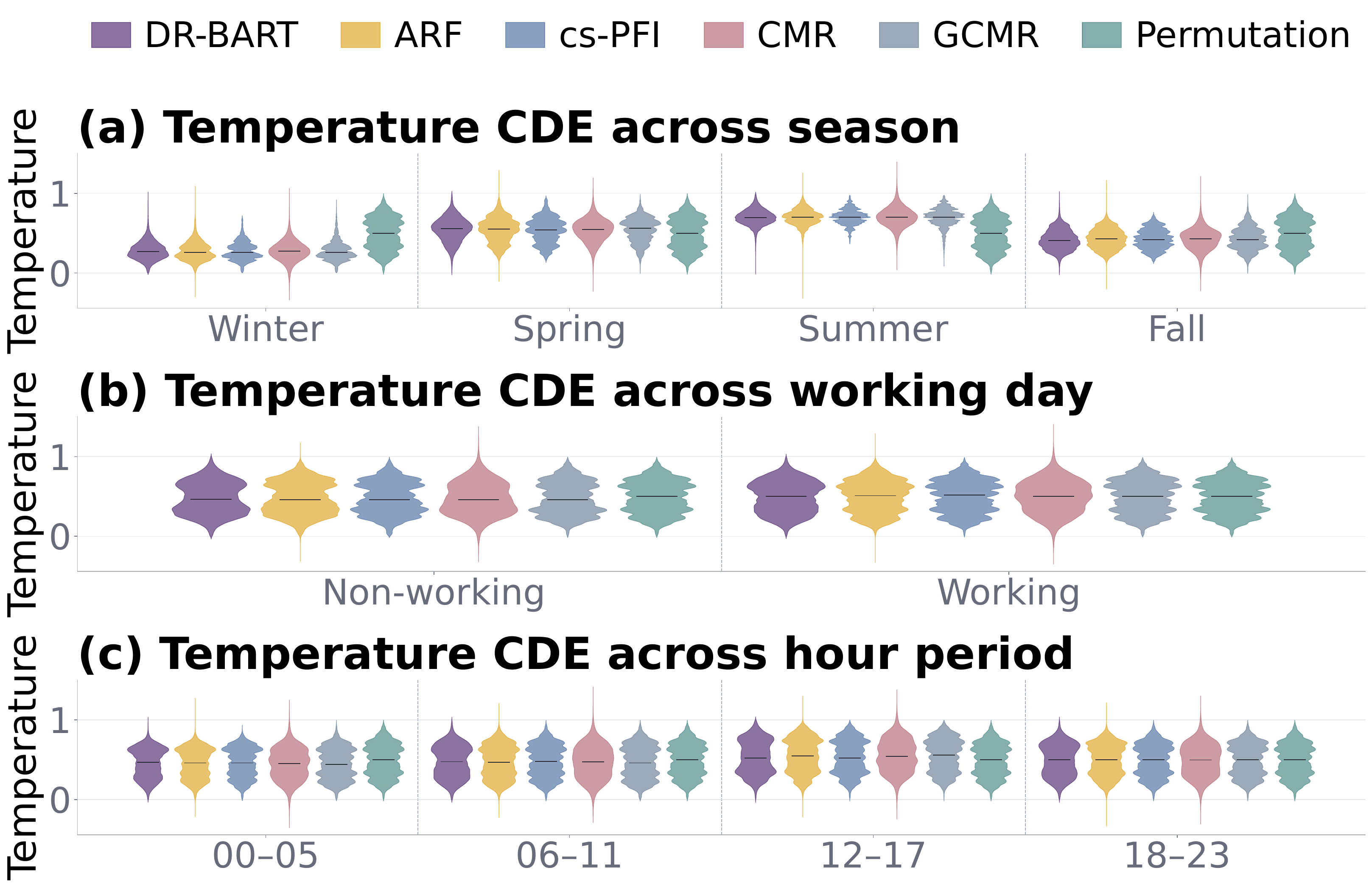}
    \caption{\textbf{Conditional replacement distributions for temperature.} Replacement distributions across season, working-day status, and hour periods. The strongest distributional shifts occur across seasons, where conditional samplers adapt to the changing temperature distribution while marginal permutation remains fixed across contexts.}
    \label{fig:app_experiments_bike_sharing_cde_temp}
\end{figure}

\begin{figure}[t]
    \centering
    \includegraphics[width=\linewidth]{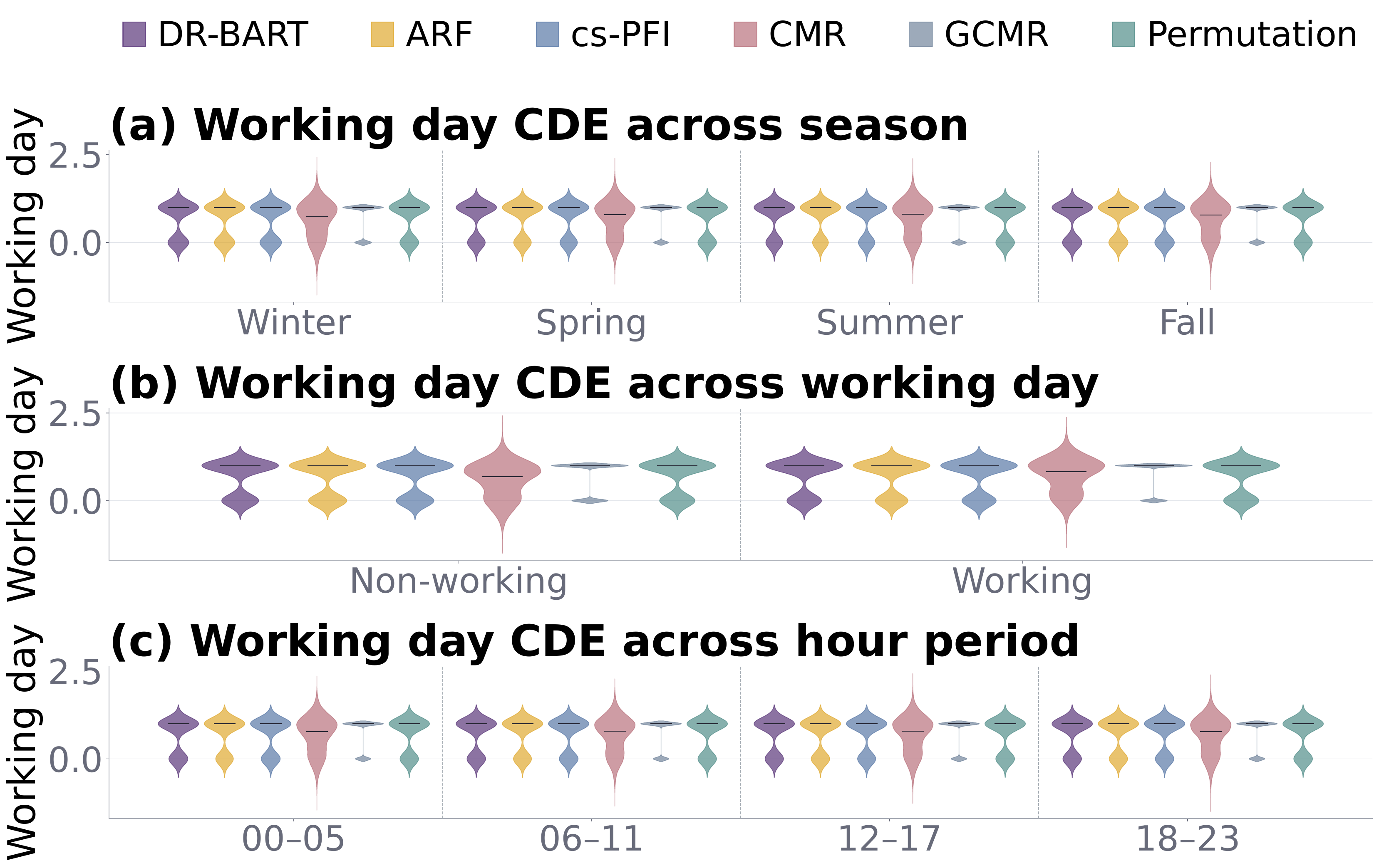}
    \caption{\textbf{Conditional replacement distributions for working day.} Replacement distributions across season, working-day status, and hour periods. Most conditional methods preserve the binary structure of the feature, while CMR can place mass outside the natural $\{0,1\}$ support. The conditional distribution varies relatively little across the displayed contexts.}
    \label{fig:app_experiments_bike_sharing_cde_workingday}
\end{figure}

\begin{figure}[t]
    \centering
    \includegraphics[width=\linewidth]{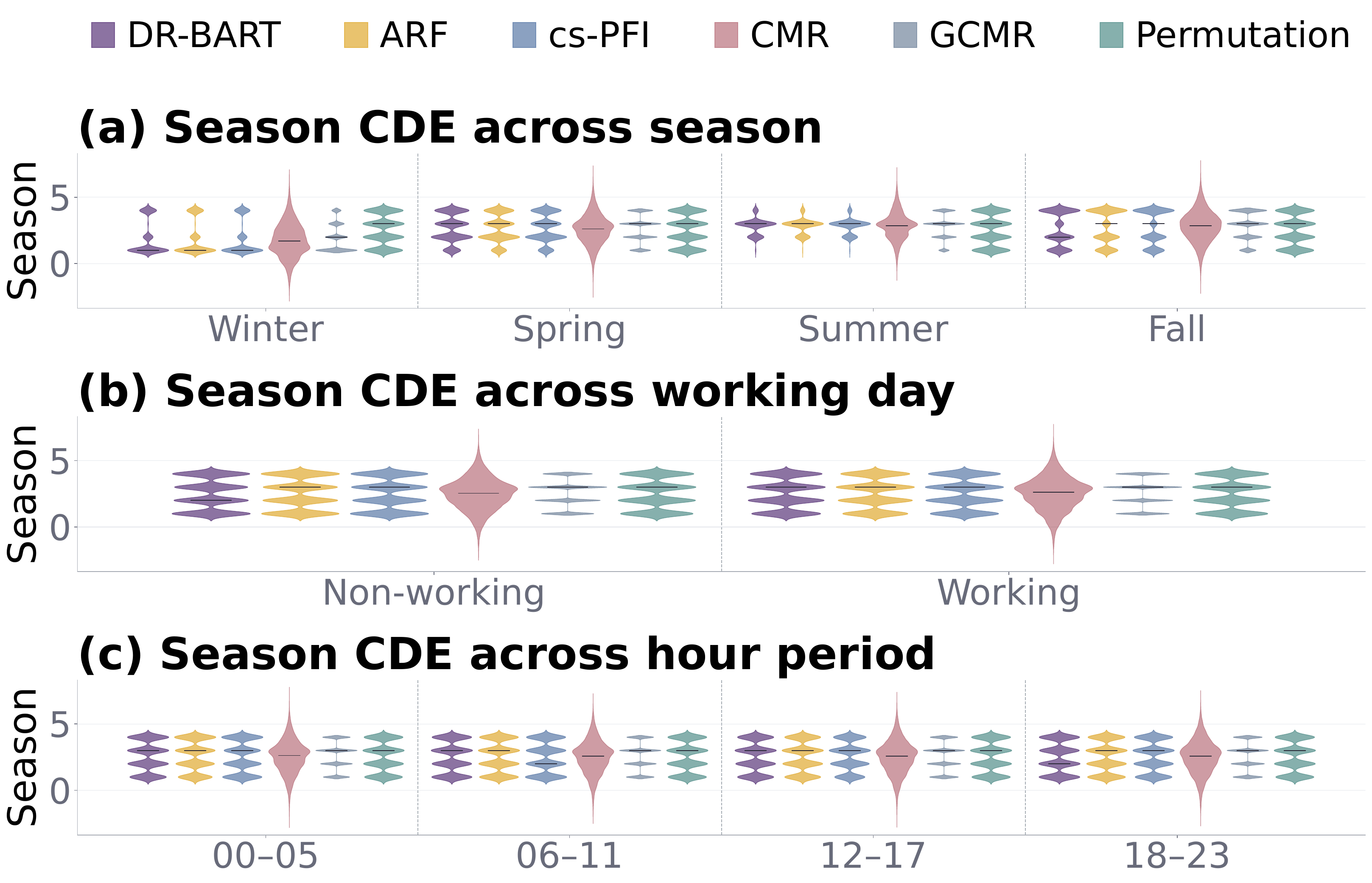}
    \caption{\textbf{Conditional replacement distributions for season.} Replacement distributions across observed season, working-day status, and hour periods. Conditional methods preserve differing degrees of context dependence in the categorical replacement distribution, whereas marginal permutation does not condition on the observed context. CMR again produces values outside the natural categorical support.}
    \label{fig:app_experiments_bike_sharing_cde_season}
\end{figure}
\clearpage
\subsection{Runtime and Memory Scaling on OpenXAI Datasets}
\label{app:openxai_scaling}

To further assess the computational feasibility of our DR-BART-based \LCVI estimator beyond the five-feature Bike Sharing example, we benchmark runtime and peak memory usage on four tabular datasets from the OpenXAI \citep{NEURIPS2022_65398a0e} benchmark. These datasets provide complementary stress tests in both sample size and feature dimension: German contains the largest number of features with $p=60$, while GMSC contains more than $10^5$ observations. Table~\ref{tab:openxai_dataset_summary} summarizes the datasets used in this experiment.

\begin{table}[ht]
\centering
\caption{OpenXAI benchmark datasets used in the scalability experiments.}
\label{tab:openxai_dataset_summary}
\setlength{\tabcolsep}{8pt}
\renewcommand{\arraystretch}{1.05}
\begin{tabular}{lrr}
\toprule
\textbf{Dataset} & \textbf{Samples ($n$)} & \textbf{Features ($p$)} \\
\midrule
Heart  & 3,658   & 15 \\
German & 1,000   & 60 \\
HELOC  & 9,871   & 23 \\
GMSC   & 102,209 & 10 \\
\bottomrule
\end{tabular}
\end{table}

Because \LCVI computation is performed separately for each feature, we report runtime in two complementary ways. First, Table~\ref{tab:openxai_runtime_per_feature} reports the average runtime of one complete feature-specific task, obtained within each fold by dividing the total runtime across all features by the number of features. This normalizes for different feature dimensions and facilitates comparison across datasets. Second, Table~\ref{tab:openxai_runtime_serial} reports the total runtime per fold when all feature-specific jobs are executed serially. Each task includes any method-specific fitting and the computation of that method's importance scores. For DR-BART, this includes conditional-sampler fitting and LUCID computation. All reported values are the mean and standard deviation across five folds. Feature-specific jobs are independent and can be executed concurrently, so the serial totals represent the computational cost in the absence of parallel resources, while actual wall-clock time can be substantially lower when multiple workers are available.

\begin{table}[ht]
\centering
\caption{Average runtime per feature-specific task in \textcolor{red}{seconds} (mean $\pm$ standard deviation across five folds). Each task includes conditional-sampler fitting and \LCVI computation.}
\label{tab:openxai_runtime_per_feature}
\setlength{\tabcolsep}{4pt}
\renewcommand{\arraystretch}{1.03}
\begin{tabular}{lrrrr}
\toprule
\textbf{Method}
& \textbf{Heart}
& \textbf{German}
& \textbf{HELOC}
& \textbf{GMSC} \\
\midrule
Permutation & $8.81\pm1.16$ & $6.47\pm0.64$ & $15.67\pm2.21$ & $87.18\pm21.91$ \\
CMR         & $5.89\pm0.03$ & $5.35\pm0.28$ & $14.99\pm3.37$ & $92.92\pm23.30$ \\
GCMR        & $7.37\pm1.85$ & $6.84\pm0.14$ & $16.33\pm2.77$ & $106.18\pm24.79$ \\
local LOCO        & $5.41\pm0.88$ & $4.97\pm0.15$ & $5.60\pm1.29$ & $4.70\pm0.18$ \\
CLIQUE      & $7.65\pm1.68$ & $5.50\pm0.34$ & $14.61\pm2.87$ & $117.08\pm3.76$ \\
ARF         & $43.84\pm5.88$ & $30.10\pm3.69$ & $287.20\pm25.11$ & $2451.36\pm816.10$ \\
cs-PFI      & $30.61\pm5.61$ & $9.70\pm0.12$ & $224.23\pm10.35$ & $10036.02\pm530.45$ \\
DR-BART     & $184.94\pm21.31$ & $36.96\pm0.29$ & $838.26\pm64.25$ & $9015.14\pm1419.16$ \\
\bottomrule
\end{tabular}
\end{table}

\begin{table}[ht]
\centering
\caption{Serial-equivalent runtime per fold in \textcolor{red}{minutes} (mean $\pm$ standard deviation across five folds), obtained by summing all feature-specific jobs. This represents the runtime required when feature computations are not parallelized.}
\label{tab:openxai_runtime_serial}
\setlength{\tabcolsep}{4pt}
\renewcommand{\arraystretch}{1.03}
\begin{tabular}{lrrrr}
\toprule
\textbf{Method}
& \textbf{Heart}
& \textbf{German}
& \textbf{HELOC}
& \textbf{GMSC} \\
\midrule
Permutation & $2.20\pm0.29$ & $6.47\pm0.64$ & $6.01\pm0.85$ & $14.53\pm3.65$ \\
CMR         & $1.47\pm0.01$ & $5.35\pm0.28$ & $5.75\pm1.29$ & $15.49\pm3.88$ \\
GCMR        & $1.84\pm0.46$ & $6.84\pm0.14$ & $6.26\pm1.06$ & $17.70\pm4.13$ \\
local LOCO        & $1.35\pm0.22$ & $4.97\pm0.15$ & $2.15\pm0.50$ & $0.78\pm0.03$ \\
CLIQUE      & $1.91\pm0.42$ & $5.50\pm0.34$ & $5.60\pm1.10$ & $19.51\pm0.63$ \\
ARF         & $10.96\pm1.47$ & $30.10\pm3.69$ & $110.09\pm9.63$ & $408.56\pm136.02$ \\
cs-PFI      & $7.65\pm1.40$ & $9.70\pm0.12$ & $85.95\pm3.97$ & $1672.67\pm88.41$ \\
DR-BART     & $46.23\pm5.33$ & $36.96\pm0.29$ & $321.33\pm24.63$ & $1502.52\pm236.53$ \\
\bottomrule
\end{tabular}
\end{table}

Table~\ref{tab:openxai_memory} reports peak memory usage. For each fold, we take the largest peak memory among all feature-specific jobs, measured at $0.1$-second intervals. Unlike runtime, peak memory is already a per-job quantity and therefore is not normalized by the number of features.

\begin{table}[ht]
\centering
\caption{Peak memory usage in GiB (mean $\pm$ standard deviation across five folds). For each fold, the maximum peak memory among feature-specific jobs is reported.}
\label{tab:openxai_memory}
\setlength{\tabcolsep}{4pt}
\renewcommand{\arraystretch}{1.03}
\begin{tabular}{lrrrr}
\toprule
\textbf{Method}
& \textbf{Heart}
& \textbf{German}
& \textbf{HELOC}
& \textbf{GMSC} \\
\midrule
Permutation & $0.338\pm0.001$ & $0.329\pm0.000$ & $0.352\pm0.000$ & $0.365\pm0.004$ \\
CMR         & $0.335\pm0.000$ & $0.327\pm0.002$ & $0.354\pm0.003$ & $0.371\pm0.002$ \\
GCMR        & $0.339\pm0.003$ & $0.332\pm0.001$ & $0.358\pm0.001$ & $0.375\pm0.005$ \\
local LOCO        & $0.320\pm0.003$ & $0.320\pm0.002$ & $0.325\pm0.003$ & $0.341\pm0.000$ \\
CLIQUE      & $0.336\pm0.003$ & $0.325\pm0.002$ & $0.351\pm0.004$ & $0.367\pm0.001$ \\
ARF         & $0.996\pm0.006$ & $2.072\pm0.030$ & $1.705\pm0.004$ & $0.985\pm0.010$ \\
cs-PFI      & $0.758\pm0.021$ & $0.562\pm0.003$ & $2.226\pm0.094$ & $40.548\pm0.620$ \\
DR-BART     & $1.245\pm0.001$ & $1.353\pm0.001$ & $0.808\pm0.013$ & $41.854\pm0.031$ \\
\bottomrule
\end{tabular}
\end{table}

As expected, DR-BART is substantially more computationally demanding than simpler replacement schemes because it fits a flexible conditional density model separately for each feature. Nevertheless, the complete \LCVI pipeline remains computationally feasible across all four benchmark datasets. In particular, we successfully run DR-BART on GMSC with $102{,}209$ observations. The average feature-specific task requires approximately $9{,}015$ seconds and the largest feature-specific job uses approximately $41.9$ GiB of memory. If all ten feature-specific jobs are executed serially, one fold requires approximately $1{,}503$ minutes, or about $25$ hours of aggregate computation.

\begin{remark}
Local explanations are often required at prediction time. While training DR-BART itself requires expensive Markov Chain Monte Carlo (MCMC) iterations, the DR-BART model is fixed at prediction time and runtime for explanations depends only on time to sample posterior draws, counterfactuals for each posterior, and fitting/comparing the fixed predictive models. In contrast, other local explanation methods like LIME can require refitting new surrogate models for each new test observation, increasing explanation-inference time.
\end{remark}

\end{document}